\def\undertilde#1{\mathord{\vtop{\ialign{##\crcr
				$\hfil\displaystyle{#1}\hfil$\crcr\noalign{\kern1.5pt\nointerlineskip}
				$\hfil\widetilde{}\hfil$\crcr\noalign{\kern1.5pt}}}}}
\newcommand{\argmin}{\mathop{\mathrm{argmin}}}
\newcommand{\vs}{\boldsymbol{\mathrm{s}}}
\newcommand{\va}{\boldsymbol{\mathrm{a}}}
\newcommand{\vx}{\boldsymbol{\mathrm{x}}}
\newcommand{\vtau}{\boldsymbol{\mathrm{\tau}}}
\newcommand{\da}{\mathrm{d}\va}
\theoremstyle{plain}
\newtheorem{assumption}{Assumption}
\newtheorem{lemma}{Lemma}
\newtheorem{theorem}{Theorem}
\newenvironment{sproof}{%
  \proof}{\endproof}
\newlength\myindent
\begin{document}

%

%

\twocolumn[

\aistatstitle{Robust Imitation Learning from Noisy Demonstrations}

\aistatsauthor{ Voot Tangkaratt \And Nontawat Charoenphakdee \And Masashi Sugiyama }

\aistatsaddress{ RIKEN\\ \texttt{voot.tangkaratt@riken.jp}
\And University of Tokyo \& RIKEN\\ \texttt{nontawat@ms.k.u-tokyo.ac.jp}
\And RIKEN \& University of Tokyo\\ \texttt{sugi@k.u-tokyo.ac.jp} }]


\begin{abstract}
	Robust learning from noisy demonstrations is a practical but highly challenging problem in imitation learning.
	In this paper, we first theoretically show that robust imitation learning can be achieved by optimizing a classification risk with a symmetric loss.
	Based on this theoretical finding, we then propose a new imitation learning method that optimizes the classification risk by effectively combining pseudo-labeling with co-training.
	Unlike existing methods, our method does not require additional labels or strict assumptions about noise distributions.
	Experimental results on continuous-control benchmarks show that our method is more robust compared to state-of-the-art methods. 
\end{abstract}

\section{INTRODUCTION}


The goal of sequential decision making is to learn a good policy that makes good decisions~\citep{Puterman1994}. 
Imitation learning (IL) is an approach that learns a policy from demonstrations (i.e., sequences of demonstrators' decisions)~\citep{Schaal1999}. 
Researchers have shown that a good policy can be learned efficiently from high-quality demonstrations collected from experts~\citep{NgR00,HoE16}.
However, demonstrations in the real-world often have lower quality due to noise or insufficient expertise of demonstrators, especially when humans are involved in the data collection process~\citep{MandlekarZGBSTG18}. 
This is problematic because low-quality demonstrations can reduce the efficiency of IL both in theory and practice~\citep{TangkarattEtAl2020}. 
In this paper, we theoretically and experimentally show that IL can perform well even in the presence of noises.

In the literature, methods for IL from noisy demonstrations have been proposed, but they still have limitations as they require additional labels from experts or a strict assumption about noise~\citep{BrownGNN19,BrownEtAl2020,Wu2019,TangkarattEtAl2020}. 
Specifically, methods of~\cite{BrownGNN19,BrownEtAl2020} require noisy demonstrations to be ranked according to their relative performance. 
Meanwhile, methods of~\cite{Wu2019} require some of noisy demonstrations to be labeled with a score determining the probability that demonstrations are collected from experts.
On the other hand, the method of~\cite{TangkarattEtAl2020} does not require these labels, but instead it assumes that noisy demonstrations are generated by Gaussian noise distributions.
To sum up, these methods require either additional labels from experts or a strict assumption about noise distributions.
Due to this, the practicality of these methods is still limited, and IL from noisy demonstrations is still highly challenging. 

To overcome the above limitation, we propose a new method for IL from noisy demonstrations called \emph{Robust IL with Co-pseudo-labeling} (RIL-Co).
Briefly, we built upon the recent theoretical results of robust classification~\citep{charoenphakdee19a}, and prove that robust IL can be achieved by optimizing a classification risk with a \emph{symmetric loss}. 
However, optimizing the proposed risk is not trivial because it contains a data density whose data samples are not observed. 
We show that pseudo-labeling~\citep{Chapelle2010} can be utilized to estimate the data density. 
However, naive pseudo-labeling may suffer from over-fitting and is not suitable in practice~\citep{Kingma2014}.
To remedy this issue, we propose \emph{co-pseudo-labeling}, which effectively combines pseudo-labeling with co-training~\citep{blum98}.
Compare to prior work, RIL-Co does not require additional labels or assumptions about noise distributions. 
In addition, RIL-Co does not require an additional hyper-parameter tuning because an appropriate hyper-parameter value can be derived from the theory. 
Experiments on continuous-control benchmarks show that RIL-Co is more robust against noisy demonstrations when compared to state-of-the-art methods.

\section{IMITATION LEARNING AND ROBUSTNESS}

In this section, we firstly give backgrounds about reinforcement learning and imitation learning.
Then, we describe the setting of imitation learning from noisy demonstrations. 
Lastly, we discuss the robustness of existing imitation learning methods. 

\subsection{Reinforcement Learning}

Reinforcement learning (RL) aims to learn an optimal policy of a Markov decision process (MDP)~\citep{Puterman1994}.
We consider a discrete-time MDP denoted by $\mathcal{M} = (\mathcal{S}, \mathcal{A}, p_\mathrm{T}(\vs^\prime|\vs,\va)), p_1(\vs_1), r(\vs,\va), \gamma)$ with state $\vs \in \mathcal{S}$, action $\va \in \mathcal{A}$, transition probability density $p_{\mathrm{T}}$, initial state probability density $p_1$, reward function $r$, and discount factor $0 < \gamma \leq 1$. 
A policy function $\pi(\va|\vs)$ determines the conditional probability density of an action in a state.
An agent acts in an MDP by observing a state, choosing an action according to a policy, transiting to a next state according to the transition probability density, and possibly receiving an immediate reward according to the reward function. An optimal policy of an MDP is a policy maximizing the expected cumulative discounted rewards.

Formally, RL seeks for an optimal policy by solving the optimization problem $\max_{\pi} \mathcal{J}(\pi)$, where $ \mathcal{J}(\pi)$ is the expected cumulative discounted rewards defined as
\begin{align}
	\mathcal{J}(\pi) 
	&= \mathbb{E}_{p_{\pi}}\left[ \textstyle{\sum}_{t=1}^T \gamma^{t-1} r(\vs_t,\va_t) \right] \notag \\
	&= \mathbb{E}_{\rho_\pi}\left[ r(\vs_t,\va_t) \right]/(1-\gamma).	\label{eq:rl_obj2}
\end{align}
Here, $p_{\pi}(\tau) = p_1(\vs_1) \Pi_{t=1}^T p_\mathrm{T}(\vs_{t+1}|\vs_t,\va_t) \pi(\va_t|\vs_t) $ is the probability density of trajectory $\vtau = (\vs_1, \va_1, \dots, \vs_{T+1})$ with length $T$, $\rho_{\pi}(\vs,\va) \!=\! (1-\gamma)\mathbb{E}_{p_{\pi}(\tau)}[\Sigma_{t=1}^T \gamma^{t-1} \delta( \vs_t-\vs,\va_t-\va )]$ is the state-action density determining the probability density of the agent with $\pi$ observing $\vs$ and executing $\va$, and $\delta( \vs_t-\vs,\va_t-\va )$ is the Dirac delta function. 
Note that the state-action density is a normalized occupancy measure and uniquely corresponds to a policy by a relation $\pi(\va|\vs) = \rho_{\pi}(\vs,\va) / \rho_{\pi}(\vs)$, where $\rho_{\pi}(\vs) = \int_{\mathcal{A}} \rho_{\pi} (\vs,\va) \da$~\citep{SyedBS08}. 
Also note that the optimal policy is not necessarily unique since there can be different policies that achieve the same expected cumulative discounted rewards. 

While RL has achieved impressive performance in recent years~\citep{SilverEtAl2017}, its major limitation is that it requires a suitable reward function which may be unavailable in practice~\citep{Schaal1999}.

\subsection{Imitation Learning}

Imitation learning (IL) is a well-known approach to learn an optimal policy when the reward function is unavailable~\citep{NgR00}.
Instead of learning from the reward function or reward values, IL methods learn an optimal policy from demonstrations that contain information about an optimal policy. 
IL methods typically assume that demonstrations (i.e., a dataset of state-action samples) are collected by using an expert policy that is similar to an optimal (or near optimal) policy, and they aim to learn the expert policy~\citep{NgR00,ZiebartEtAl2010,Sun2019}. 
More formally, the typical goal of IL is to learn an expert policy $\pi_{\mathrm{E}}$ by using a dataset of state-action samples drawn from an expert state-action density:
\begin{align}
	\{ (\vs_n, \va_n) \}_{n=1}^N \overset{\mathrm{i.i.d.}}{\sim} \rho_{\mathrm{E}}(\vs, \va),
\end{align}
where $\rho_{\mathrm{E}}$ is a state-action density of expert policy $\pi_{\mathrm{E}}$. 

The {density matching} approach was shown to be effective in learning the expert policy from expert demonstrations~\citep{SyedBS08,HoE16,ghasemipour20a}.
Briefly, this approach seeks for a policy $\pi$ that minimizes a divergence between the state-action densities of the expert and learning policies. 
Formally, this approach aims to solve the following optimization problem: 
\begin{align}
\min_{\pi} D(\rho_{\mathrm{E}}(\vs,\va) || \rho_{\pi}(\vs,\va)),
\end{align}
where $D$ is a divergence such as the Jensen-Shannon divergence\footnote{For non-symmetric divergence, an optimization problem $\min_{\pi} D(\rho_{\pi}(\vs,\va) || \rho_{\mathrm{E}}(\vs,\va) )$ can be considered as well~\citep{ghasemipour20a}.}.
In practice, the divergence, which contains unknown state-action densities, is estimated by using demonstrations and trajectories drawn from $\rho_{\mathrm{E}}$ and $\rho_{\pi}$, respectively. 
A well-known density matching method is generative adversarial IL (GAIL)~\citep{HoE16}, which minimizes an estimate of the Jensen-Shannon divergence by solving 
\begin{align}
	\min_{\pi} \max_{g} 
	&~\mathbb{E}_{\rho_{\mathrm{E}}}\left[ \log \left( \frac{1}{1 + \exp(-g(\vs,\va))} \right) \right] \notag \\
	&+ \mathbb{E}_{\rho_\pi}\left[ \log \left( \frac{1}{1 + \exp(g(\vs,\va))} \right)  \right], \label{eq:gail}
\end{align}
where $g:\mathcal{S}\times \mathcal{A} \mapsto \mathbb{R}$ is a discriminator function.

Density matching methods were shown to scale well to high-dimensional problems when combined with deep neural networks~\citep{HoE16,ghasemipour20a}.
However, an issue of this approach is that it is not robust against noisy demonstrations in general, as will be described in Section~\ref{section:gail_robustness}.

\subsection{Learning from Noisy Demonstrations} \label{section:noise_data}

In this paper, we consider a scenario of \emph{IL from noisy demonstrations}, where given demonstrations are a mixture of expert and non-expert demonstrations. 
We assume that we are given a dataset of state-action samples drawn from a noisy state-action density:
\begin{align}
	\mathcal{D} = \{ (\vs_n, \va_n) \}_{n=1}^N \overset{\mathrm{i.i.d.}}{\sim} \rho^\prime(\vs,\va), \label{eq:data_1}
\end{align}
where the noisy state-action density $\rho'$ is a mixture of the expert and non-expert state-action densities:
\begin{align}
	\rho'(\vs,\va) = \alpha \rho_\mathrm{E}(\vs,\va) + (1-\alpha) \rho_{\mathrm{N}}(\vs,\va). \label{eq:data_2}
\end{align}
Here, $0.5 < \alpha < 1$ is an unknown mixing coefficient and $\rho_{\mathrm{N}}$ is the state-action density of a non-expert policy $\pi_{\mathrm{N}}$.
The policy $\pi_{\mathrm{N}}$ is non-expert in the sense that $\mathbb{E}_{\rho_{\mathrm{N}}}[r(\vs,\va)] < \mathbb{E}_{\rho_{\mathrm{E}}}[r(\vs,\va)]$, where $r$ is an unknown reward function of the MDP. 
Our goal is to learn the expert policy using the dataset in Eq.~\eqref{eq:data_1}.

We emphasize that $0.5 < \alpha < 1$ corresponds to an assumption that the majority of demonstrations are obtained by the expert policy. This is a typical assumption when learning from noisy data, i.e., the number of good quality samples should be more than that of low quality samples~\citep{Angluin1988,Nagarajan2013}. 
For notational brevity, we denote a state-action pair by $\vx = (\vs,\va)$, where $\vx \in \mathcal{X}$ and $\mathcal{X} = \mathcal{S} \times \mathcal{A}$. 

\subsection{Robustness of Imitation Learning} \label{section:gail_robustness}

It can be verified that the density matching approach is not robust against noisy demonstrations according to the data generation assumption in Eq.~\eqref{eq:data_2}.
Specifically, given demonstrations drawn from $\rho^\prime$, the density matching approach would solve $\min_{\pi} D(\rho^\prime(\vx) || \rho_{\pi}(\vx))$. 
By assuming that the space of $\pi$ is sufficiently large, the solution of this optimization problem is 
\begin{align} 
	\pi^\star(\va|\vs) &= \pi_{\mathrm{E}}(\va|\vs) \left( \frac{\alpha \rho_{\mathrm{E}}(\vx)}{\alpha\rho_{\mathrm{E}}(\vx) + (1-\alpha)\rho_{\mathrm{N}}(\vx)} \right) \notag \\
	&\phantom{=}+ \pi_{\mathrm{N}}(\va|\vs) \left( \frac{(1-\alpha) \rho_{\mathrm{N}}(\vx)}{\alpha\rho_{\mathrm{E}}(\vx) + (1-\alpha)\rho_{\mathrm{N}}(\vx)} \right),
\end{align} 
which yields $D(\rho^\prime(\vx) || \rho_{\pi^\star}(\vx)) = 0$. 
However, this policy is not equivalent to the expert policy unless $\alpha=1$. 
Therefore, density matching is not robust against noisy demonstrations generated according to Eq.~\eqref{eq:data_2}. 

We note that the data generation assumption in Eq.~\eqref{eq:data_2} has been considered previously by~\cite{Wu2019}.
In this prior work, the authors proposed a robust method that learns the policy by solving 
\begin{align}
\min_{\pi} D(\rho^\prime(\vx) || \alpha \rho_{\pi}(\vx) + (1-\alpha)\rho_{\mathrm{N}}(\vx)).
\end{align}
The authors showed that this optimization problem yields the expert policy under the data generation assumption in Eq.~\eqref{eq:data_2}. 
However, solving this optimization problem requires $\alpha$ and $\rho_{\mathrm{N}}$ which are typically unknown.
To overcome this issue, \cite{Wu2019} proposed to estimate $\alpha$ and $\rho_{\mathrm{N}}$ by using  additional noisy demonstrations that are labeled with a score determining the probability that demonstrations are drawn from $\rho_{\mathrm{E}}$. 
However, this method is not applicable in our setting since labeled demonstrations are not available.

\section{ROBUST IMITATION LEARNING}

In this section, we propose our method for robust IL.
Briefly, in Section~\ref{section:rail}, we propose an IL objective which optimizes a classification risk with a symmetric loss and prove its robustness. 
Then, in Section~\ref{section:pseudo_label}, we propose a new IL method that utilizes co-pseudo-labeling to optimize the classification risk. 
Lastly, we discuss the choice of a hyper-parameter in Section~\ref{section:lambda} and the choice of symmetric losses in Section~\ref{section:loss}.

\subsection{Imitation Learning via Risk Optimization} 
\label{section:rail}

Classification risks are fundamental quantities in classification~\citep{hastie01}. 
We are interest in a \emph{balanced risk} for binary classification where the class prior is balanced~\citep{Brodersen2010}. 
Specifically, we propose to perform IL by solving the following risk optimization problem:
\begin{align} 
	\max_{\pi} \min_{g} \mathcal{R}(g; \rho^\prime, \rho_{\pi}^\lambda, \ell_{\mathrm{sym}}), 
	\label{eq:rail_obj}
\end{align}	
where $\mathcal{R}$ is the balanced risk defined as 
\begin{align*} 
	\mathcal{R}(g; \rho^\prime, \rho_{\pi}^\lambda, \ell) =  \frac{1}{2}\mathbb{E}_{\rho^\prime} \left[  \ell(g(\vx))\right] 
	+ \frac{1}{2}\mathbb{E}_{\rho_{\pi}^\lambda} \left[  \ell(-g(\vx))\right],
\end{align*}	
and $\rho_{\pi}^\lambda$ is a mixture density defined as 
\begin{align}
	\rho_{\pi}^\lambda(\vx) = \lambda {\rho}_{\mathrm{N}}(\vx) + (1-\lambda)\rho_\pi(\vx).
\end{align}
Here, $0 < \lambda < 1$ is a hyper-parameter, $\pi$ is a policy to be learned by maximizing the risk, $g: \mathcal{X} \mapsto \mathbb{R}$ is a classifier to be learned by minimizing the risk, and $\ell_{\mathrm{sym}}: \mathbb{R} \mapsto \mathbb{R}$ is a \emph{symmetric loss} satisfying 
\begin{align}
	\ell_{\mathrm{sym}}(g(\vx)) + \ell_{\mathrm{sym}}(-g(\vx)) = c, 
\end{align}
for all $\vx \in \mathcal{X}$, where $c \in \mathbb{R}$ is a constant. 
Appropriate choices of the hyper-parameter and loss will be discussed in Sections~\ref{section:lambda} and~\ref{section:loss}, respectively.  

We note that the balanced risk assumes that  the positive and negative class priors are equal to $\frac{1}{2}$. 
This assumption typically makes the balanced risk more restrictive than other risks, because a classifier is learned to maximize the balanced accuracy instead of the accuracy~\citep{MenonEtAl2013,LuNMS19}.
However, the balanced risk is not too restrictive for IL, because the goal is to learn the expert policy and the classifier is discarded after learning.
Moreover, existing methods such as GAIL can be viewed as methods that optimize the balanced risk, as will be discussed in Section~\ref{section:lambda}. 

Next, we prove that the optimization in Eq.~\eqref{eq:rail_obj} yields the expert policy under the following assumption. 

\begin{assumption}[Mixture state-action density]
	\label{assumption_1}
	The state-action density of the learning policy $\pi$ is a mixture of the state-action densities of the expert and non-expert policies with a mixing coefficient $0 \leq \kappa(\pi) \leq 1$:
	\begin{align}
		\rho_{\pi}(\vx) = \kappa(\pi) \rho_{\mathrm{E}}(\vx) + (1-\kappa(\pi)) \rho_{\mathrm{N}}(\vx),
		\label{eq:interpolating policy}
	\end{align}
	where $\rho_{\pi}(\vx)$, $\rho_{\mathrm{E}}(\vx)$, and $\rho_{\mathrm{N}}(\vx)$ are the state-action densities of the learning policy, the expert policy, and the non-expert policy, respectively. 
\end{assumption}
This assumption is based on the following observation on a typical optimization procedure of $\pi$: At the start of learning, $\pi$ is randomly initialized and generates data that are similar to those of the non-expert policy. This scenario corresponds to Eq.~\eqref{eq:interpolating policy} with $\kappa(\pi) \approx 0$.
As training progresses, the policy improves and generates data that are a mixture of those from the expert and non-expert policies. This scenario corresponds to Eq.~\eqref{eq:interpolating policy} with $0 < \kappa(\pi) < 1$. 
Indeed, the scenario where the agent successfully learns the expert policy corresponds to Eq.~\eqref{eq:interpolating policy} with $\kappa(\pi) = 1$. 

We note that a policy uniquely corresponding to $\rho_{\pi}$ in Eq.~\eqref{eq:interpolating policy} is  a mixture between $\pi_{\mathrm{E}}$ and $\pi_{\mathrm{N}}$ with a mixture coefficient depending on $\kappa(\pi)$. 
However, we cannot directly evaluate the value of $\kappa(\pi)$. This is because we do not directly optimize the state-action density $\rho_{\pi}$. Instead, we optimize the policy $\pi$ by using an RL method, as will be discussed in Section~\ref{section:pseudo_label}. 


Under Assumption~\ref{assumption_1}, we obtain Lemma~\ref{theorem:risk}.
\begin{lemma}
	\label{theorem:risk}
	Letting $\ell_{\mathrm{sym}}(\cdot)$ be a symmetric loss that satisfies $\ell_{\mathrm{sym}}(g(\vx)) + \ell_{\mathrm{sym}}(-g(\vx)) = c$, $\forall \vx\in\mathcal{X}$ and a constant $c \in \mathbb{R}$, the following equality holds.
	\begin{align}
		\mathcal{R}(g; \rho^\prime, \rho_{\pi}^\lambda, \ell_{\mathrm{sym}}) 
		&= (\alpha - \kappa(\pi)(1-\lambda) ) \mathcal{R}(g ; \rho_{\mathrm{E}}, \rho_{\mathrm{N}}, \ell_{\mathrm{sym}}) \notag \\
		&\phantom{=} + \frac{1-\alpha+\kappa(\pi)(1-\lambda)}{2} c.
	\end{align}
\end{lemma}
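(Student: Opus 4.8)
The plan is a direct computation exploiting linearity of expectation together with the pointwise symmetric-loss identity. First I would rewrite the mixture density $\rho_{\pi}^\lambda$ purely in terms of $\rho_{\mathrm{E}}$ and $\rho_{\mathrm{N}}$. Substituting Assumption~\ref{assumption_1} into $\rho_{\pi}^\lambda(\vx) = \lambda \rho_{\mathrm{N}}(\vx) + (1-\lambda)\rho_\pi(\vx)$ and collecting terms yields
\[
	\rho_{\pi}^\lambda(\vx) = (1-\lambda)\kappa(\pi)\, \rho_{\mathrm{E}}(\vx) + \bigl(1 - (1-\lambda)\kappa(\pi)\bigr)\rho_{\mathrm{N}}(\vx),
\]
so $\rho_{\pi}^\lambda$ is again a convex combination of the expert and non-expert densities. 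For brevity write $\beta := (1-\lambda)\kappa(\pi)$.

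Next I would expand the balanced risk by plugging the mixture forms of $\rho'$ (from Eq.~\eqref{eq:data_2}) and of $\rho_{\pi}^\lambda$ into its definition and using linearity of the integral. This turns $\mathcal{R}(g; \rho', \rho_{\pi}^\lambda, \ell_{\mathrm{sym}})$ into a weighted sum of four basic quantities, namely $\mathbb{E}_{\rho_{\mathrm{E}}}[\ell_{\mathrm{sym}}(g(\vx))]$, $\mathbb{E}_{\rho_{\mathrm{E}}}[\ell_{\mathrm{sym}}(-g(\vx))]$, $\mathbb{E}_{\rho_{\mathrm{N}}}[\ell_{\mathrm{sym}}(g(\vx))]$, and $\mathbb{E}_{\rho_{\mathrm{N}}}[\ell_{\mathrm{sym}}(-g(\vx))]$, carrying coefficients $\tfrac{\alpha}{2}$, $\tfrac{\beta}{2}$, $\tfrac{1-\alpha}{2}$, and $\tfrac{1-\beta}{2}$ respectively.

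The key step is to eliminate two of these four terms using the symmetric-loss condition. Since $\ell_{\mathrm{sym}}(g(\vx)) + \ell_{\mathrm{sym}}(-g(\vx)) = c$ holds pointwise for every $\vx$, integrating against any density gives $\mathbb{E}_{\rho_{\mathrm{E}}}[\ell_{\mathrm{sym}}(-g(\vx))] = c - \mathbb{E}_{\rho_{\mathrm{E}}}[\ell_{\mathrm{sym}}(g(\vx))]$ and $\mathbb{E}_{\rho_{\mathrm{N}}}[\ell_{\mathrm{sym}}(g(\vx))] = c - \mathbb{E}_{\rho_{\mathrm{N}}}[\ell_{\mathrm{sym}}(-g(\vx))]$. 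Substituting these removes the two ``mismatched'' expectations and leaves only $\mathbb{E}_{\rho_{\mathrm{E}}}[\ell_{\mathrm{sym}}(g(\vx))]$ and $\mathbb{E}_{\rho_{\mathrm{N}}}[\ell_{\mathrm{sym}}(-g(\vx))]$ together with a multiple of $c$. Collecting coefficients, both surviving expectations end up carrying $\tfrac{\alpha - \beta}{2}$, while the constant collects to $\tfrac{1-\alpha+\beta}{2}c$. Recognizing that $\tfrac{1}{2}\mathbb{E}_{\rho_{\mathrm{E}}}[\ell_{\mathrm{sym}}(g(\vx))] + \tfrac{1}{2}\mathbb{E}_{\rho_{\mathrm{N}}}[\ell_{\mathrm{sym}}(-g(\vx))]$ is exactly $\mathcal{R}(g; \rho_{\mathrm{E}}, \rho_{\mathrm{N}}, \ell_{\mathrm{sym}})$ yields the claimed identity after restoring $\beta = (1-\lambda)\kappa(\pi)$.

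There is no serious analytic obstacle here; the content is linearity plus symmetry. The only place to be careful is bookkeeping: correctly propagating the three nested mixture coefficients $\alpha$, $\lambda$, and $\kappa(\pi)$ through the two expectation blocks, and noting that the symmetric-loss identity, being pointwise, may be integrated separately against $\rho_{\mathrm{E}}$ and $\rho_{\mathrm{N}}$ rather than only against the combined densities $\rho'$ and $\rho_{\pi}^\lambda$. Keeping the shorthand $\beta$ until the final line avoids sign errors when the coefficients $\tfrac{1-\alpha}{2}$ and $\tfrac{1-\beta}{2}$ combine into $\tfrac{\alpha-\beta}{2}$.
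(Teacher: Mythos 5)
Your proof is correct and takes essentially the same route as the paper's: expand $\rho^\prime$ and $\rho_{\pi}^{\lambda}$ by linearity into the four expert/non-expert expectations, then use the pointwise identity $\ell_{\mathrm{sym}}(g(\vx)) + \ell_{\mathrm{sym}}(-g(\vx)) = c$ to eliminate the two mismatched terms and collect the common coefficient $\alpha - \kappa(\pi)(1-\lambda)$ in front of $\mathcal{R}(g;\rho_{\mathrm{E}},\rho_{\mathrm{N}},\ell_{\mathrm{sym}})$. The only cosmetic difference is that you first collapse $\rho_{\pi}^{\lambda}$ into a two-component mixture and apply the symmetry at the level of expectations, whereas the paper works with the pointwise sum $\delta^{\ell}(\vx) = \ell(g(\vx)) + \ell(-g(\vx))$ for a general loss and substitutes $\delta^{\ell_{\mathrm{sym}}}(\vx) = c$ only at the end; the underlying algebra is identical.
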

The proof is given in Appendix~\ref{appendix:proof}, which follows~\cite{charoenphakdee19a}. 
This lemma indicates that, a minimizer $g^\star$ of $\mathcal{R}(g; \rho^\prime, \rho_{\pi}^\lambda, \ell_{\mathrm{sym}})$ is identical to that of $\mathcal{R}(g ; \rho_{\mathrm{E}}, \rho_{\mathrm{N}}, \ell_{\mathrm{sym}})$: 
\begin{align} 
	g^\star &= \argmin_{g} \mathcal{R}(g; \rho^\prime, \rho_{\pi}^\lambda, \ell_{\mathrm{sym}}) \notag \\
	&= \argmin_{g}\mathcal{R}(g ; \rho_{\mathrm{E}}, \rho_{\mathrm{N}}, \ell_{\mathrm{sym}}), \label{eq:opt_g}
\end{align} 
when $\alpha - \kappa(\pi)(1-\lambda) > 0$.
This result enables us to prove that the maximizer of the  risk optimization in Eq.~\eqref{eq:rail_obj} is the expert policy.
\begin{theorem}
	\label{theorem:optimality}
	Given the optimal classifier $g^\star$ in Eq.~\eqref{eq:opt_g}, the solution of $\max_{\pi} \mathcal{R}(g^\star; \rho^\prime, \rho_{\pi}^\lambda, \ell_{\mathrm{sym}})$ is equivalent to the expert policy.
\end{theorem}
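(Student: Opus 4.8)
The plan is to treat $\mathcal{R}(g^\star; \rho^\prime, \rho_\pi^\lambda, \ell_{\mathrm{sym}})$ as a function of $\pi$ alone, with the classifier $g^\star$ held fixed, and to show that it is maximized exactly when $\rho_\pi = \rho_{\mathrm{E}}$. First I would substitute $g^\star$ into the identity of Lemma~\ref{theorem:risk}. Since $g^\star$ does not depend on $\pi$, the quantity $R^\star := \mathcal{R}(g^\star; \rho_{\mathrm{E}}, \rho_{\mathrm{N}}, \ell_{\mathrm{sym}})$ is a constant, and all $\pi$-dependence enters only through $\kappa(\pi)$. Collecting the terms produced by the Lemma, I would rewrite the objective as an affine function of $\kappa(\pi)$,
\begin{align*}
\mathcal{R}(g^\star; \rho^\prime, \rho_\pi^\lambda, \ell_{\mathrm{sym}}) = \Big[ \alpha R^\star + \tfrac{1-\alpha}{2} c \Big] + (1-\lambda)\Big( \tfrac{c}{2} - R^\star \Big)\kappa(\pi),
\end{align*}
so that over the admissible range $\kappa(\pi) \in [0,1]$ the map $\kappa(\pi) \mapsto \mathcal{R}(g^\star; \rho^\prime, \rho_\pi^\lambda, \ell_{\mathrm{sym}})$ is linear with slope $(1-\lambda)(c/2 - R^\star)$.

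The crux is then to pin down the sign of this slope. Because $1-\lambda > 0$, the sign is that of $c/2 - R^\star$. Since $g^\star$ minimizes $\mathcal{R}(\,\cdot\,; \rho_{\mathrm{E}}, \rho_{\mathrm{N}}, \ell_{\mathrm{sym}})$, we have $R^\star = \min_g \mathcal{R}(g; \rho_{\mathrm{E}}, \rho_{\mathrm{N}}, \ell_{\mathrm{sym}})$, and I would bound this minimum from above by evaluating the risk at the constant classifier $g \equiv 0$. Applying the symmetry condition at $g(\vx)=0$ gives $2\ell_{\mathrm{sym}}(0) = c$, hence $\ell_{\mathrm{sym}}(0) = c/2$ and $\mathcal{R}(0; \rho_{\mathrm{E}}, \rho_{\mathrm{N}}, \ell_{\mathrm{sym}}) = c/2$. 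Therefore $R^\star \le c/2$, so the slope is non-negative. For the strict inequality—which is what makes the expert the \emph{unique} maximizer—I would use that $\pi_{\mathrm{N}}$ is non-expert and hence $\rho_{\mathrm{E}} \ne \rho_{\mathrm{N}}$, together with non-degeneracy of the symmetric loss, so that some non-constant classifier strictly lowers the balanced risk below $c/2$; this yields $R^\star < c/2$ and a strictly positive slope.

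With a strictly positive slope, the affine objective attains its maximum over $\kappa(\pi) \in [0,1]$ at the endpoint $\kappa(\pi) = 1$. By Assumption~\ref{assumption_1}, $\kappa(\pi) = 1$ is equivalent to $\rho_\pi = \rho_{\mathrm{E}}$, and by the occupancy–policy correspondence $\pi(\va|\vs) = \rho_\pi(\vs,\va)/\rho_\pi(\vs)$ this uniquely identifies the maximizing policy as $\pi_{\mathrm{E}}$, completing the argument. I expect the main obstacle to be precisely the sign determination of the second paragraph: establishing $R^\star < c/2$ is what guarantees that maximization drives $\pi$ toward the expert rather than toward the non-expert policy (a negative slope would reverse the conclusion, sending $\kappa(\pi) \to 0$), and it is the one step that genuinely relies on the symmetry of the loss and on $\rho_{\mathrm{E}} \ne \rho_{\mathrm{N}}$ rather than on the purely algebraic identity of Lemma~\ref{theorem:risk}.
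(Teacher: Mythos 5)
Your proposal is correct, and its skeleton---reduce the objective to an affine function of $\kappa(\pi)$, show the slope is strictly positive, conclude that the maximum is at $\kappa(\pi)=1$, i.e., $\rho_\pi = \rho_{\mathrm{E}}$---is the same as the paper's. The differences lie in how the two steps are executed, and they are instructive. First, you obtain the affine form by substituting $g^\star$ into Lemma~\ref{theorem:risk}, whereas the paper re-expands the risk from its definition, discards the two $\pi$-independent terms, and applies Assumption~\ref{assumption_1} to $\mathbb{E}_{\rho_\pi}[\ell_{\mathrm{sym}}(-g^\star(\vx))]$; the two derivations agree, because by the symmetry condition $\mathbb{E}_{\rho_{\mathrm{E}}}[\ell_{\mathrm{sym}}(-g^\star(\vx))] - \mathbb{E}_{\rho_{\mathrm{N}}}[\ell_{\mathrm{sym}}(-g^\star(\vx))] = c - 2R^\star$, so the paper's slope-positivity condition is literally your $R^\star < c/2$. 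Your route is a bit more economical, since it reuses the lemma rather than repeating its algebra. Second---and this is the substantive difference---the paper justifies strict positivity only by asserting that it ``follows'' from $g^\star$ minimizing $\mathcal{R}(\cdot\,; \rho_{\mathrm{E}}, \rho_{\mathrm{N}}, \ell_{\mathrm{sym}})$, with an informal remark about expected losses; your benchmark argument (the constant classifier $g \equiv 0$ has balanced risk exactly $c/2$ by symmetry, and the minimizer must do strictly better) is the honest way to substantiate that assertion. Note that minimality alone gives only $R^\star \le c/2$; the strict inequality genuinely needs what you flag: $\rho_{\mathrm{E}} \neq \rho_{\mathrm{N}}$, a loss not identically equal to $c/2$, and a classifier class rich enough to separate a set where $\rho_{\mathrm{E}} > \rho_{\mathrm{N}}$ (taking $g = t$ on such a set and $g = -t$ elsewhere, with $t$ chosen so that $\ell_{\mathrm{sym}}(t) < c/2$, drives the balanced risk strictly below $c/2$). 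These hypotheses are left implicit in the paper, so your proposal identifies, rather than suffers from, the soft spot in the published argument; to turn your sketch into a complete proof you need only carry out that two-valued-classifier construction explicitly.
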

\begin{sproof}
	It can be shown that the solution of $\max_{\pi} \mathcal{R}(g^\star; \rho^\prime, \rho_{\pi}^\lambda, \ell_{\mathrm{sym}})$ is equivalent to the solution of $\max_{\pi} \kappa(\pi) \Big( \mathbb{E}_{\rho_{\mathrm{E}}}\left[ \ell_{\mathrm{sym}}(-g^\star(\vx)) \right] - \mathbb{E}_{\rho_{\mathrm{N}}}\left[ \ell_{\mathrm{sym}}(-g^\star(\vx)) \right] \Big)$.
	Since $g^\star = \argmin_{g}\mathcal{R}(g ; \rho_{\mathrm{E}}, \rho_{\mathrm{N}}, \ell_{\mathrm{sym}})$, it follows that $\mathbb{E}_{\rho_{\mathrm{E}}}\left[ \ell_{\mathrm{sym}}(-g^\star(\vx)) \right] - \mathbb{E}_{\rho_{\mathrm{N}}}\left[ \ell_{\mathrm{sym}}(-g^\star(\vx)) \right] > 0$. 
	Thus, $\max_{\pi} \kappa(\pi) \Big( \mathbb{E}_{\rho_{\mathrm{E}}}\left[ \ell_{\mathrm{sym}}(-g^\star(\vx)) \right] - \mathbb{E}_{\rho_{\mathrm{N}}}\left[ \ell_{\mathrm{sym}}(-g^\star(\vx)) \right] \Big)$ is solved by increasing $\kappa(\pi)$ to $1$. 
	Because $\kappa(\pi)=1$ if and only if $\rho_\pi(\vx) = \rho_{\mathrm{E}}(\vx)$ under Assumption~\ref{assumption_1}, we conclude that the solution of $\max_{\pi} \mathcal{R}(g^\star; \rho^\prime, \rho_{\pi}^\lambda, \ell_{\mathrm{sym}})$ is equivalent to the expert policy.
\end{sproof}
A detailed proof is given in Appendix~\ref{appendix:proof}.
This result indicates that robust IL can be achieved by optimizing the risk in Eq.~\eqref{eq:rail_obj}. 
In a practice aspect, this is a significant advance compared to the prior work~\citep{Wu2019}, because Theorem~\ref{theorem:optimality} shows that robust IL can be achieved \emph{without} the knowledge of the mixing coefficient $\alpha$ or additional labels to estimate $\alpha$. 
Next, we present a new IL method that empirically solves Eq.~\eqref{eq:rail_obj} by using co-pseudo-labeling. 

\subsection{Co-pseudo-labeling for Risk Optimization}	
\label{section:pseudo_label}

While the risk in Eq.~\eqref{eq:rail_obj} leads to robust IL, we cannot directly optimize this risk in our setting. 
This is because the risk contains an expectation over $\rho_{\mathrm{N}}(\vx)$, but we are given only demonstration samples drawn from $\rho'(\vx)$\footnote{The expectation over $\rho_{\pi}(\vx)$ can be approximated using trajectories independently collected by the policy $\pi$.}. 
We address this issue by using co-pseudo-labeling to approximately draw samples from $\rho_{\mathrm{N}}(\vx)$, as described below.

Recall that the optimal classifier $g^\star(\vx)$ in Eq.~\eqref{eq:opt_g} also minimizes the risk $\mathcal{R}(g ; \rho_{\mathrm{E}}, \rho_{\mathrm{N}}, \ell_{\mathrm{sym}})$. 
Therefore, given a state-action sample $\widetilde{\vx} \in \mathcal{X}$, we can use $g^\star(\widetilde{\vx})$ to predict whether $\tilde{\vx}$ is drawn from $\rho_{\mathrm{E}}$ or $\rho_{\mathrm{N}}$. 
Specifically,  $\tilde{\vx}$ is predicted to be drawn from $\rho_{\mathrm{E}}$ when $g^\star(\tilde{\vx}) \geq 0$, and it is predicted to be drawn from $\rho_{\mathrm{N}}$ when $g^\star(\tilde{\vx}) < 0$. 
Based on this observation, our key idea is to approximate the expectation over $\rho_{\mathrm{N}}$ in Eq.~\eqref{eq:rail_obj} by using samples that are predicted by $g$ to be drawn from $\rho_{\mathrm{N}}$.


To realize this idea, we firstly consider the following {empirical risk} with a semi-supervised learning technique called \emph{pseudo-labeling}~\citep{Chapelle2010}:
\begin{align}
	\widehat{\mathcal{R}}(g) &= \frac{1}{2}\widehat{\mathbb{E}}_{\mathcal{D}}\left[ \ell_{\mathrm{sym}}(g(\vx))\right]  + \frac{\lambda}{2}\widehat{\mathbb{E}}_{\mathcal{P}}\left[ \ell_{\mathrm{sym}}(-g(\vx))\right] \notag \\
	&\phantom{=} + \frac{1-\lambda}{2}\widehat{\mathbb{E}}_{\mathcal{B}}\left[ \ell_{\mathrm{sym}}(-g(\vx))\right]. \label{eq:p_risk} 
\end{align}
Here, $\widehat{\mathbb{E}}[\cdot]$ denotes an empirical expectation (i.e., the sample average), $\mathcal{D}$ is the demonstration dataset in Eq.~\eqref{eq:data_1}, $\mathcal{B}$ is a dataset of trajectories collected by using $\pi$, and $\mathcal{P}$ is a dataset of pseudo-labeled demonstrations obtained by choosing demonstrations in $\mathcal{D}$ with $g(\vx) < 0$, i.e., samples that are predicted to be drawn from $\rho_{\mathrm{N}}$. 
This risk with pseudo-labeling enables us to empirically solve Eq.~\eqref{eq:rail_obj} in our setting.
However, the trained classifier may perform poorly, mainly because samples in $\mathcal{P}$ are labeled by the classifier itself. 
Specifically, the classifier during training may predict the labels of demonstrations incorrectly, i.e., demonstrations drawn from $\rho_{\mathrm{E}}(\vx)$ are incorrectly predicted to be drawn from $\rho_{\mathrm{N}}(\vx)$. This may degrade the performance of the classifier, because using incorrectly-labeled data can reinforce the classifier to be over-confident in its incorrect prediction~\citep{Kingma2014}.


To remedy the over-confidence of the classifier, we propose \emph{co-pseudo-labeling}, which combines the ideas of pseudo-labeling and co-training~\citep{blum98}.
Specifically, we train two classifiers denoted by $g_1$ and $g_2$ by minimizing the following empirical risks:
\begin{align}
	\widehat{\mathcal{R}}_1(g_1) &= \frac{1}{2}\widehat{\mathbb{E}}_{\mathcal{D}_1}\left[ \ell_{\mathrm{sym}}(g_1(\vx))\right]  + \frac{\lambda}{2}\widehat{\mathbb{E}}_{\mathcal{P}_1}\left[ \ell_{\mathrm{sym}}(-g_1(\vx))\right] \notag \\
	&\phantom{=} + \frac{1-\lambda}{2}\widehat{\mathbb{E}}_{\mathcal{B}}\left[ \ell_{\mathrm{sym}}(-g_1(\vx))\right],  \label{eq:r_1} \\
	\widehat{\mathcal{R}}_2(g_2) &= \frac{1}{2}\widehat{\mathbb{E}}_{\mathcal{D}_2}\left[ \ell_{\mathrm{sym}}(g_2(\vx))\right]  + \frac{\lambda}{2}\widehat{\mathbb{E}}_{\mathcal{P}_2}\left[ \ell_{\mathrm{sym}}(-g_2(\vx))\right] \notag \\
	&\phantom{=} + \frac{1-\lambda}{2}\widehat{\mathbb{E}}_{\mathcal{B}}\left[ \ell_{\mathrm{sym}}(-g_2(\vx))\right],  \label{eq:r_2}
\end{align}
where $\mathcal{D}_1$ and $\mathcal{D}_2$ are disjoint subsets of $\mathcal{D}$. 
Pseudo-labeled dataset $\mathcal{P}_1$ is obtained by choosing demonstrations from $\mathcal{D}_2$ with $g_2(\vx) < 0$, and pseudo-labeled dataset $\mathcal{P}_2$ is obtained by choosing demonstrations from $\mathcal{D}_1$ with $g_1(\vx) < 0$.
With these risks, we reduce the influence of over-confident classifiers because $g_1$ is trained using samples pseudo-labeled by $g_2$ and vice-versa~\citep{HanYYNXHTS18}. 
We call our proposed method \emph{Robust IL with Co-pseudo-labeling} (RIL-Co). 

We implement RIL-Co by using a stochastic gradient method to optimize the empirical risk where we alternately optimize the classifiers and policy. 
Recall from Eq.~\eqref{eq:rail_obj} that we aim to maximize $\mathcal{R}(g; \rho^\prime, \rho_{\pi}^\lambda, \ell_{\mathrm{sym}})$ w.r.t.~$\pi$. 
After ignoring terms that are constant w.r.t.~$\pi$, solving this maximization is equivalent to maximizing $\mathbb{E}_{\rho_{\pi}}\left[ \ell_{\mathrm{sym}}(-g(\vx))\right]$ w.r.t.~$\pi$. 
This objective is identical to the RL objective in Eq.~\eqref{eq:rl_obj2} with a reward function $r(\vx) =\ell_{\mathrm{sym}}(-g(\vx))$. Therefore, we can train the policy in RIL-Co by simply using an existing RL method, e.g., the trust-region policy gradient~\citep{Wu2017}. 
We summarize the procedure of RIL-Co in Algorithm~\ref{algo:rail}.
Sourcecode of our implementation is available at \textcolor{Blue}{\nolinkurl{https://github.com/voot-t/ril_co}}.

\begin{algorithm*}[ht!]
	\caption{RIL-Co: Robust Imitation Learning with Co-pseudo-labeling}
	\label{algo:rail}
	\begin{algorithmic}[1]
		\STATE \textbf{Input: } Demonstration dataset $\mathcal{D}$, initial policy $\pi$, and initial classifiers $g_1$ and $g_2$.
		\STATE Set hyper-parameter $\lambda=0.5$ (see Section~\ref{section:lambda}) and batch-sizes ($B=U=V=640$ and $K=128$).
		\STATE Split $\mathcal{D}$ into two disjoint datasets $\mathcal{D}_1$ and $\mathcal{D}_2$.
		\WHILE{ Not converge }
		\WHILE { $| \mathcal{B} | < B$ with batch size $B$ }	
		\STATE Use $\pi$ to collect and include transition samples into $\mathcal{B}$
		\ENDWHILE
		\STATE \textbf{Co-pseudo-labeling:} 
		\begin{ALC@g}
		\STATE Sample $\{ \vx_{u} \}_{u=1}^U$ from $\mathcal{D}_2$, and choose $K$ samples with $g_2(\vx_u) < 0$  in an ascending order as $\mathcal{P}_1$.
		\STATE Sample $\{ \vx_{v} \}_{v=1}^V$ from $\mathcal{D}_1$, and choose $K$ samples with $g_1(\vx_v) < 0$  in an ascending order as $\mathcal{P}_2$.
		\end{ALC@g}
		\STATE \textbf{Train classifiers:} 
		\begin{ALC@g}
		\STATE Train $g_1$ by performing gradient descent to minimize the empirical risk $\widehat{R}_1(g_1)$ using $\mathcal{D}_1$, $\mathcal{P}_1$ and $\mathcal{B}$.
		\STATE Train $g_2$ by performing gradient descent to minimize the empirical risk $\widehat{R}_2(g_2)$ using $\mathcal{D}_2$, $\mathcal{P}_2$ and $\mathcal{B}$.

		\end{ALC@g}
		\STATE \textbf{Train policy:} 
		\begin{ALC@g}
		\STATE Train the policy by an RL method with transition samples in $\mathcal{B}$ and rewards $r(\vx)=\ell_{\mathrm{sym}}(-g_1(\vx))$.

		\end{ALC@g}

		\ENDWHILE 
	\end{algorithmic}
\end{algorithm*}

\begin{figure*}[ht!]
\centering
	\begin{minipage}[c]{0.56\linewidth}
		\centering
		\captionof{table}{Examples of losses and their symmetric property, i.e., whether $\ell(z) + \ell(-z) = c$. We denote normalized counterparts of non-symmetric losses by (N). The AP loss in Eq.~\eqref{eq:apl} is a linear combination of the normalized logistic and sigmoid losses. }
		\label{table:loss}
		\bgroup
		\def\arraystretch{1.4}
		\begin{tabular}{ l || c | c }
			\hline
			Loss name & $\ell(z)$ 						& Symmetric 	 \\ 
			\hline \hline
			Logistic 	& $\log(1+\exp(-z))$  			&  \ding{55}		\\
			Hinge 		& $\max(1-z, 0)$  				&  \ding{55}		\\
			\hline 
			Sigmoid 	&  $\begin{aligned}1/({1+\exp(z)})\end{aligned}$    &  \ding{51}  	     	\\ 	
			Unhinged    &  $1-z$     					&  \ding{51}  \\ 
			Logistic (N) &  
			$\begin{aligned}\frac{\log(1+\exp(-z))}{\Sigma_{k \in \{-1, 1\}} \log(1+\exp(-zk))}	\end{aligned}$			   					&  \ding{51}  		\\ 
			Hinge (N) &  
			$\begin{aligned}\frac{\max(1-z, 0)}{\Sigma_{k \in \{-1, 1\}} \max(1-zk, 0)}	\end{aligned}$			   					&  \ding{51}  		\\ 
		\end{tabular}
		\egroup
	\end{minipage}
\hfill 
\centering
	\begin{minipage}[c]{0.42\linewidth}
		\centering
		\includegraphics[width=0.90\linewidth]{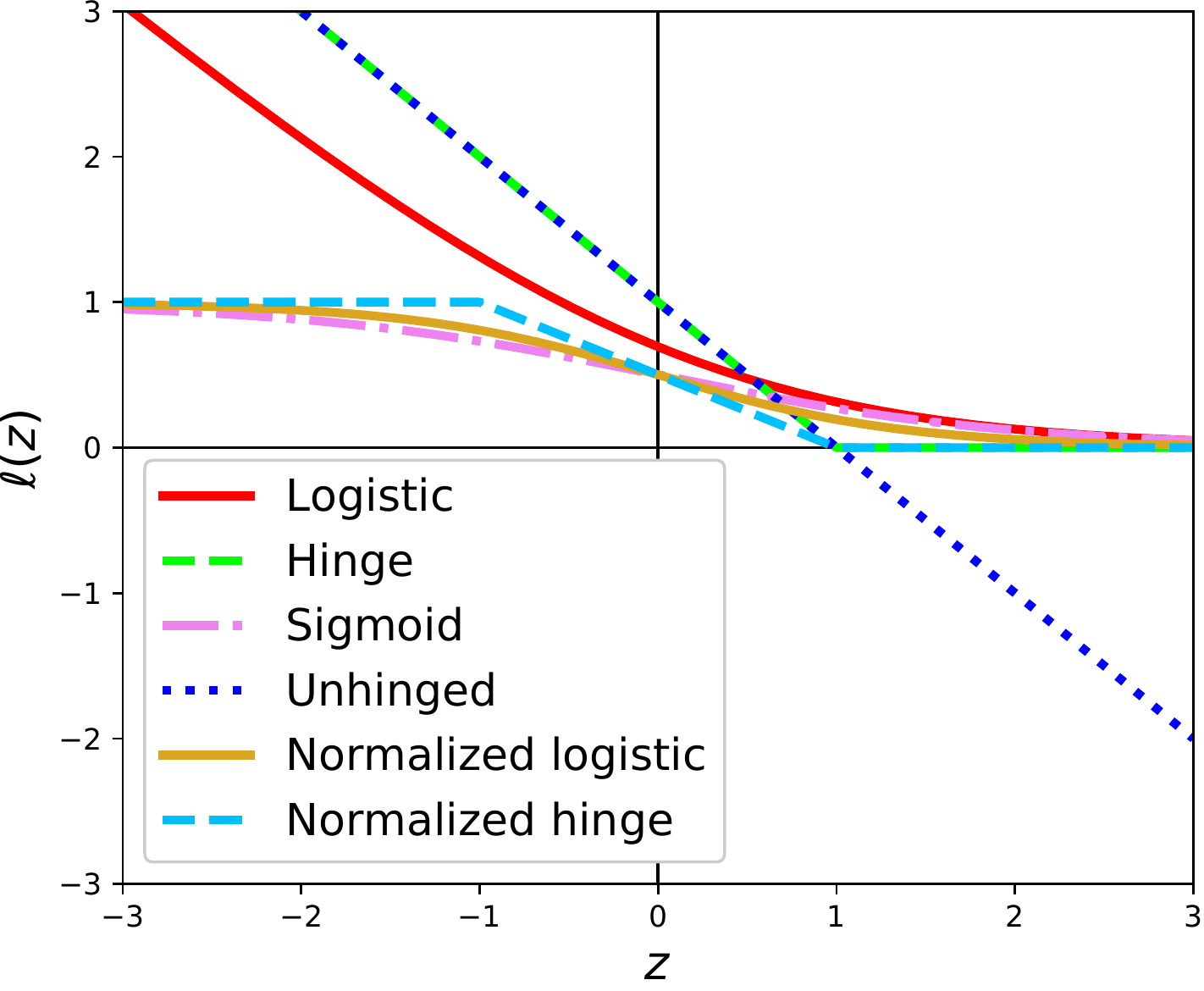} 
		\caption{The value of losses in Table~\ref{table:loss}. Non-symmetric losses, i.e., the logistic and hinge losses, become symmetric after normalization.}
		\label{figure:loss}
	\end{minipage}

	\vspace{-2mm}

\end{figure*}

\subsection{Choice of Hyper-parameter} \label{section:lambda}

We propose to use $\lambda=0.5$ for RIL-Co, because it makes the equality in Eq.~\eqref{eq:opt_g} holds which is essential for our theoretical result in Section~\ref{section:rail}. 
Specifically, recall that Theorem~\ref{theorem:optimality} relies on the equality in Eq.~\eqref{eq:opt_g}. At the same time, the equality in Eq.~\eqref{eq:opt_g} holds if the the following inequality holds:
\begin{align}
\alpha - \kappa(\pi)(1-\lambda) > 0. \label{eq:inequality}
\end{align} 
This inequality depends on $\alpha$, $\kappa(\pi)$, and $\lambda$, where $\alpha$ is unknown, $\kappa(\pi)$ depends on the policy, and $\lambda$ is the hyper-parameter. 
However, we cannot choose nor evaluate $\kappa(\pi)$ since we do not directly optimize $\kappa(\pi)$ during policy training. 
Thus, we need to choose an appropriate value of $\lambda$ so that the inequality in Eq.~\eqref{eq:inequality} always holds.
Recall that we assumed $0.5 < \alpha < 1$ in Section~\ref{section:noise_data}.
Under this assumption, the inequality in Eq.~\eqref{eq:inequality} holds regardless of the true value of $\alpha$ when $\kappa(\pi)(1-\lambda) \leq 0.5$. 
Since the value of $\kappa(\pi)$ increases to 1 as the policy improves by training (see Assumption~\ref{assumption_1}), the appropriate value of $\lambda$ is $0.5 \leq \lambda < 1$.

However, a large value of $\lambda$ may not be preferable in practice since it increases the influence of pseudo-labels on the risks (i.e., the second term in Eqs.~\eqref{eq:r_1} and~\eqref{eq:r_2}). These pseudo-labels should not have larger influence than real labels (i.e., the third term in Eqs.~\eqref{eq:r_1} and~\eqref{eq:r_2}). For this reason, we decided to use $\lambda = 0.5$, which is the smallest value of $\lambda$ that ensures the inequality in Eq.~\eqref{eq:inequality} to be always held during training. 
Nonetheless, we note that while RIL-Co with $\lambda=0.5$ already yields good performance in the following experiments, the performance may still be improved by fine-tuning $\lambda$ using e.g., grid-search. 

\textbf{Remarks. }
Choosing $\lambda=0$ corresponds to omitting co-pseudo-labeling, and doing so reduces RIL-Co to variants of GAIL which are not robust. 
Concretely, the risk optimization problem of RIL-Co with $\lambda=0$ is $\max_{\pi} \min_{g} \mathcal{R}(g; \rho^\prime, \rho_{\pi}, \ell_{\mathrm{sym}})$.
By using the logistic loss: $\ell(z) = \log(1+\exp(-z))$, instead of a symmetric loss, we obtain the following risk:
\begin{align}
2\mathcal{R}(g; \rho^\prime, \rho_{\pi}, \ell) 
&= \mathbb{E}_{\rho^\prime}[ \log ( 1 + \exp(-g(\vx)) ) ] \notag \\
&\phantom{=}+ \mathbb{E}_{\rho_\pi}[ \log ( 1 + \exp(g(\vx)) ) ],
\end{align}
which is the negative of GAIL's objective in Eq.~\eqref{eq:gail}\footnote{Here, $\rho^\prime$ replaces $\rho_\mathrm{E}$. The sign flips since RIL-Co and GAIL solve max-min and min-max problems, respectively.}. 
Meanwhile, we may obtain other variants of GAIL by using summetric losses such as the sigmoid loss and the unhinged loss~\citep{Rooyen2015}. 
In particular, with the unhinged loss: $\ell(z) = 1-z$, the risk becomes the negative of Wasserstein GAIL's objective with an additive constant~\citep{LiSE17,Xiao2019}:
\begin{align}
	2\mathcal{R}(g; \rho^\prime, \rho_{\pi}, \ell) 
	&\!=\! \mathbb{E}_{\rho^\prime}[ -g(\vx) ] + \mathbb{E}_{\rho_\pi}[ g(\vx) ] + \frac{1}{2}.
\end{align}
However, even when $\ell(z)$ is symmetric, we conjecture that such variants of GAIL are not robust, because $\lambda=0$ does not make the inequality in Eq.~\eqref{eq:inequality} holds when $\kappa(\pi) > 0.5$. 

\subsection{Choice of Symmetric Loss}	\label{section:loss}

In our implementation of RIL-Co, we use the active-passive loss (AP loss)~\citep{ma2020normalized} defined as
\begin{align}
	\ell_{\mathrm{AP}}(z) 
	&= \frac{ 0.5 \times \log (1+\exp(-z)) } {\log (1+\exp(-z)) + \log (1+\exp(z))} \notag \\
	&\phantom{=}  + \frac{0.5}{1 + \exp(z)}, \label{eq:apl}
\end{align}
which satisfies $\ell_{\mathrm{AP}}(z) + \ell_{\mathrm{AP}}(-z) = 1$.
This loss is a linear combination of two symmetric losses: the normalized logistic loss (the first term) and the sigmoid loss (the second term). It was shown that this loss suffers less from the issue of under-fitting when compared to each of the normalized logistic loss or the sigmoid loss~\citep{ma2020normalized}.
However, we emphasize that any symmetric loss can be used to learn the expert policy with RIL-Co, as indicated by our theoretical result in Section~\ref{section:rail}. 
In addition, any loss can be made symmetric by using normalization~\citep{ma2020normalized}. 
Therefore, the requirement of symmetric losses is not a severe limitation. Table~\ref{table:loss} and Figure~\ref{figure:loss} show examples of non-symmetric and symmetric losses.

\vspace{-1mm}

\section{EXPERIMENTS}

\vspace{-1mm}

We evaluate the robustness of RIL-Co on continuous-control benchmarks simulated by PyBullet simulator (HalfCheetah, Hopper, Walker2d, and Ant)~\citep{coumans2019}. 
These tasks are equipped with the true reward functions that we use for the evaluation purpose. 
The learning is conducted using the true states (e.g., joint positions) and not the visual observations. 
We report the mean and standard error of the performance (cumulative true rewards) over 5 trials. 

\begin{figure*}[ht!]
	
	\centering
	\includegraphics[width=0.95\linewidth]{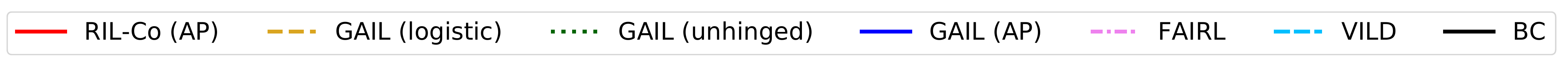}

	\vspace{0.5mm}

	\includegraphics[width=0.24\linewidth]{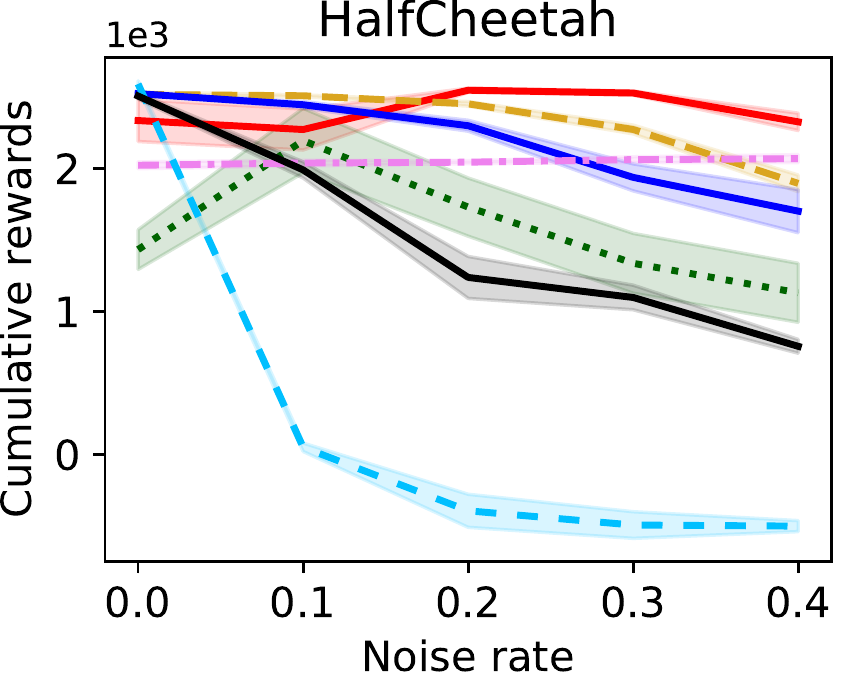} \hfill
	\includegraphics[width=0.25\linewidth]{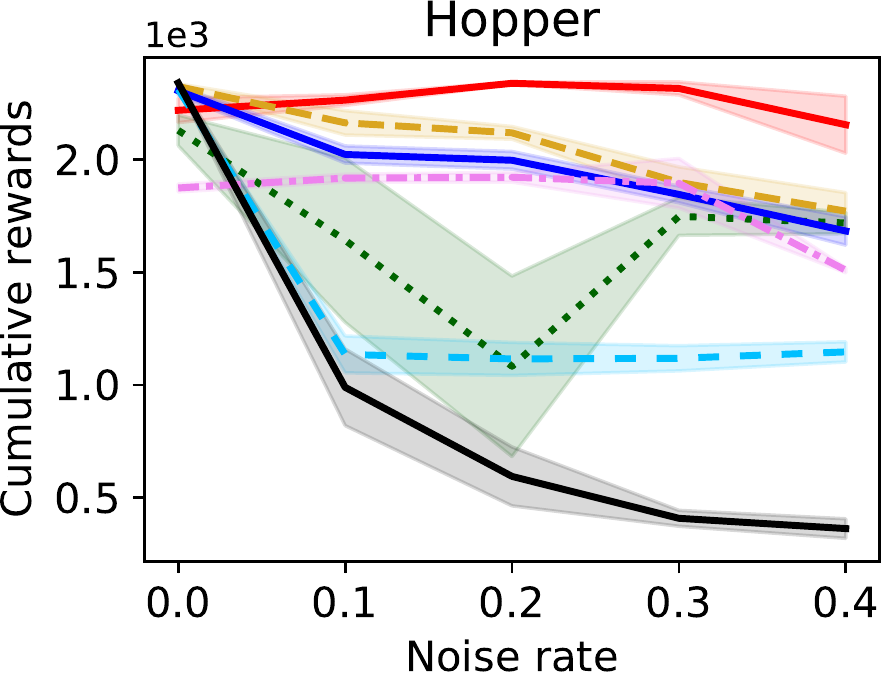} \hfill
	\includegraphics[width=0.25\linewidth]{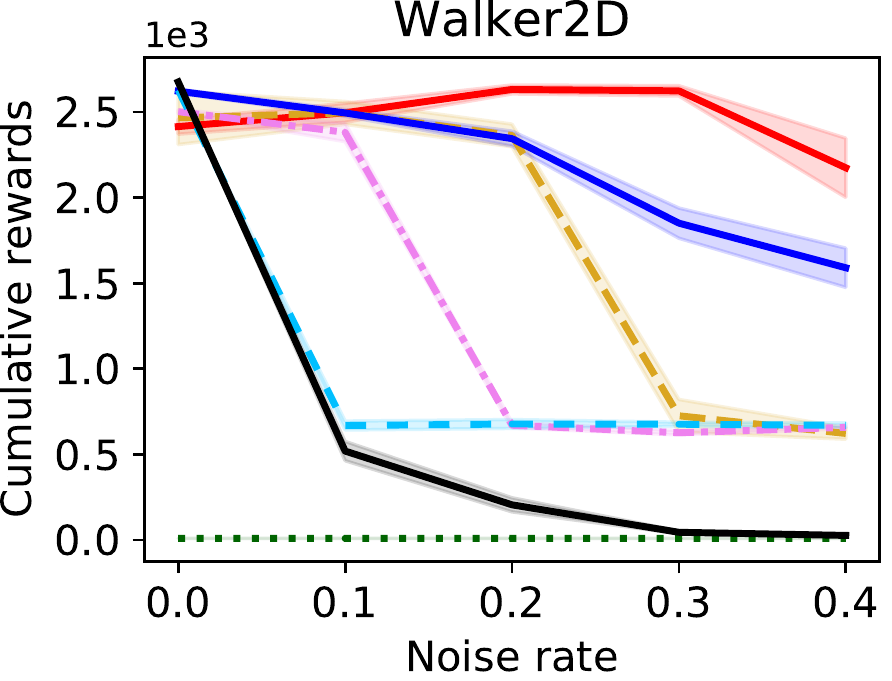} \hfill
	\includegraphics[width=0.24\linewidth]{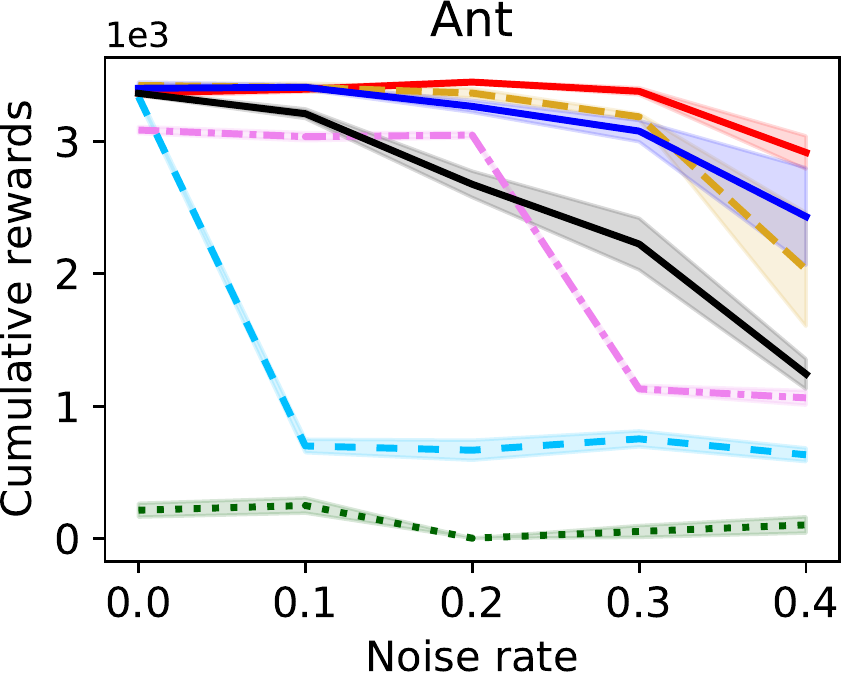} \hfill

	\vspace{-5mm}
	\caption{Final performance in continuous-control benchmarks with different noise rates. Vertical axes denote cumulative rewards obtained during the last 1000 training iterations. Shaded regions denote standard errors computed over 5 runs. RIL-Co  performs well even when the noise rate increases. Meanwhile, the performance of other methods significantly degrades as the noise rate increases. 
	}	
	\label{figure:exp_main}

	\vspace{1mm}

	\includegraphics[width=0.24\linewidth]{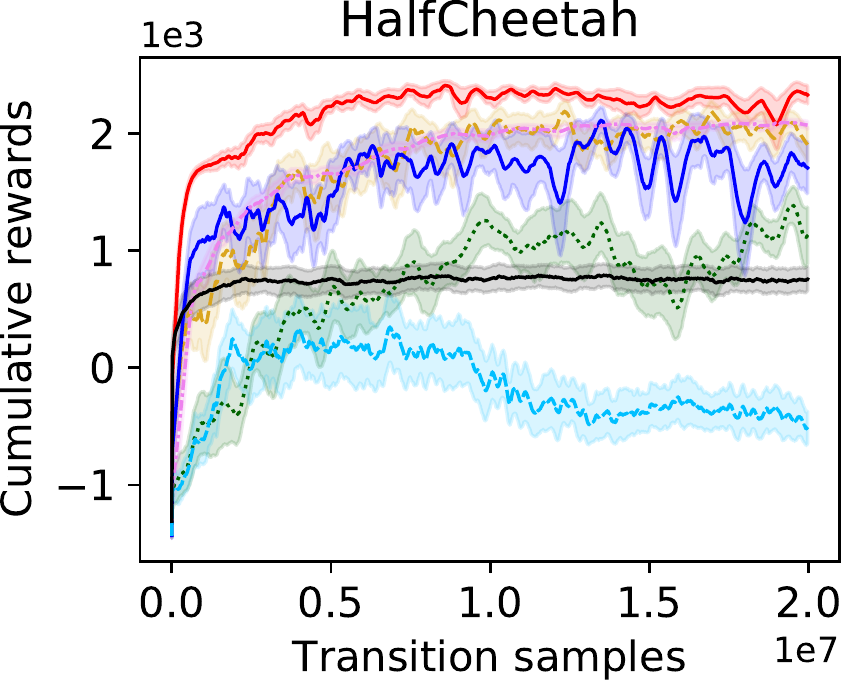} \hfill
	\includegraphics[width=0.25\linewidth]{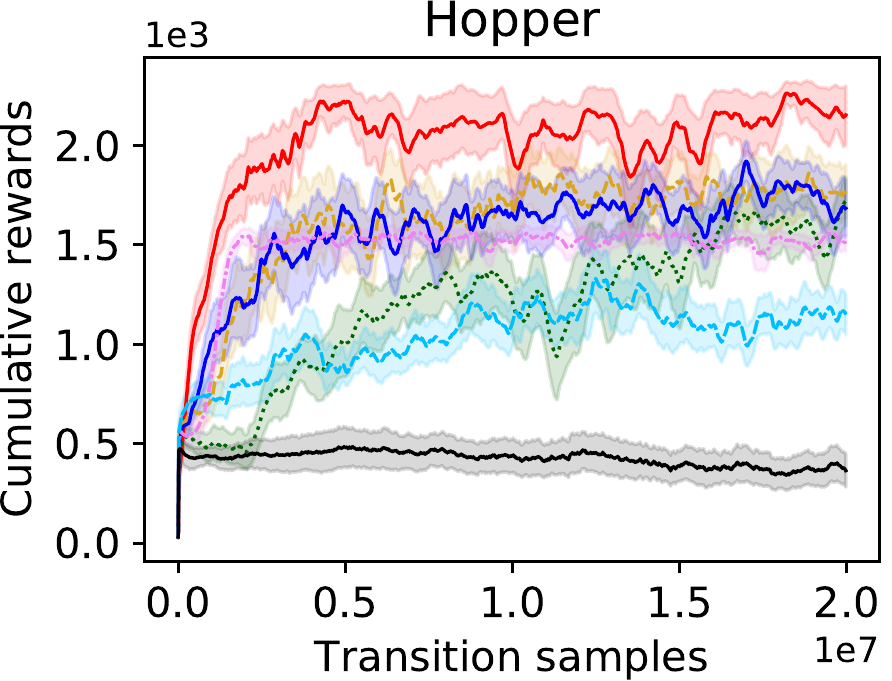} \hfill
	\includegraphics[width=0.245\linewidth]{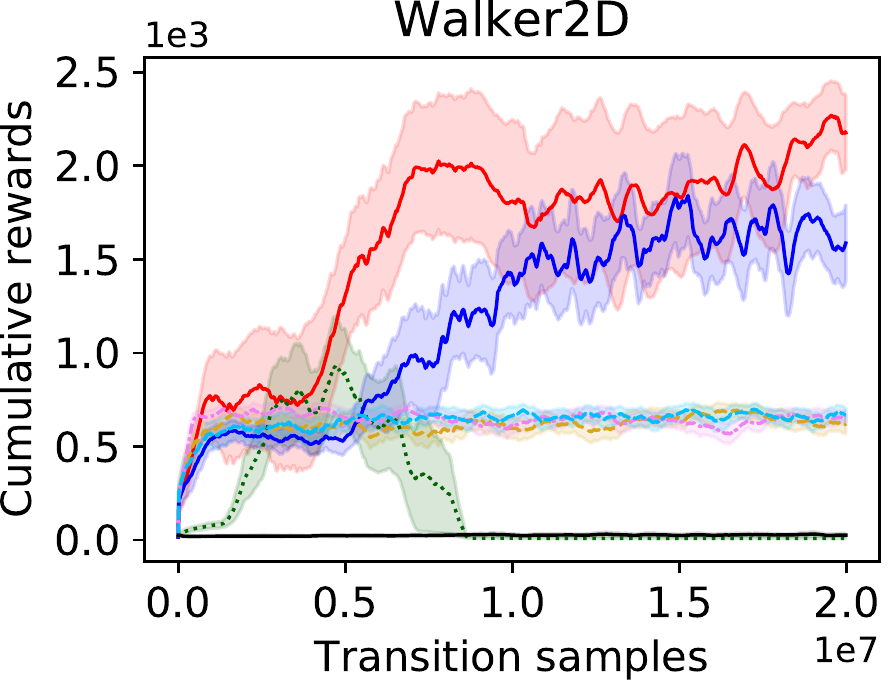} \hfill
	\includegraphics[width=0.245\linewidth]{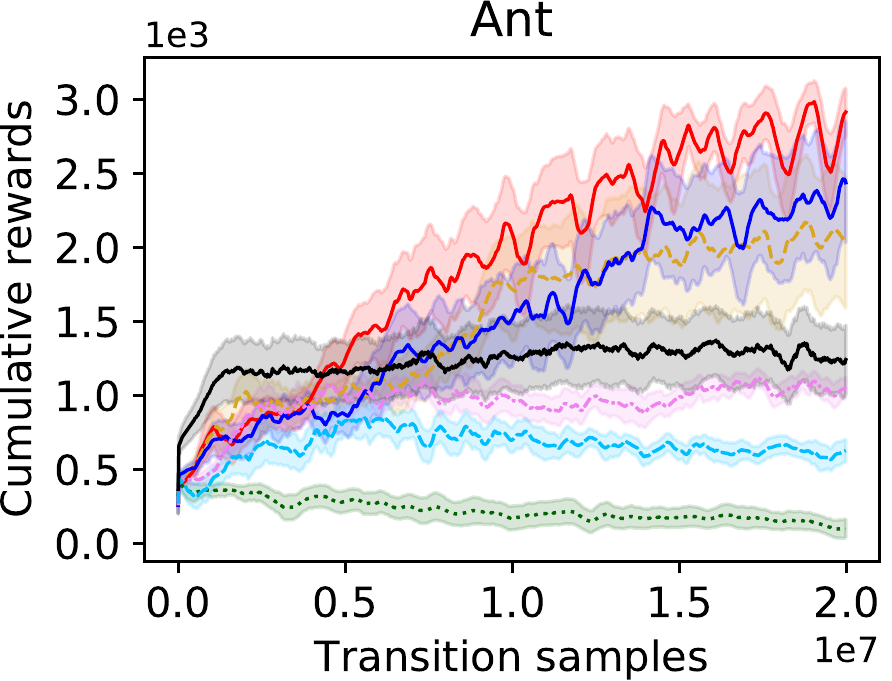} \hfill

	\vspace{-5mm}
	\caption{Performance against the number of transition samples in continuous-control benchmarks with noise rate $\delta = 0.4$. RIL-Co achieves better performances and uses less transition samples compared to other methods. 
	}	
	\label{figure:exp_main2}

	\vspace{-3mm}

\end{figure*}

We compare RIL-Co with the AP loss against the following baselines: BC~\citep{Pomerleau88}, FAIRL~\citep{ghasemipour20a}, VILD~\citep{TangkarattEtAl2020}, and three variants of GAIL where each variant uses different losses: logistic, unhinged, and AP. 
As discussed in the remarks of Section~\ref{section:lambda}, GAIL with the logistic loss denotes the original GAIL that performs density matching with the Jensen-Shannon divergence, GAIL with the unhinged loss denotes a variant of GAIL that performs density matching with the Wasserstein distance, and GAIL with the AP loss corresponds to RIL-Co without co-pseudo-labeling. 


All methods use policy networks with 2 hidden-layers of 64 hyperbolic tangent units. We use similar networks with 100 hyperbolic tangent units for classifiers in RIL-Co and discriminators in other methods. 
The policy networks are trained by the trust region policy gradient~\citep{Wu2017} from a public implementation~\citep{pytorchrl}. The classifiers and discriminators are trained by Adam~\citep{KingmaBa2014} with the gradient penalty regularizer with the regularization parameter of 10~\citep{GulrajaniAADC17}. The total number of transition samples collected by the learning policy is 20 million.
More details of experimental setting can be found in Appendix~\ref{appendix:exp_settings}.

\vspace{-2mm}

\subsection{Evaluation on Noisy Datasets with Different Noise Rates}
\label{section:exp_main}

\vspace{-1mm}

In this experiment, we evaluate RIL-Co on noisy datasets generated with different noise rates. 
To obtain datasets, we firstly train policies by RL with the true reward functions. Next, we choose 6 policy snapshots where each snapshot is trained using different numbers of transition samples. Then, we use the best performing policy snapshot (in terms of cumulative rewards) to collect $10000$ expert state-action samples, and use the other 5 policy snapshots to collect a total of $10000$ non-expert state-action samples.
Lastly, we generate datasets with different noise rates by mixing expert and non-expert state-action samples, where noise rate $\delta \in \{ 0, 0.1, 0.2, 0.3, 0.4\}$ approximately determines the number of randomly chosen non-expert state-action samples. 
Specifically, a dataset consisting of $10000$ expert samples corresponds to a dataset with $\delta=0$ (i.e., no noise), whereas a dataset consisting of $10000$ expert samples and $7500$ randomly chosen non-expert samples corresponds to a dataset with $\delta=0.4$ approximately\footnote{The true noise rates of these datasets are as follows: \\$\tilde{\delta} \in \{0, 1000/11000, 2500/12500, 5000/15000, 7500/17500\}$.}. 
We note that the value of $\delta$ approximately equals to the value of $1-\alpha$ in Eq.~\eqref{eq:data_2}. 


Figure~\ref{figure:exp_main} shows the final performance achieved by each method. 
We can see that RIL-Co outperforms comparison methods and achieves the best performance in high noise scenarios where $\delta \in \{ 0.2, 0.3, 0.4\}$. 
Meanwhile, in low noise scenarios where $\delta \in \{ 0.0, 0.1\}$, RIL-Co performs comparable to the best performing methods such as GAIL with the logistic and AP losses. 
Overall, the results show that RIL-Co achieves good performance in the presence of noises, while the other methods fail to learn and their performance degrades as the noise rate increases.

\begin{figure*}[ht!]
	\includegraphics[width=0.19\linewidth]{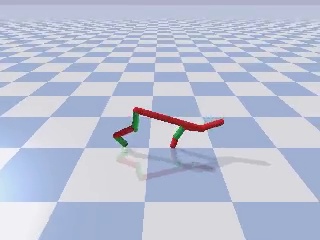} \hfill
	\includegraphics[width=0.19\linewidth]{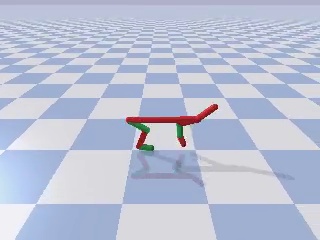} \hfill
	\includegraphics[width=0.19\linewidth]{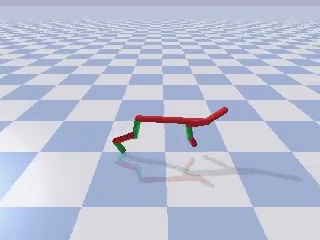} \hfill
	\includegraphics[width=0.19\linewidth]{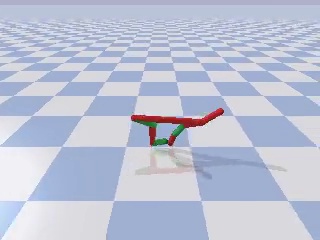} \hfill
	\includegraphics[width=0.19\linewidth]{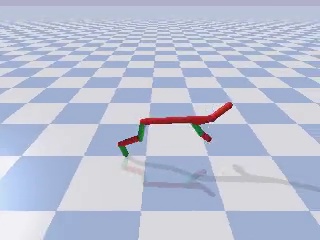} \hfill

	\includegraphics[width=0.19\linewidth]{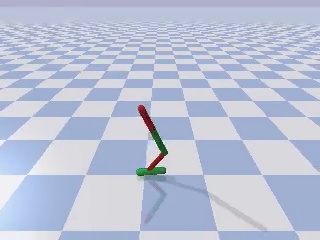} \hfill
	\includegraphics[width=0.19\linewidth]{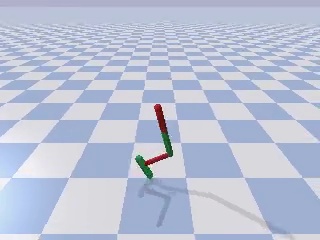} \hfill
	\includegraphics[width=0.19\linewidth]{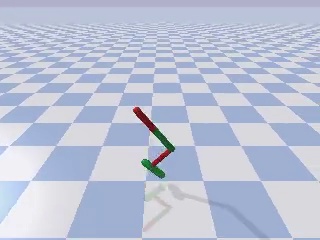} \hfill
	\includegraphics[width=0.19\linewidth]{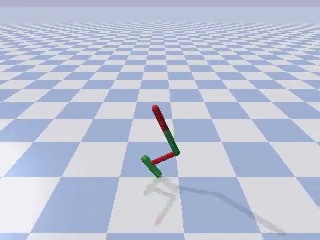} \hfill
	\includegraphics[width=0.19\linewidth]{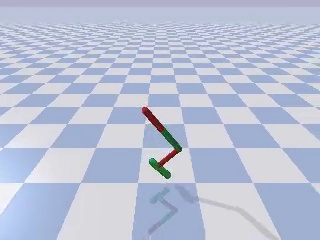} \hfill

	\includegraphics[width=0.19\linewidth]{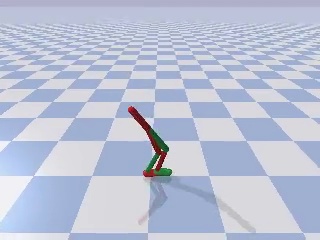} \hfill
	\includegraphics[width=0.19\linewidth]{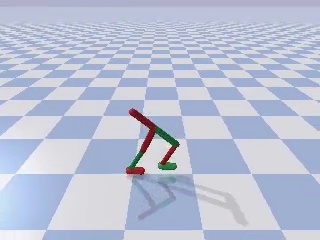} \hfill
	\includegraphics[width=0.19\linewidth]{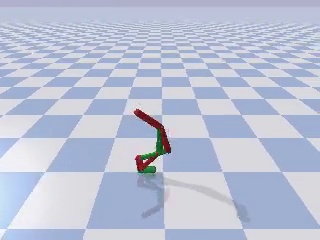} \hfill
	\includegraphics[width=0.19\linewidth]{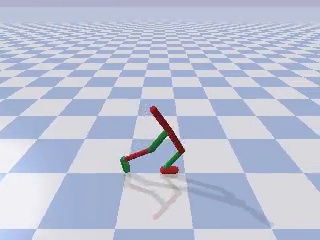} \hfill
	\includegraphics[width=0.19\linewidth]{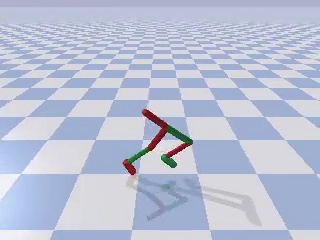} \hfill

	\includegraphics[width=0.19\linewidth]{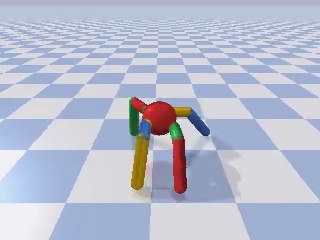} \hfill
	\includegraphics[width=0.19\linewidth]{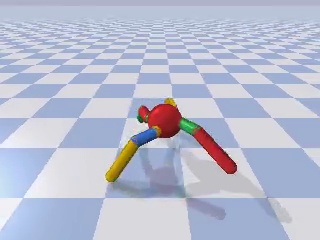} \hfill
	\includegraphics[width=0.19\linewidth]{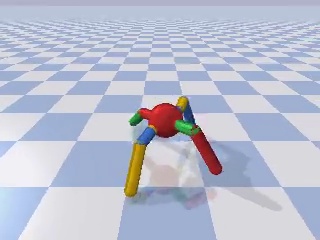} \hfill
	\includegraphics[width=0.19\linewidth]{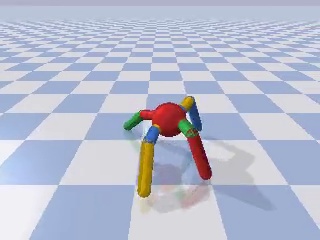} \hfill
	\includegraphics[width=0.19\linewidth]{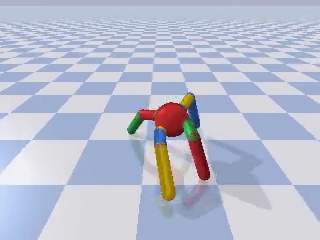} \hfill

	\caption{Visualization of the first 100 time steps of trajectories obtained by RIL-Co from datasets with noise rate $\delta=0.4$. Time step increases from the leftmost figure ($t=0)$ to the rightmost figure ($t=100)$. 
	RIL-Co agents successfully solve these tasks. The obtained trajectories also closely resemble expert demonstrations. 
	}	
	\label{figure:snapshot_rilco}

	\vspace{-1mm}

\end{figure*}

In contrast, density matching methods, namely FAIRL and GAIL with the logistic and unhinged losses, do not perform well. 
This is as expected, because these methods would learn a policy that is a mixture of the expert and non-expert policies and would not perform well. 
Also notice that GAIL with the unhinged loss performs very poorly on the Walker2D and Ant tasks even with noiseless demonstrations where $\delta=0$.
This is an intriguing result given that the unhinged loss is also symmetric similarly to the AP loss. 
We conjecture that the poor performance is due to the unboundedness from below of the unhinged loss (see Figure~\ref{figure:loss}). 
This unboundedness may lead to a poorly behaved classifier that outputs values with very large magnitudes, as suggested by~\cite{charoenphakdee19a}. 
With such a classifier, we expect that GAIL with the unhinged loss would require a strong regularization to perform well, especially for complex control tasks. 

In addition, we can see that RIL-Co is more robust when compared to GAIL with the AP loss. This result supports our theorem which indicates that a symmetric loss alone is insufficient for robustness. 
Interestingly, with expert demonstrations (i.e., $\delta=0$), GAIL tends to outperform RIL-Co. 
This is perhaps because co-pseudo-labeling introduces additional biases. This could be avoided by initially using $\lambda=0$ (i.e., performing GAIL) and gradually increasing the value to $\lambda=0.5$ as learning progresses.

On the other hand, VILD performs poorly with noisy datasets even with a small noise rate of $\delta=0.1$. 
We conjecture that this is because VILD could not accurately estimate the noise distributions due to the violation of its Gaussian assumption.
Specifically, VILD assumes that noisy demonstrations are generated by adding Gaussian noise to actions drawn from the expert policy, and that expert demonstrations consist of low-variance actions. 
However, noisy demonstrations in this experiment are generated by using policy snapshots without adding any noise.
In this case, non-expert demonstrations may consist of low-variance actions (e.g., non-expert policy may yield a constant action). 
Due to this, VILD cannot accurately estimate the noise distributions and performs poorly. 
Meanwhile, behavior cloning (BC) does not perform well. This is because BC assumes that demonstrations are generated by experts. It also suffers from the issue of compounding error which worsens the performance.

\begin{figure*}[ht!]
	
	\centering
	\includegraphics[width=0.70\linewidth]{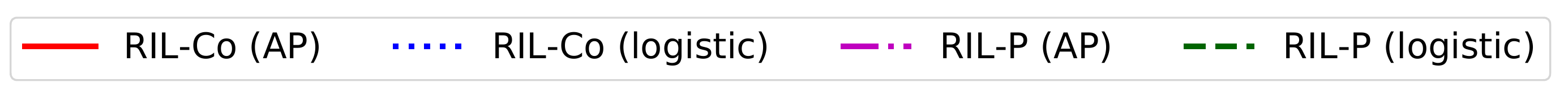}

	\vspace{1mm}

	\includegraphics[width=0.245\linewidth]{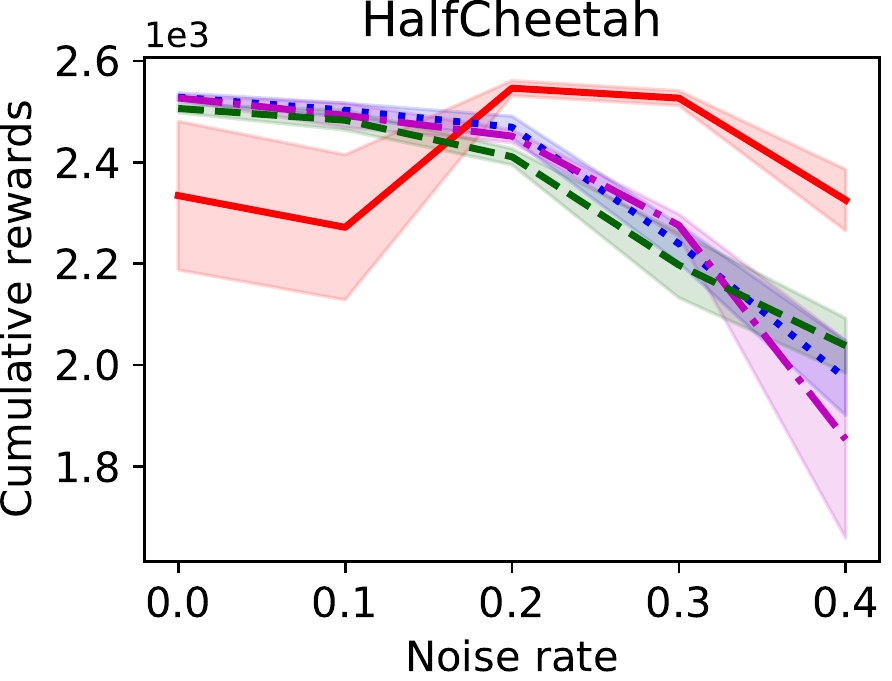} \hfill
	\includegraphics[width=0.245\linewidth]{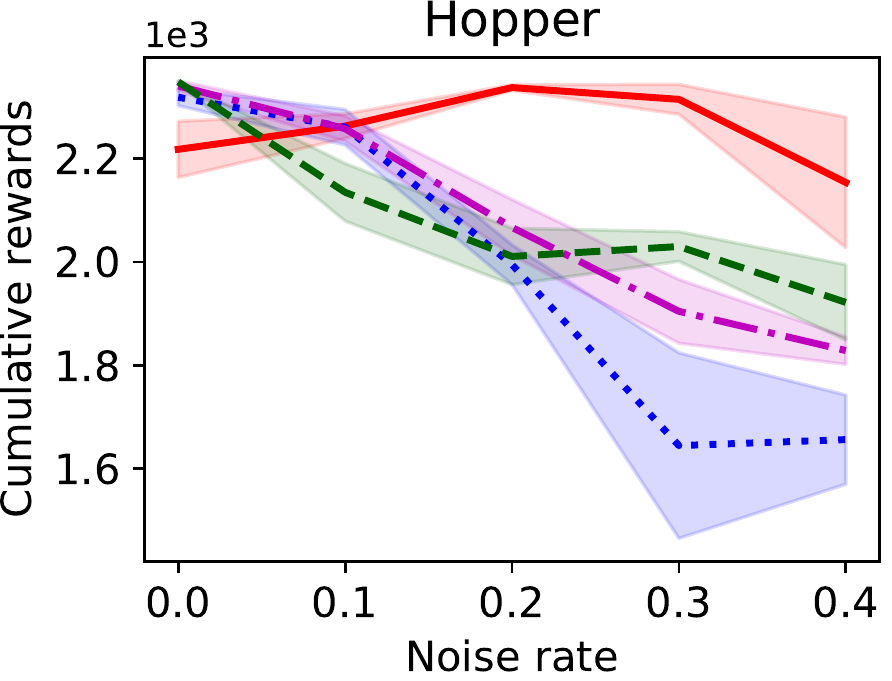} \hfill
	\includegraphics[width=0.245\linewidth]{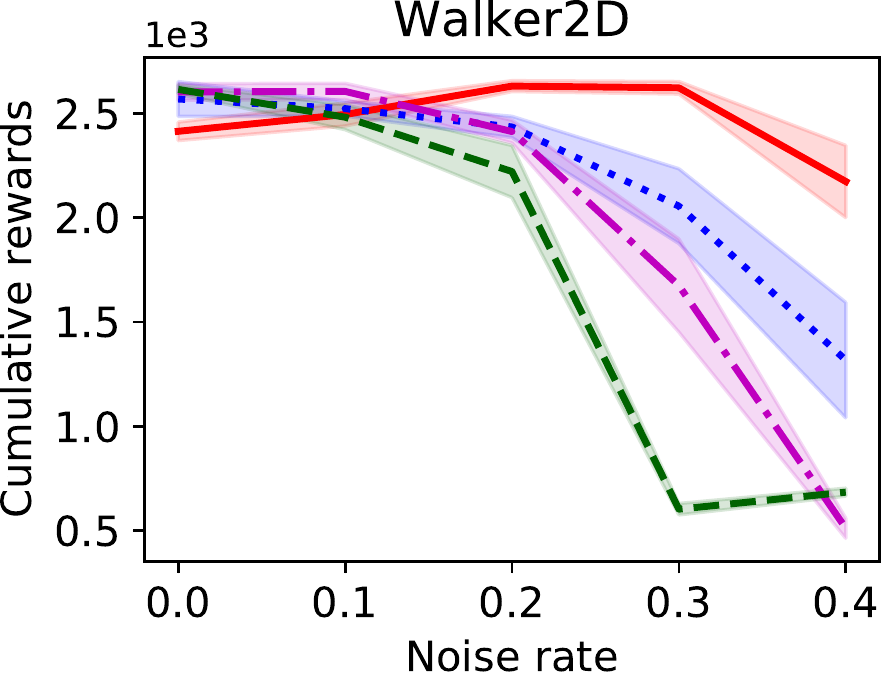} \hfill
	\includegraphics[width=0.245\linewidth]{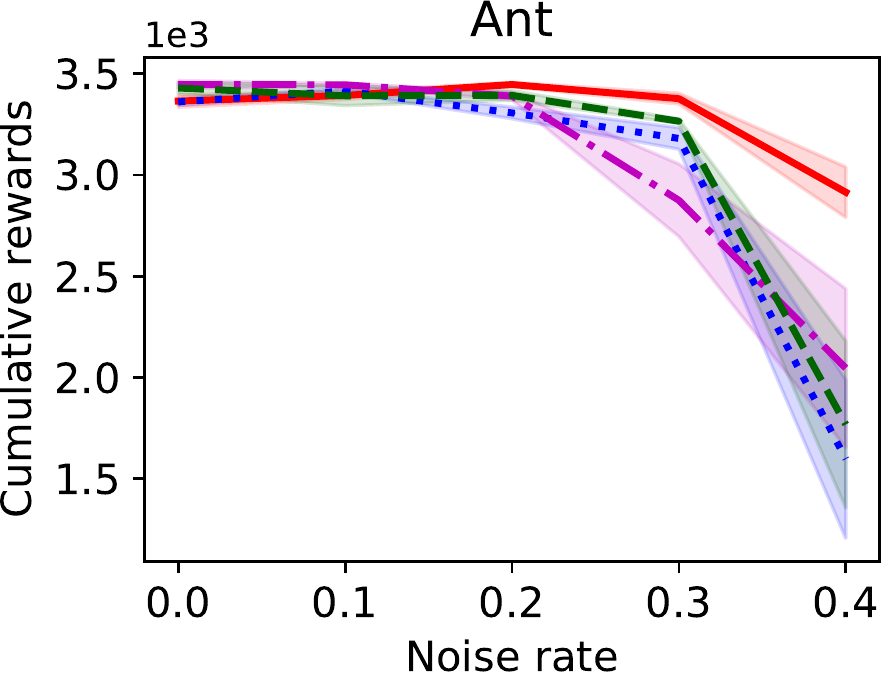} \hfill

	\vspace{-5mm}
	\caption{Final performance of variants of RIL-Co in the ablation study. RIL-Co with the AP loss performs the best. This shows that using a symmetric loss and co-pseudo-labeling together is important for robustness. 
	}	
	\label{figure:exp_ablation_noise}

	\vspace{-2mm}

\end{figure*}
	
Next, Figure~\ref{figure:exp_main2} shows the performance against the number of transition samples collected by the learning policy for $\delta=0.4$. 
RIL-Co achieves better performances and uses less transition samples when compared to other methods.
This result indicates that RIL-CO is data efficient. 
Results for different noise rates can be found in Appendix~\ref{appendix:add_results}, and they show a similar tendency.

Lastly, Figure~\ref{figure:snapshot_rilco} depicts visualization of trajectories obtained by the policy of RIL-Co. 
Indeed, RIL-Co agents successfully solve the tasks. The trajectories also closely resemble expert demonstrations shown in Appendix~\ref{appendix:exp_settings}. 
This qualitative result further verifies that RIL-Co successfully learns the expert policy. 

\subsection{Ablation Study}
\label{section:exp_ablation}

In this section, we conduct ablation study by evaluating different variants of RIL-Co.
Specifically, we evaluate RIL-Co with the logistic loss to investigate the importance of symmetric loss.
In addition, to investigate the importance of co-pseudo-labeling, we also evaluate Robust IL with Pseudo-labeling (RIL-P) which uses naive pseudo-labeling in Eq.~\eqref{eq:p_risk} instead of co-pseudo-labeling.
Experiments are conducted using the same datasets in the previous section. 

The result in Figure~\ref{figure:exp_ablation_noise} shows that RIL-Co with the AP loss outperforms the variants.  
This result further indicates that using a symmetric loss and co-pseudo-labeling together is important for robustness.

\subsection{Evaluation on Gaussian Noise Dataset}
\label{section:exp_gaussian}

\begin{figure}[t] 
	\centering
	\includegraphics[width=0.80\linewidth]{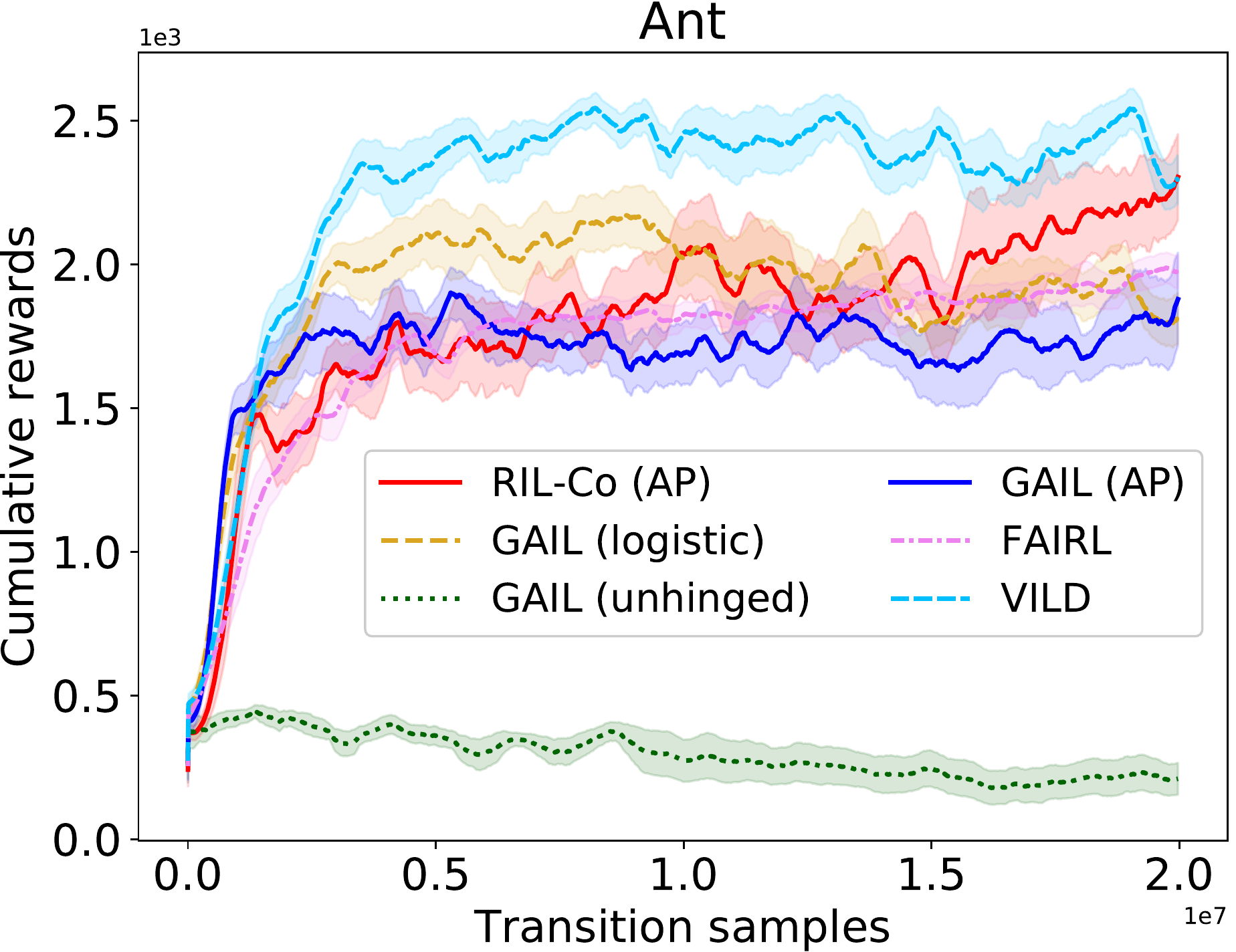} 
	\caption{Performance with a Gaussian noise dataset. RIL-Co performs better than others except VILD.}
	\label{figure:exp_gaussian}

	\vspace{-2mm}

\end{figure}

Next, we evaluate RIL-Co with the AP loss in the Ant task with a noisy dataset generated by Gaussian noise.
Specifically, we use a dataset with 10000 expert and 7500 non-expert state-action samples, where non-expert samples are obtained by adding Gaussian noise to action samples drawn from the expert policy. 
This dataset is generated according to the main assumption of VILD~\citep{TangkarattEtAl2020}, and we expect VILD to perform well in this experiment.

Figure~\ref{figure:exp_gaussian} depicts the performance against the number of transition samples.
It can be seen that VILD performs much better in this setting compared to the previous setting in Figure~\ref{figure:exp_main}. 
This is as expected, since the Gaussian assumption of VILD is correct in this setting but incorrect in the previous setting.
Still, RIL-Co achieves a performance comparable to that of VILD with 20 million samples, even though RIL-Co relies on a milder data generation assumption (see Section~\ref{section:noise_data}). 
Meanwhile, the other methods do not perform as well as RIL-Co and VILD.

Overall, the empirical results in our experiments indicate that RIL-Co is more robust against noisy demonstrations when compared to existing methods.

\section{CONCLUSIONS}

\vspace{-1mm}

We presented a new method for IL from noisy demonstrations. 
We proved that robust IL can be achieved by optimizing a classification risk with a symmetric loss, and we proposed RIL-Co which optimizes the risk by using co-pseudo-labeling. 
We showed through experiments that RIL-Co is more robust against noisy demonstrations when compared to existing methods. 

In this paper, we utilized co-pseudo-labeling to approximate data from non-expert densities. 
However, data from non-expert densities may be readily available~\citep{GrollmanB11}. 
Utilizing such data for robust IL is an interesting future direction.

\section*{Acknowledgement}
We thank the reviewers for their useful comments. 
NC was supported by MEXT scholarship and Google PhD Fellowship program.
MS was supported by KAKENHI 17H00757.

\bibliography{my_lib.bib}






\clearpage
\onecolumn

\appendix
\section{PROOFS} 
\label{appendix:proof}

\setcounter{assumption}{0}
\setcounter{lemma}{0}
\setcounter{theorem}{0}

\subsection{Proof of Lemma 1}
We firstly restate the main assumption of our theoretical result:
\begin{assumption}[Mixture state-action density]
	The state-action density of the learning policy $\pi$ is a mixture of the state-action densities of the expert and non-expert policies with a mixing coefficient $0 \leq \kappa(\pi) \leq 1$:
	\begin{align}
		\rho_{\pi}(\vx) = \kappa(\pi) \rho_{\mathrm{E}}(\vx) + (1-\kappa(\pi)) \rho_{\mathrm{N}}(\vx),
	\end{align}
	where $\rho_{\pi}(\vx)$, $\rho_{\mathrm{E}}(\vx)$, and $\rho_{\mathrm{N}}(\vx)$ are the state-action densities of the learning policy, the expert policy, and the non-expert policy, respectively. 
\end{assumption}
Under this assumption and the assumption that $\rho'(\vs,\va) = \alpha \rho_\mathrm{E}(\vs,\va) + (1-\alpha) \rho_{\mathrm{N}}(\vs,\va)$, we obtain Lemma~1.
\begin{lemma}
	Letting $\ell_{\mathrm{sym}}(\cdot)$ be a symmetric loss that satisfies $\ell_{\mathrm{sym}}(g(\vx)) + \ell_{\mathrm{sym}}(-g(\vx)) = c$, $\forall \vx\in\mathcal{X}$ and a constant $c \in \mathbb{R}$, the following equality holds.
	\begin{align}
		\mathcal{R}(g; \rho^\prime, \rho_{\pi}^\lambda, \ell_{\mathrm{sym}}) 
		&= (\alpha - \kappa(\pi)(1-\lambda) ) \mathcal{R}(g ; \rho_{\mathrm{E}}, \rho_{\mathrm{N}}, \ell_{\mathrm{sym}}) + \frac{1-\alpha+\kappa(\pi)(1-\lambda)}{2} c. \label{eq:app_1}
	\end{align}
\end{lemma}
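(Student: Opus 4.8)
The plan is to reduce $\mathcal{R}(g; \rho^\prime, \rho_{\pi}^\lambda, \ell_{\mathrm{sym}})$ to the target risk $\mathcal{R}(g; \rho_{\mathrm{E}}, \rho_{\mathrm{N}}, \ell_{\mathrm{sym}})$ by first rewriting the mixture density $\rho_{\pi}^\lambda$ via Assumption~1, then repeatedly invoking the symmetric identity $\ell_{\mathrm{sym}}(g(\vx)) + \ell_{\mathrm{sym}}(-g(\vx)) = c$ to eliminate the ``mismatched'' loss terms.

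First I would substitute the mixture form of $\rho_{\pi}$ from Assumption~1 into the definition $\rho_{\pi}^\lambda(\vx) = \lambda \rho_{\mathrm{N}}(\vx) + (1-\lambda)\rho_\pi(\vx)$, obtaining $\rho_{\pi}^\lambda(\vx) = (1-\lambda)\kappa(\pi)\rho_{\mathrm{E}}(\vx) + \bigl(1 - (1-\lambda)\kappa(\pi)\bigr)\rho_{\mathrm{N}}(\vx)$, which is itself a mixture of the expert and non-expert densities. Combined with the assumed form $\rho^\prime(\vx) = \alpha\rho_{\mathrm{E}}(\vx) + (1-\alpha)\rho_{\mathrm{N}}(\vx)$, this lets me expand the balanced risk as a weighted sum of four expectations, namely $\mathbb{E}_{\rho_{\mathrm{E}}}[\ell_{\mathrm{sym}}(g(\vx))]$, $\mathbb{E}_{\rho_{\mathrm{N}}}[\ell_{\mathrm{sym}}(g(\vx))]$, $\mathbb{E}_{\rho_{\mathrm{E}}}[\ell_{\mathrm{sym}}(-g(\vx))]$, and $\mathbb{E}_{\rho_{\mathrm{N}}}[\ell_{\mathrm{sym}}(-g(\vx))]$, with coefficients written explicitly in terms of $\alpha$, $\lambda$, and $\kappa(\pi)$.

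Next I would group the four terms by density. Since the target risk involves only $\mathbb{E}_{\rho_{\mathrm{E}}}[\ell_{\mathrm{sym}}(g(\vx))]$ and $\mathbb{E}_{\rho_{\mathrm{N}}}[\ell_{\mathrm{sym}}(-g(\vx))]$, the remaining two terms must be converted. The key step is to apply the symmetric identity pointwise, i.e. $\ell_{\mathrm{sym}}(-g(\vx)) = c - \ell_{\mathrm{sym}}(g(\vx))$ and $\ell_{\mathrm{sym}}(g(\vx)) = c - \ell_{\mathrm{sym}}(-g(\vx))$. Taking expectations and using that each density integrates to one, so that $\mathbb{E}_{\rho}[c]=c$, gives $\mathbb{E}_{\rho_{\mathrm{E}}}[\ell_{\mathrm{sym}}(-g(\vx))] = c - \mathbb{E}_{\rho_{\mathrm{E}}}[\ell_{\mathrm{sym}}(g(\vx))]$ and $\mathbb{E}_{\rho_{\mathrm{N}}}[\ell_{\mathrm{sym}}(g(\vx))] = c - \mathbb{E}_{\rho_{\mathrm{N}}}[\ell_{\mathrm{sym}}(-g(\vx))]$. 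Substituting these rewrites the whole expression so that only $\mathbb{E}_{\rho_{\mathrm{E}}}[\ell_{\mathrm{sym}}(g(\vx))]$, $\mathbb{E}_{\rho_{\mathrm{N}}}[\ell_{\mathrm{sym}}(-g(\vx))]$, and constant multiples of $c$ survive.

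Finally I would collect coefficients. The factor multiplying $\mathbb{E}_{\rho_{\mathrm{E}}}[\ell_{\mathrm{sym}}(g(\vx))]$ should simplify to $\tfrac{1}{2}(\alpha - \kappa(\pi)(1-\lambda))$, and the \emph{same} factor should emerge in front of $\mathbb{E}_{\rho_{\mathrm{N}}}[\ell_{\mathrm{sym}}(-g(\vx))]$; factoring out $(\alpha - \kappa(\pi)(1-\lambda))$ then reassembles exactly $\mathcal{R}(g; \rho_{\mathrm{E}}, \rho_{\mathrm{N}}, \ell_{\mathrm{sym}})$, while the leftover constants combine to $\tfrac{1}{2}(1 - \alpha + \kappa(\pi)(1-\lambda))c$. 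The main obstacle is purely bookkeeping: checking that the coefficients of the two surviving expectations genuinely coincide so they can be factored together, and that the constant pieces add up correctly. The symmetric-loss identity is precisely what makes this cancellation clean, since without it the two mismatched expectation terms would not collapse into constants and the reduction to the target risk would fail.
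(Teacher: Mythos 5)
Your proposal is correct and follows essentially the same route as the paper's proof: expand both $\rho^\prime$ and $\rho_{\pi}^\lambda$ into mixtures of $\rho_{\mathrm{E}}$ and $\rho_{\mathrm{N}}$ via Assumption~1, then use the symmetric identity $\ell_{\mathrm{sym}}(g(\vx)) + \ell_{\mathrm{sym}}(-g(\vx)) = c$ to convert the two mismatched expectations into constants, and collect the common coefficient $(\alpha - \kappa(\pi)(1-\lambda))$ in front of the two surviving terms. The only (cosmetic) difference is that the paper keeps the quantity $\delta^{\ell}(\vx) = \ell(g(\vx)) + \ell(-g(\vx))$ general throughout the algebra and only sets it equal to $c$ at the very end, whereas you invoke symmetry from the start; the bookkeeping and the final cancellation are identical.
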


\begin{proof}
Firstly, we define $\tilde{\kappa}(\pi,\lambda) = \kappa(\pi)(1-\lambda)$ and $\delta^{\ell}(\vx) = \ell(g(\vx)) + \ell(-g(\vx))$.
Then, we substitute $\rho^\prime(\vx) := \alpha \rho_{\mathrm{E}}(\vx) + (1-\alpha) \rho_{\mathrm{N}}(\vx)$ and $\rho_{\pi}(\vx) = \kappa(\pi) \rho_{\mathrm{E}}(\vx) + (1-\kappa(\pi)) \rho_{\mathrm{N}}(\vx)$ into the risk $\mathcal{R}(g; \rho^\prime, \rho_{\pi}^\lambda, \ell)$.
\begin{align}
	2\mathcal{R}(g; \rho^\prime, \rho_{\pi}^\lambda, \ell) 
	&= \mathbb{E}_{\rho^\prime}\left[ \ell(g(\vx))\right] 
	+ \mathbb{E}_{\rho_{\pi}^{\lambda}}\left[ \ell(-g(\vx))\right] \notag \\
	&= \mathbb{E}_{\rho^\prime}\left[ \ell(g(\vx))\right] 
	+ (1-\lambda)\mathbb{E}_{\rho_\pi}\left[ \ell(-g(\vx))\right] 
	+ \lambda\mathbb{E}_{{\rho}_{\mathrm{N}}}\left[ \ell(-g(\vx))\right] \notag \\
	&= \alpha\mathbb{E}_{\rho_{\mathrm{E}}}\left[ \ell(g(\vx))\right] 
	+ (1-\alpha)\mathbb{E}_{\rho_{\mathrm{N}}}\left[ \ell(g(\vx))\right] 
	\notag \\
	&\phantom{=} + (\kappa(\pi)(1-\lambda))\mathbb{E}_{\rho_{\mathrm{E}}}\left[ \ell(-g(\vx))\right] 	
	+ (1-\kappa(\pi))(1-\lambda)\mathbb{E}_{\rho_{\mathrm{N}}}\left[ \ell(-g(\vx))\right]
	+ \lambda\mathbb{E}_{\rho_{\mathrm{N}}}\left[ \ell(-g(\vx))\right] \notag \\
	&= \alpha\mathbb{E}_{\rho_{\mathrm{E}}}\left[ \ell(g(\vx))\right] 
	+ (1-\alpha)\mathbb{E}_{\rho_{\mathrm{N}}}\left[ \ell(g(\vx))\right] 
	+ \tilde{\kappa}(\pi,\lambda)\mathbb{E}_{\rho_{\mathrm{E}}}\left[ \ell(-g(\vx))\right] 
	+ (1 - \tilde{\kappa}(\pi,\lambda))\mathbb{E}_{\rho_{\mathrm{N}}}\left[ \ell(-g(\vx))\right] \notag \\
	&= \alpha\mathbb{E}_{\rho_{\mathrm{E}}}\left[ \ell(g(\vx))\right] 
	+ (1-\alpha)\mathbb{E}_{\rho_{\mathrm{N}}}\left[ \delta^{\ell}(\vx) - \ell(-g(\vx))\right] \notag \\
	&\phantom{=} 
	+ \tilde{\kappa}(\pi,\lambda)\mathbb{E}_{\rho_{\mathrm{E}}}\left[ \delta^{\ell}(\vx) - \ell(g(\vx))\right] 
	+ (1 - \tilde{\kappa}(\pi,\lambda))\mathbb{E}_{\rho_{\mathrm{N}}}\left[ \ell(-g(\vx))\right] \notag \\
	&= \alpha\mathbb{E}_{\rho_{\mathrm{E}}}\left[ \ell(g(\vx))\right] 
	+ (1-\alpha)\mathbb{E}_{\rho_{\mathrm{N}}}\left[ \delta^{\ell}(\vx) \right] 
	- \mathbb{E}_{\rho_{\mathrm{N}}}\left[\ell(-g(\vx))\right] + \alpha \mathbb{E}_{\rho_{\mathrm{N}}}\left[\ell(-g(\vx))\right] \notag \\
	&\phantom{=} 
	+ \tilde{\kappa}(\pi,\lambda)\mathbb{E}_{\rho_{\mathrm{E}}}\left[ \delta^{\ell}(\vx) \right] 
	- \tilde{\kappa}(\pi,\lambda)\mathbb{E}_{\rho_{\mathrm{E}}}\left[\ell(g(\vx))\right] 
	+ \mathbb{E}_{\rho_{\mathrm{N}}}\left[ \ell(-g(\vx))\right] 
	- \tilde{\kappa}(\pi,\lambda)\mathbb{E}_{\rho_{\mathrm{N}}}\left[ \ell(-g(\vx))\right] \notag \\
	&= (\alpha - \tilde{\kappa}(\pi,\lambda)) \left( \mathbb{E}_{\rho_{\mathrm{E}}}\left[ \ell(g(\vx)) \right] + \mathbb{E}_{\rho_{\mathrm{N}}}\left[ \ell(-g(\vx))\right] \right)
	+ (1-\alpha)\mathbb{E}_{\rho_{\mathrm{N}}}\left[ \delta^{\ell}(\vx) \right] 
	+ \tilde{\kappa}(\pi,\lambda)\mathbb{E}_{\rho_{\mathrm{E}}}\left[ \delta^{\ell}(\vx) \right] \notag \\
	&= 2 (\alpha - \tilde{\kappa}(\pi,\lambda) ) \mathcal{R}(g ; \rho_{\mathrm{E}}, \rho_{\mathrm{N}}, \ell)
	+ (1-\alpha)\mathbb{E}_{\rho_{\mathrm{N}}}\left[ \delta^{\ell}(\vx) \right] 
	+ \tilde{\kappa}(\pi,\lambda)\mathbb{E}_{\rho_{\mathrm{E}}}\left[ \delta^{\ell}(\vx) \right].
\end{align}
For symmetric loss, we have $\delta^{\ell_{\mathrm{sym}}}(\vx) = \ell_{\mathrm{sym}}(g(\vx)) + \ell_{\mathrm{sym}}(-g(\vx))=c$ for a constant $c\in\mathbb{R}$.
With this, we can express the left hand-side of Eq.~\eqref{eq:app_1} as follows:
\begin{align}
	\mathcal{R}(g; \rho^\prime, \rho_{\pi}^\lambda, \ell_{\mathrm{sym}})
	&= (\alpha - \kappa(\pi)(1-\lambda)) \mathcal{R}(g ; \rho_{\mathrm{E}}, \rho_{\mathrm{N}}, \ell_{\mathrm{sym}})
	+ \frac{(1-\alpha)}{2}\mathbb{E}_{\rho_{\mathrm{N}}}\left[ \delta^{\ell_{\mathrm{sym}}}(\vx) \right] 
	+ \frac{\kappa(\pi)(1-\lambda)}{2}\mathbb{E}_{\rho_{\mathrm{E}}}\left[ \delta^{\ell_{\mathrm{sym}}}(\vx) \right] \notag  \\
	&= (\alpha - \kappa(\pi)(1-\lambda)) \mathcal{R}(g ; \rho_{\mathrm{E}}, \rho_{\mathrm{N}}, \ell_{\mathrm{sym}})
	+ \frac{(1-\alpha)}{2}\mathbb{E}_{\rho_{\mathrm{N}}}\left[ c \right] 
	+ \frac{\kappa(\pi)(1-\lambda)}{2}\mathbb{E}_{\rho_{\mathrm{E}}}\left[ c \right] \notag  \\
	&= (\alpha - \kappa(\pi)(1-\lambda)) \mathcal{R}(g ; \rho_{\mathrm{E}}, \rho_{\mathrm{N}}, \ell_{\mathrm{sym}})
	+ \frac{1-\alpha+\kappa(\pi)(1-\lambda)}{2} c.
\end{align}
This equality concludes the proof of Lemma~1. 
Note that this proof follows~\citet{charoenphakdee19a}. 
\end{proof}

\subsection{Proof of Theorem 1}
Lemma 1 indicates that, when $\alpha - \kappa(\pi)(1-\lambda) > 0$, we have
\begin{align} 
	g^\star &= \argmin_{g} \mathcal{R}(g; \rho^\prime, \rho_{\pi}^\lambda, \ell_{\mathrm{sym}}) \notag \\
	&= \argmin_{g}\mathcal{R}(g ; \rho_{\mathrm{E}}, \rho_{\mathrm{N}}, \ell_{\mathrm{sym}}). \label{eq:app_2}
\end{align} 
With this, we obtain Theorem~1 which we restate and prove below.
\begin{theorem}
	Given the optimal classifier $g^\star$ in Eq.~\eqref{eq:app_2}, the solution of $\max_{\pi} \mathcal{R}(g^\star; \rho^\prime, \rho_{\pi}^\lambda, \ell_{\mathrm{sym}})$ is equivalent to the expert policy.
\end{theorem}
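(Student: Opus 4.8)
The plan is to start from Lemma~\ref{theorem:risk}, which we may assume, and substitute the optimal classifier $g^\star$ into the risk $\mathcal{R}(g^\star; \rho^\prime, \rho_{\pi}^\lambda, \ell_{\mathrm{sym}})$. Since we are maximizing over $\pi$, the first step is to identify which terms of the right-hand side of Lemma~\ref{theorem:risk} actually depend on $\pi$. The quantity $\mathcal{R}(g^\star ; \rho_{\mathrm{E}}, \rho_{\mathrm{N}}, \ell_{\mathrm{sym}})$ involves only the fixed densities $\rho_{\mathrm{E}}$ and $\rho_{\mathrm{N}}$, so it is a constant with respect to $\pi$; the only $\pi$-dependence enters through the coefficient $\kappa(\pi)$. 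Expanding the balanced risk on the expert/non-expert densities gives
\begin{align*}
	2\mathcal{R}(g^\star ; \rho_{\mathrm{E}}, \rho_{\mathrm{N}}, \ell_{\mathrm{sym}})
	&= \mathbb{E}_{\rho_{\mathrm{E}}}\left[ \ell_{\mathrm{sym}}(g^\star(\vx)) \right]
	+ \mathbb{E}_{\rho_{\mathrm{N}}}\left[ \ell_{\mathrm{sym}}(-g^\star(\vx)) \right].
\end{align*}
Using the symmetric property $\ell_{\mathrm{sym}}(g^\star(\vx)) = c - \ell_{\mathrm{sym}}(-g^\star(\vx))$ to rewrite the expert term, collecting the $\pi$-dependent pieces, and discarding additive constants, the objective $\max_{\pi}\mathcal{R}(g^\star; \rho^\prime, \rho_{\pi}^\lambda, \ell_{\mathrm{sym}})$ reduces to $\max_{\pi} \kappa(\pi)\big( \mathbb{E}_{\rho_{\mathrm{E}}}[\ell_{\mathrm{sym}}(-g^\star(\vx))] - \mathbb{E}_{\rho_{\mathrm{N}}}[\ell_{\mathrm{sym}}(-g^\star(\vx))] \big)$, exactly as asserted in the proof sketch; the factor $(1-\lambda)$ and the constant $c$ fold into the constant since $\lambda$ is fixed.

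The second step is to determine the sign of the bracketed difference $\Delta := \mathbb{E}_{\rho_{\mathrm{E}}}[\ell_{\mathrm{sym}}(-g^\star(\vx))] - \mathbb{E}_{\rho_{\mathrm{N}}}[\ell_{\mathrm{sym}}(-g^\star(\vx))]$. Here I would invoke the characterization from Eq.~\eqref{eq:opt_g} that $g^\star$ minimizes $\mathcal{R}(g ; \rho_{\mathrm{E}}, \rho_{\mathrm{N}}, \ell_{\mathrm{sym}})$, i.e.\ it is the Bayes-optimal classifier for distinguishing $\rho_{\mathrm{E}}$ from $\rho_{\mathrm{N}}$. Intuitively, such a classifier assigns $g^\star(\vx)\geq 0$ (expert prediction) on the region where $\rho_{\mathrm{E}}$ dominates and $g^\star(\vx)<0$ on the region where $\rho_{\mathrm{N}}$ dominates, so the penalty $\ell_{\mathrm{sym}}(-g^\star(\vx))$ — which is small when $g^\star$ is large and positive — is on average smaller under $\rho_{\mathrm{N}}$ than under $\rho_{\mathrm{E}}$, giving $\Delta > 0$. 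I would argue this formally by noting that, because $\ell_{\mathrm{sym}}$ is symmetric, minimizing the balanced risk is equivalent to a balanced classification problem, and whenever $\rho_{\mathrm{E}} \neq \rho_{\mathrm{N}}$ the optimal $g^\star$ separates them strictly enough that $\Delta$ is strictly positive.

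The third and final step is to conclude that, since $\Delta > 0$, the product $\kappa(\pi)\Delta$ is maximized by driving $\kappa(\pi)$ to its upper bound $1$; and by Assumption~\ref{assumption_1}, $\kappa(\pi)=1$ holds if and only if $\rho_\pi(\vx) = \rho_{\mathrm{E}}(\vx)$, which means the maximizing policy reproduces the expert state-action density and hence (by the uniqueness of the policy corresponding to a state-action density) equals $\pi_{\mathrm{E}}$. I expect the main obstacle to be the rigorous justification of $\Delta > 0$ in the second step: the proof sketch asserts this as an immediate consequence of $g^\star$ being the risk minimizer, but establishing strict positivity cleanly requires care about the symmetric loss structure and the possibility that $g^\star$ is only determined up to its sign pattern. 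The remaining algebra in the first and third steps is routine once the reduction and the sign of $\Delta$ are in hand.
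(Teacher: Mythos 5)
Your proposal is correct and follows essentially the same route as the paper's proof: the paper reduces $\max_{\pi} \mathcal{R}(g^\star; \rho^\prime, \rho_{\pi}^\lambda, \ell_{\mathrm{sym}})$ to $\max_{\pi} \kappa(\pi)\bigl( \mathbb{E}_{\rho_{\mathrm{E}}}[\ell_{\mathrm{sym}}(-g^\star(\vx))] - \mathbb{E}_{\rho_{\mathrm{N}}}[\ell_{\mathrm{sym}}(-g^\star(\vx))] \bigr)$ by directly expanding the risk under Assumption~\ref{assumption_1} (rather than via Lemma~\ref{theorem:risk} as you do, but the algebra is equivalent), asserts the same positivity of the bracketed difference from optimality of $g^\star$, and concludes by driving $\kappa(\pi)$ to $1$. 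The strict positivity you flag as the delicate point is asserted in the paper at the same informal level as in your write-up; if you want it airtight, note that by symmetry the difference equals $c - 2\mathcal{R}(g^\star; \rho_{\mathrm{E}}, \rho_{\mathrm{N}}, \ell_{\mathrm{sym}})$, and the constant classifier $g \equiv 0$ attains risk exactly $c/2$, so the minimizer's risk is strictly below $c/2$ whenever $\rho_{\mathrm{E}} \neq \rho_{\mathrm{N}}$ and the loss is non-trivial.
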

\begin{proof}
		By using the definitions of the risk and $\rho_{\pi}^\lambda(\vx)$, $\mathcal{R}(g^\star; \rho^\prime, \rho_{\pi}^\lambda, \ell_{\mathrm{sym}})$ can be expressed as
		\begin{align}
			\mathcal{R}(g^\star; \rho^\prime, \rho_{\pi}^\lambda, \ell_{\mathrm{sym}})
			&= \frac{1}{2}\mathbb{E}_{\rho^\prime}\! \left[ \ell_{\mathrm{sym}}(g^\star(\vx))\right] + \frac{\lambda}{2}\mathbb{E}_{\rho_{\mathrm{N}}}\! \left[ \ell_{\mathrm{sym}}(-g^\star(\vx))\right] + \frac{1-\lambda}{2}\mathbb{E}_{\rho_{\pi}}\! \left[ \ell_{\mathrm{sym}}(-g^\star(\vx))\right].
		\end{align}
		Since the first and second terms are constant w.r.t. $\pi$, the solution of $\max_{\pi} \mathcal{R}(g^\star; \rho^\prime, \rho_{\pi}^\lambda, \ell_{\mathrm{sym}})$ is equivalent to the solution of 
		$\max_{\pi} \mathbb{E}_{\rho_{\pi}} \left[ \ell_{\mathrm{sym}}(-g^\star(\vx))\right]$, where we omit the positive constant factor $(1-\lambda)/2$.
		Under Assumption~1 which assumes $\rho_{\pi}(\vx) = \kappa(\pi) \rho_{\mathrm{E}}(\vx) + (1-\kappa(\pi)) \rho_{\mathrm{N}}(\vx)$ , we can further express the objective function as 
		\begin{align}
			\mathbb{E}_{\rho_{\pi}}\left[ \ell_{\mathrm{sym}}(-g^\star(\vx)) \right] 
			&= \kappa(\pi) \mathbb{E}_{\rho_{\mathrm{E}}}\left[ \ell_{\mathrm{sym}}(-g^\star(\vx)) \right] + (1 - \kappa(\pi)) \mathbb{E}_{\rho_{\mathrm{N}}}\left[ \ell_{\mathrm{sym}}(-g^\star(\vx)) \right] \notag \\
			&= \kappa(\pi) \Big( \mathbb{E}_{\rho_{\mathrm{E}}}\left[ \ell_{\mathrm{sym}}(-g^\star(\vx)) \right] - \mathbb{E}_{\rho_{\mathrm{N}}}\left[ \ell_{\mathrm{sym}}(-g^\star(\vx)) \right] \Big) + \mathbb{E}_{\rho_{\mathrm{N}}}\left[ \ell_{\mathrm{sym}}(-g^\star(\vx)) \right].
		\end{align}
		The last term is a constant w.r.t.~$\pi$ and can be safely ignored.
		The right hand-side is maximized by increasing $\kappa(\pi)$ to 1 when the inequality  $\mathbb{E}_{\rho_{\mathrm{E}}}\left[ \ell_{\mathrm{sym}}(-g^\star(\vx)) \right] - \mathbb{E}_{\rho_{\mathrm{N}}}\left[ \ell_{\mathrm{sym}}(-g^\star(\vx)) \right] > 0$ holds. 
		Since $g^\star$ is also the optimal classifier of $\mathcal{R}(g; \rho_{\mathrm{E}}, \rho_{\mathrm{N}}, \ell_{\mathrm{sym}})$, the inequality $\mathbb{E}_{\rho_{\mathrm{E}}}\left[ \ell_{\mathrm{sym}}(-g^\star(\vx)) \right] - \mathbb{E}_{\rho_{\mathrm{N}}}\left[ \ell_{\mathrm{sym}}(-g^\star(\vx)) \right] > 0$ holds.
		Specifically, the expected loss of classifying expert data as non-expert: $\mathbb{E}_{\rho_{\mathrm{E}}}\left[ \ell_{\mathrm{sym}}(-g^\star(\vx)) \right]$, is larger to the expected loss of classifying non-expert data as non-expert:  $\mathbb{E}_{\rho_{\mathrm{N}}}\left[ \ell_{\mathrm{sym}}(-g^\star(\vx)) \right]$. 
		Thus, the objective can only be maximized by increasing $\kappa(\pi)$ to $1$. 
		Because $\kappa(\pi)=1$ if and only if $\rho_\pi(\vx) = \rho_{\mathrm{E}}(\vx)$, we conclude that the solution of $\max_{\pi} \mathcal{R}(g^\star; \rho^\prime, \rho_{\pi}^\lambda, \ell_{\mathrm{sym}})$ is equivalent to $\pi_{\mathrm{E}}$.
\end{proof}

\section{DATASETS AND IMPLEMENTATION} 
\label{appendix:exp_settings}

We conduct experiments on continuous-control benchmarks simulated by PyBullet simulator~\citep{coumans2019}.
We consider four locomotion tasks, namely \texttt{HalfCheetah}, \texttt{Hopper}, \texttt{Walker2d}, and \texttt{Ant}, where the goal is to control the agent to move forward to the right.
We use true states of the agents and do not use visual observation. 
To obtain demonstration datasets, we collect expert and non-expert state-action samples by using 6 policy snapshots trained by the trust-region policy gradient method (ACKTR)~\citep{Wu2017}, where each snapshot is obtained using different training samples. 
The cumulative rewards achieved by the six snapshots are given in Table~\ref{table:snapshot}, where snapshot \verb|#|1 is used as the expert policy. Visualization of trajectories obtained by the expert policy in these tasks is provided in Figure~\ref{figure:snapshot_expert}. 
Sourcecode of our datasets and implementation for reproducing the results is publicly available at \textcolor{Blue}{\nolinkurl{https://github.com/voot-t/ril_co}}.

All methods use policy networks with 2 hidden-layers of 64 hyperbolic tangent units. We use similar networks with 100 hyperbolic tangent units for classifiers in RIL-Co and discriminators in other methods. 
The policy networks are trained by ACKTR, where we use a public implementation~\citep{pytorchrl}. 
In each iteration, the policy collects a total of $B=640$ transition samples using $32$ parallel agents, and we use these transition samples as the dataset $\mathcal{B}$ in Algorithm~\ref{algo:rail}. 
Throughout the learning process, the total number of transition samples collected by the learning policy is 20 million.
For training the classifier and discriminator, we use Adam~\citep{KingmaBa2014} with learning rate $10^{-3}$ and the gradient penalty regularizer with the regularization parameter of 10~\citep{GulrajaniAADC17}. 
The mini-batch size for classifier/discriminator training is 128. 

For co-pseudo-labeling in Algorithm~1 of RIL-Co, we initialize by splitting the demonstration dataset $\mathcal{D}$ into two disjoint subset $\mathcal{D}_1$ and $\mathcal{D}_2$.
In each training iteration, we draw batch samples $\mathcal{U} = \{ \vx_u\}_{u=1}^U \sim \mathcal{D}_2$ and $\mathcal{V} = \{ \vx_v\}_{v=1}^V \sim \mathcal{D}_1$ from the split datasets with $U=V=640$.
To obtain pseudo-labeled datasets $\mathcal{P}_1$ for training classifier $g_1$, we firstly compute the classification scores $g_2(\vx_u)$ using samples in $\mathcal{U}$.
Then, we choose $K=128$ samples with the least negative values of $g_2(\vx_u)$ in an ascending order as  $\mathcal{P}_1$.
We choose samples in this way to incorporate a heuristic that prioritizes choosing negative samples which are predicted with high confidence to be negative by the classifiers, i.e., these samples are far away from  the decision boundary.
Without this heuristic, obtaining good approximated samples requires using a large batch size which is computationally expensive. 
The procedure to obtain $\mathcal{P}_2$ is similar, but we use $\mathcal{V}$ instead of $\mathcal{U}$ and $g_1(\vx_v)$ instead of $g_2(\vx_u$).
The implementation of RIL-P variants in our ablation study is similar, except that we have only one neural networks and we do not split the dataset into disjoint subsets. 

For VILD, we the log-sigmoid reward variant and perform important sampling based on the estimated noise, as described by~\cite{TangkarattEtAl2020}. 
For behavior clonig (BC), we use a deterministic policy neural network and train it by minimizing the mean-squared-error with Adam and learning rate $10^{-3}$. We do not apply a regularization technique for BC. 
For the other methods, we follow the original implementation as close as possible, where we make sure that these methods perform well overall on datasets without noise.

\begin{table}[t]
	\centering
	\captionof{table}{Cumulative rewards achieved by six policy snapshots used for generating demonstrations. Snapshot 1 is used as the expert policy. Ant (Gaussian) denotes a scenario in the experiment with the Gaussian noise dataset in Section~\ref{section:exp_gaussian}, where Gaussian noise with different variance is added to expert actions.}
	\label{table:snapshot}
	\begin{tabular}{ l || c | c | c | c | c | c }
		\hline
		Task		  & Snapshot \verb|#|1 & Snapshot \verb|#|2 & Snapshot \verb|#|3 & Snapshot \verb|#|4 & Snapshot \verb|#|5 & Snapshot \verb|#|6 \\ 
		\hline \hline
		\texttt{HalfCheetah}   & 2500	   & 1300		& 1000   	 & 	700		  & -1100	   & -1000		\\
		\texttt{Hopper}   	  & 2300	   & 1100		& 1000		 & 	900		  & 600		   & 0			\\
		\texttt{Walker2D}   	  & 2700	   & 800		& 600		 & 	700		  & 100		   & 0			\\
		\texttt{Ant}   		  & 3500	   & 1400		& 1000		 & 	700		  & 400		   & 0			\\ \hline
		\texttt{Ant} (Gaussian) & 3500	   & 1500		& 1000		 & 	800		  & 500		   & 400		\\
	\end{tabular}
\end{table}

\begin{figure}[t]
	
	\includegraphics[width=0.19\linewidth]{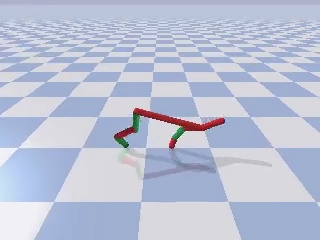} \hfill
	\includegraphics[width=0.19\linewidth]{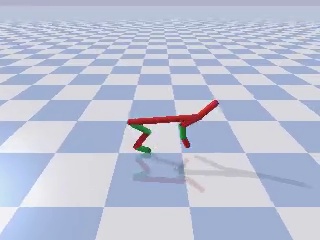} \hfill
	\includegraphics[width=0.19\linewidth]{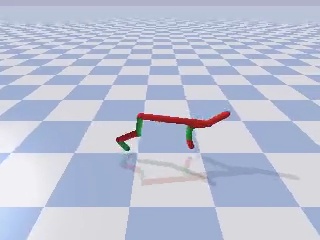} \hfill
	\includegraphics[width=0.19\linewidth]{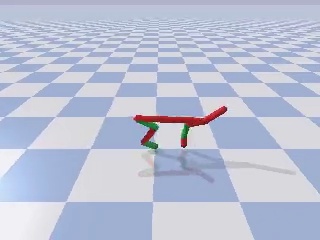} \hfill
	\includegraphics[width=0.19\linewidth]{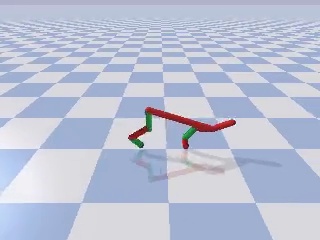} \hfill

	\includegraphics[width=0.19\linewidth]{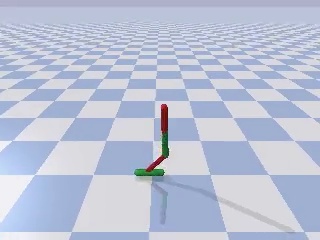} \hfill
	\includegraphics[width=0.19\linewidth]{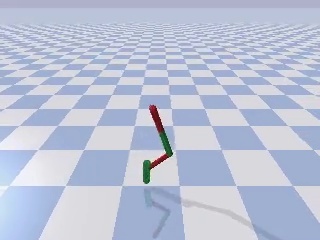} \hfill
	\includegraphics[width=0.19\linewidth]{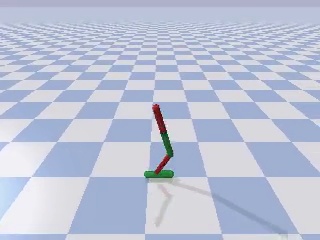} \hfill
	\includegraphics[width=0.19\linewidth]{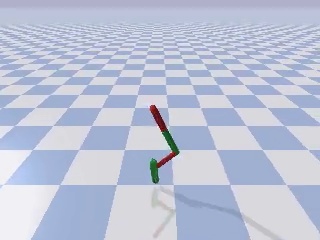} \hfill
	\includegraphics[width=0.19\linewidth]{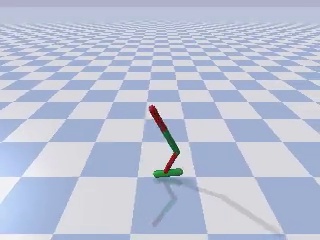} \hfill

	\includegraphics[width=0.19\linewidth]{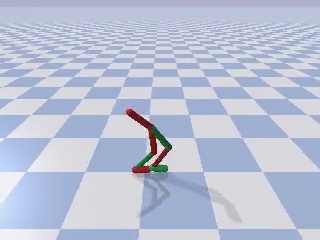} \hfill
	\includegraphics[width=0.19\linewidth]{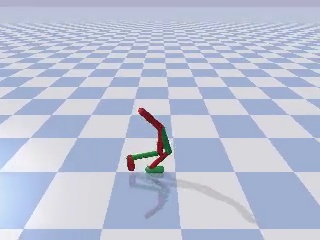} \hfill
	\includegraphics[width=0.19\linewidth]{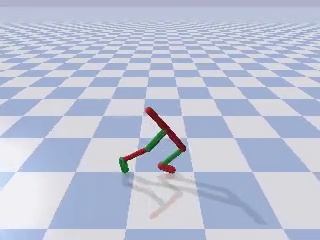} \hfill
	\includegraphics[width=0.19\linewidth]{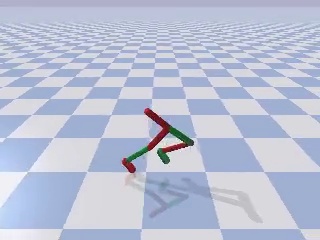} \hfill
	\includegraphics[width=0.19\linewidth]{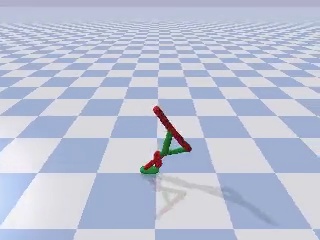} \hfill

	\includegraphics[width=0.19\linewidth]{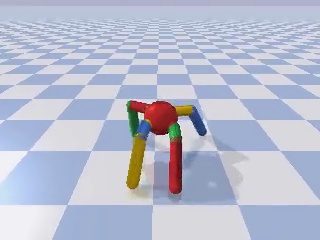} \hfill
	\includegraphics[width=0.19\linewidth]{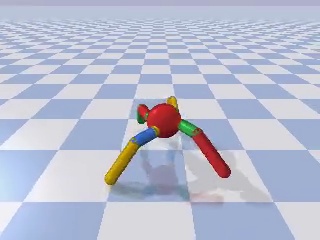} \hfill
	\includegraphics[width=0.19\linewidth]{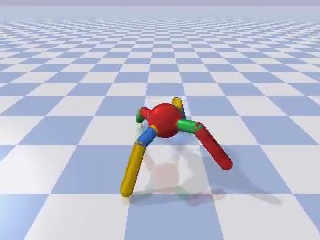} \hfill
	\includegraphics[width=0.19\linewidth]{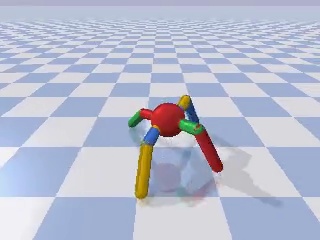} \hfill
	\includegraphics[width=0.19\linewidth]{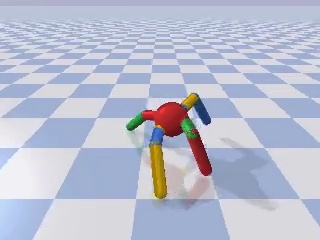} \hfill

	\caption{Visualization of the first 100 time steps of expert demonstrations. Time step increases from the leftmost figure ($t=0)$ to the rightmost figure ($t=100)$. Videos are provided with the sourcecode.
	}	
	\label{figure:snapshot_expert}

\end{figure}

\section{ADDITIONAL RESULTS} 
\label{appendix:add_results}

Here, we present learning curves obtained by each method in the experiments. 
Figure~\ref{figure:exp_app_curve} depicts learning curves (performance against the number of transition samples collected by the learning policy) for the results in Section~\ref{section:exp_main}.
Since BC does not use the learning policy to collect transition samples, the horizontal axes for BC denote the number of training iterations. 
It can be seen that RIL-Co achieves better performances and uses less transition samples compared to other methods in the high-noise scenarios where  $\delta \in \{ 0.2, 0.3, 0.4\}$.
Meanwhile in the low-noise scenarios, all methods except GAIL with the unhinged loss and VILD perform comparable to each other in terms of the performance and sample efficiency. 

Figure~\ref{figure:exp_app_ablation_curve} shows learning curves of the ablation study in Section~\ref{section:exp_ablation}. 
RIL-Co with the AP loss clearly outperforms the comparison methods in terms of both sample efficiency and final performance.
The final performance in Figures~\ref{figure:exp_main} and~\ref{figure:exp_ablation_noise} are obtained by averaging the performance in the last 1000 iterations of the learning curves.  

\begin{figure}[t]	
	\centering
	\includegraphics[width=0.92\linewidth]{figures/legend.pdf}

	\vspace{1mm}

	\begin{subfigure}[b]{0.99\linewidth}
		\includegraphics[width=0.24\linewidth]{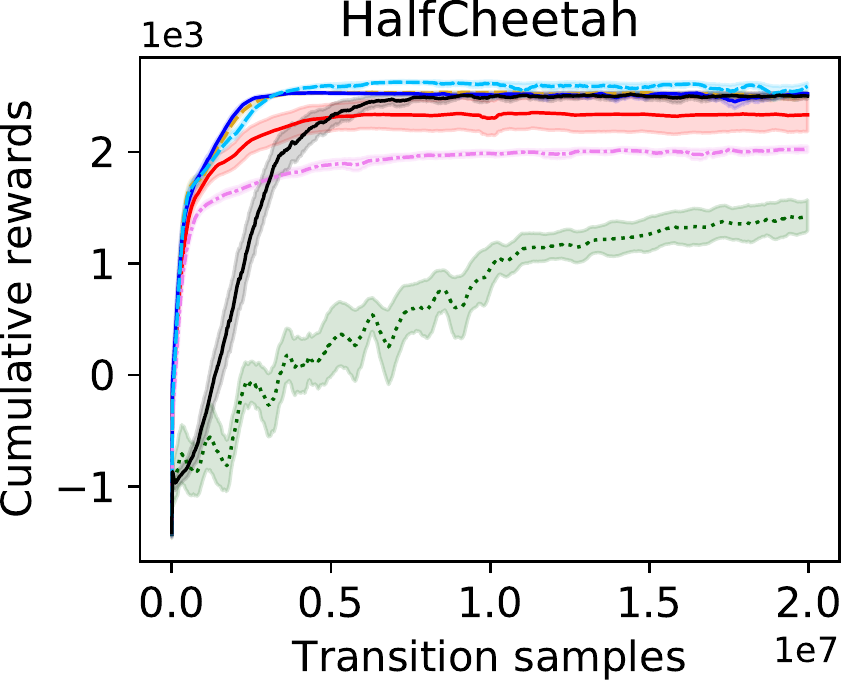} \hfill
		\includegraphics[width=0.24\linewidth]{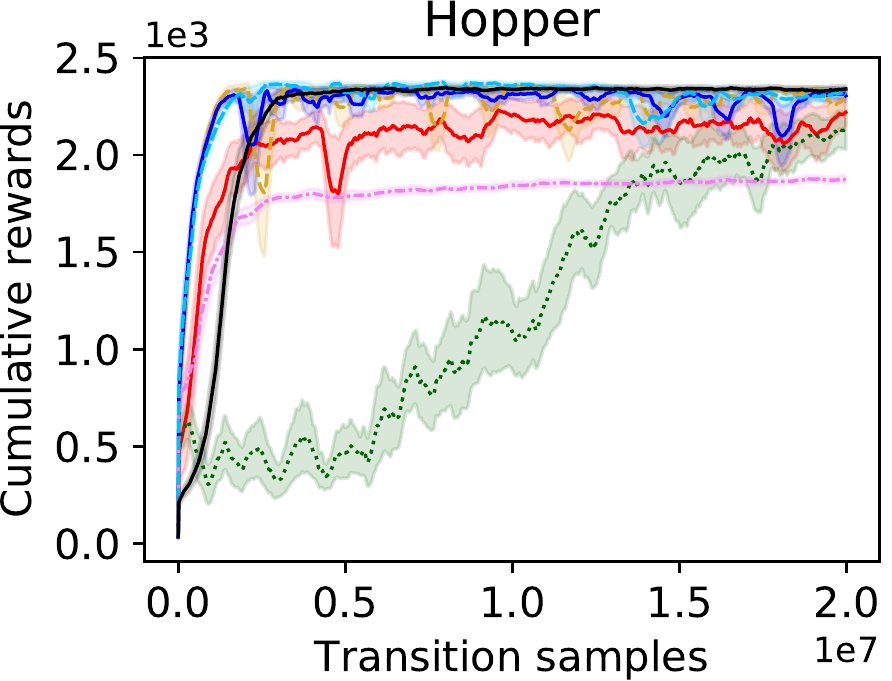} \hfill
		\includegraphics[width=0.24\linewidth]{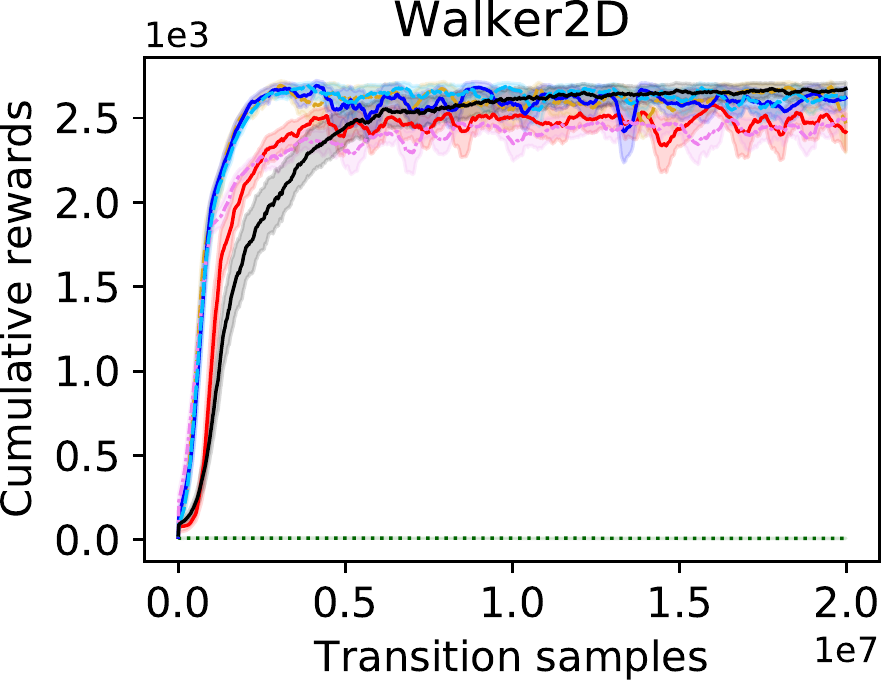} \hfill
		\includegraphics[width=0.24\linewidth]{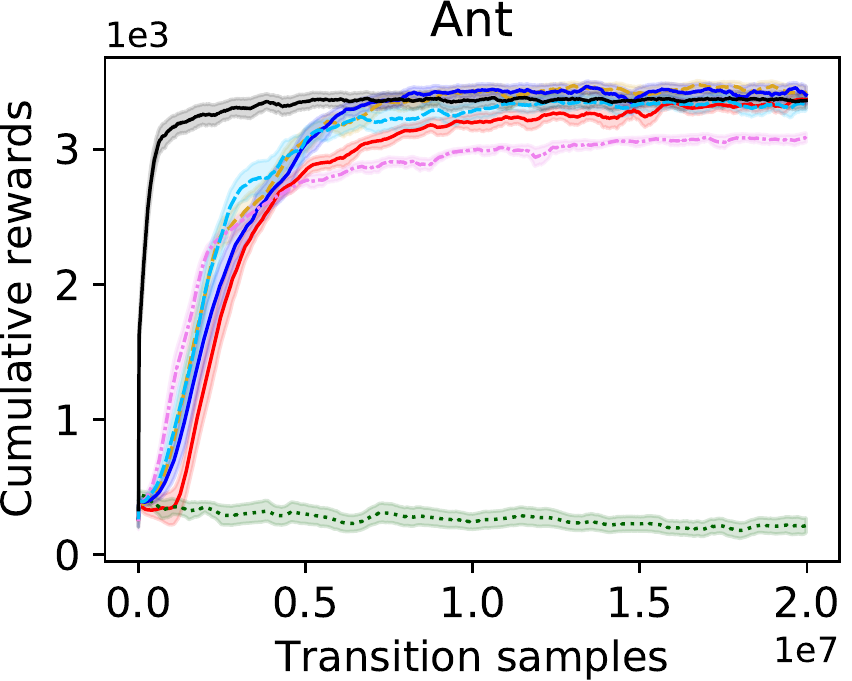} \hfill
		\subcaption{Noise rate $\delta=0$}
	\end{subfigure}
	
	\begin{subfigure}[b]{0.99\linewidth}
		\includegraphics[width=0.24\linewidth]{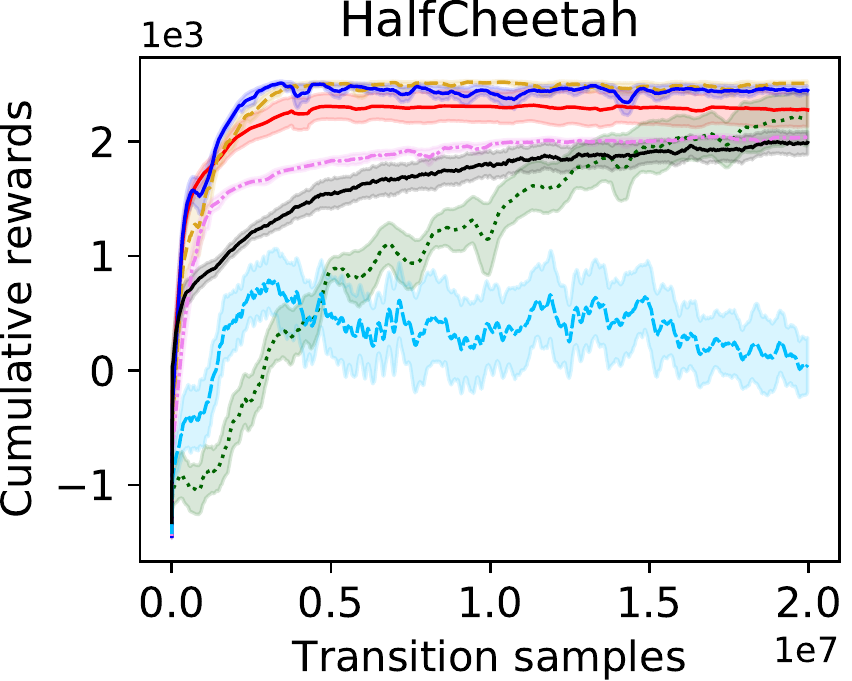} \hfill
		\includegraphics[width=0.24\linewidth]{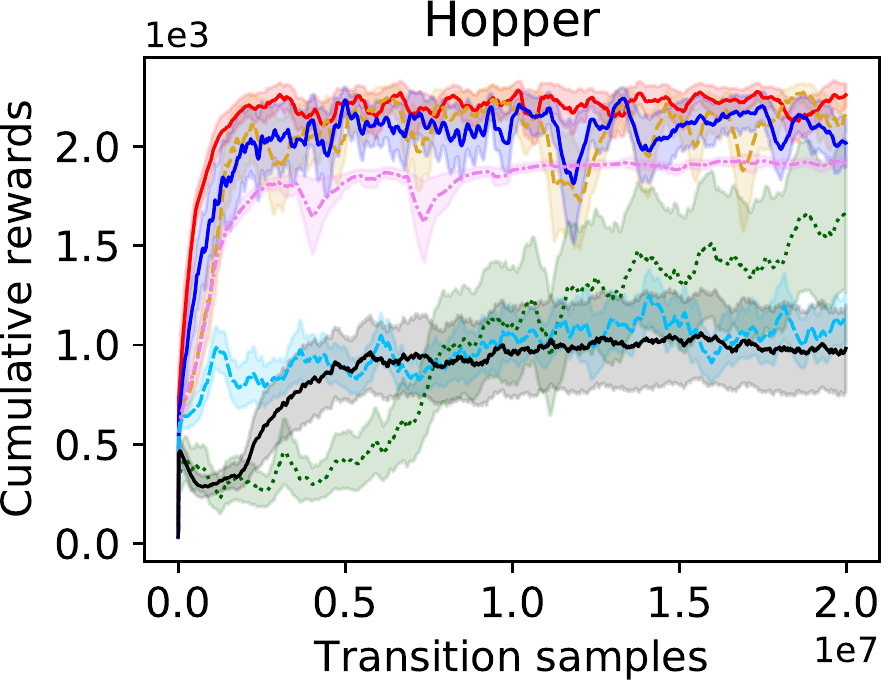} \hfill
		\includegraphics[width=0.24\linewidth]{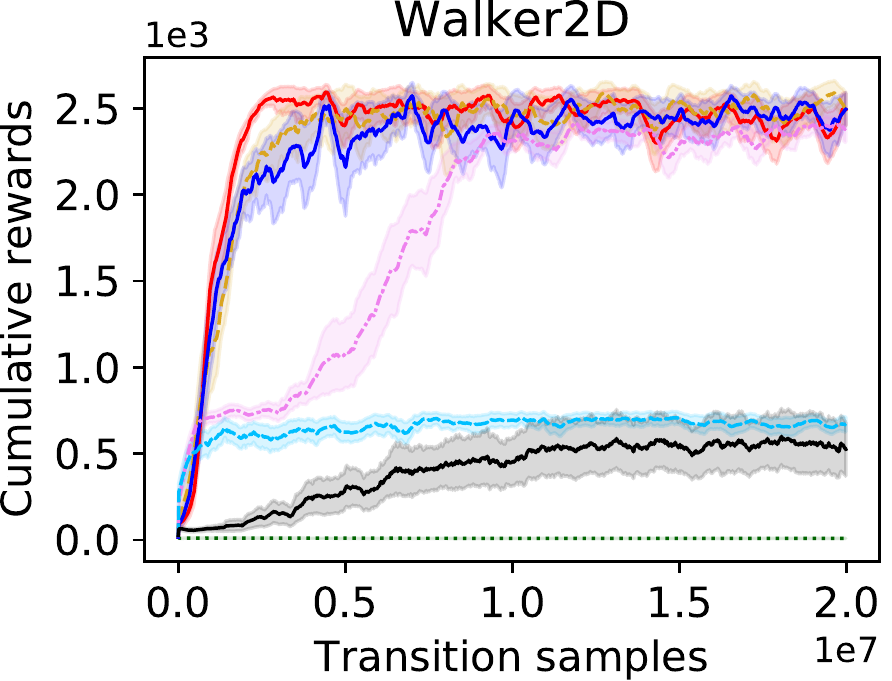} \hfill
		\includegraphics[width=0.24\linewidth]{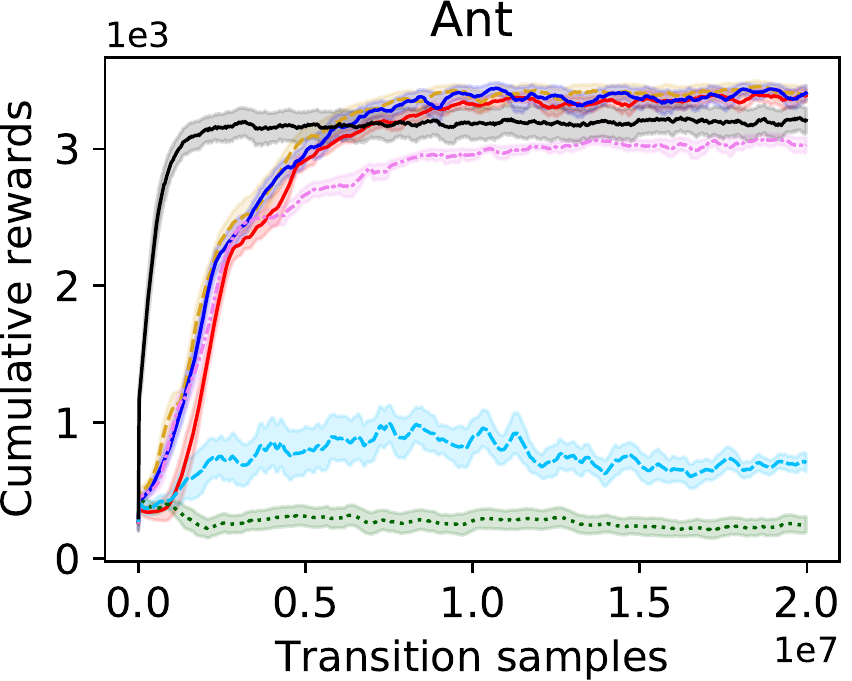} \hfill
		\subcaption{Noise rate $\delta=0.1$}
	\end{subfigure}
	
	\begin{subfigure}[b]{0.99\linewidth}
		\includegraphics[width=0.24\linewidth]{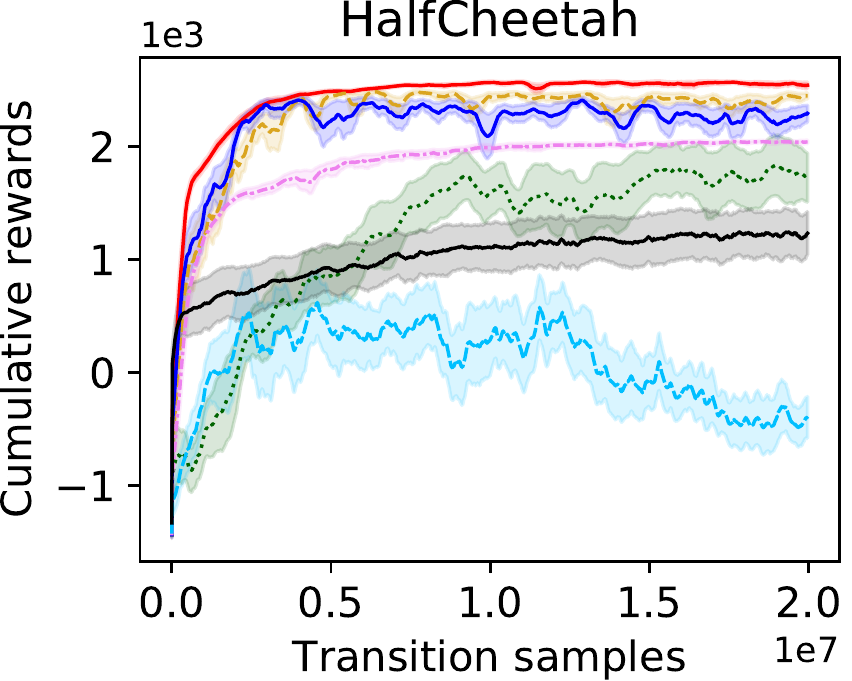} \hfill
		\includegraphics[width=0.24\linewidth]{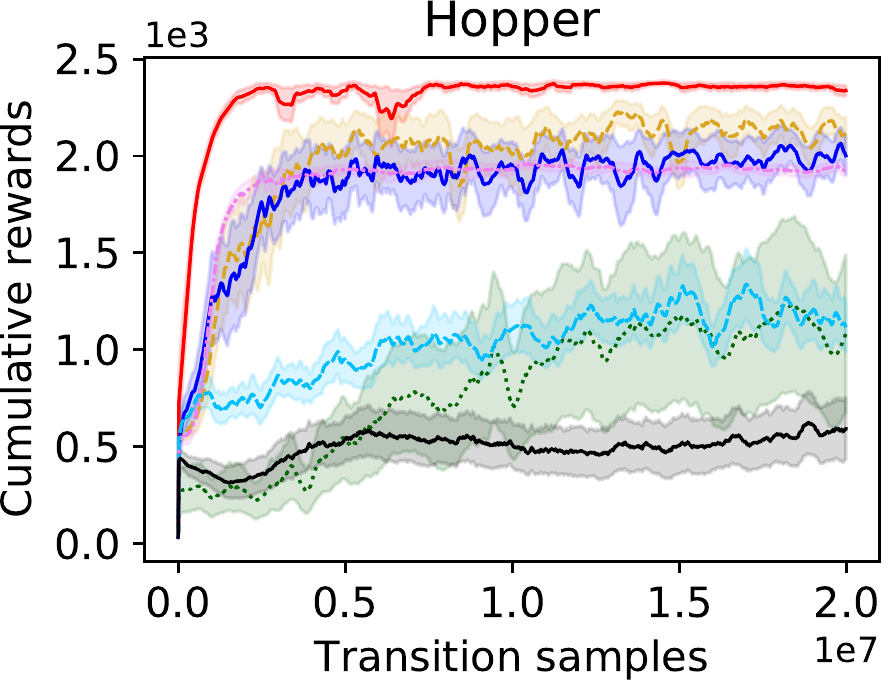} \hfill
		\includegraphics[width=0.24\linewidth]{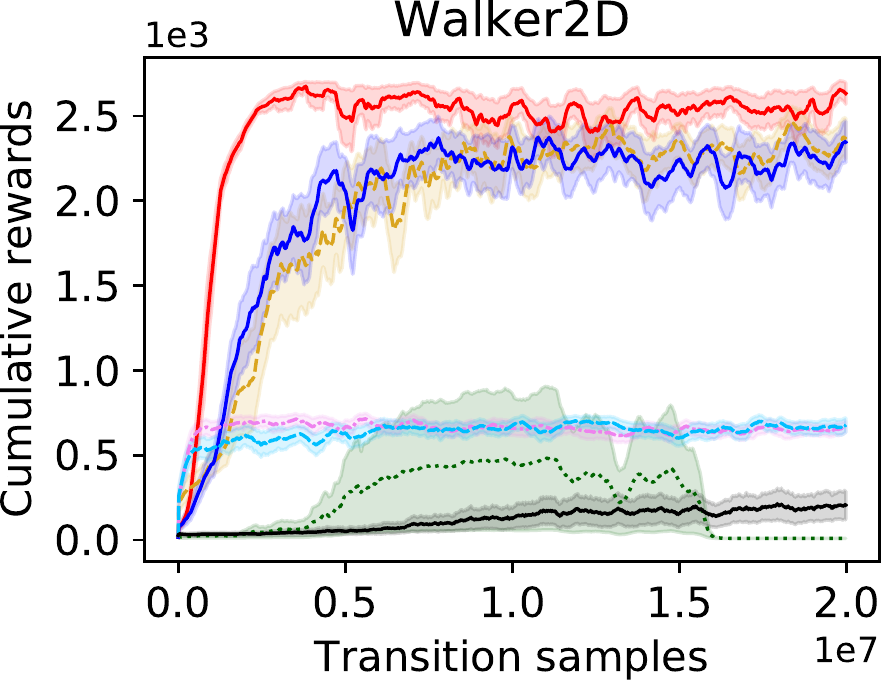} \hfill
		\includegraphics[width=0.24\linewidth]{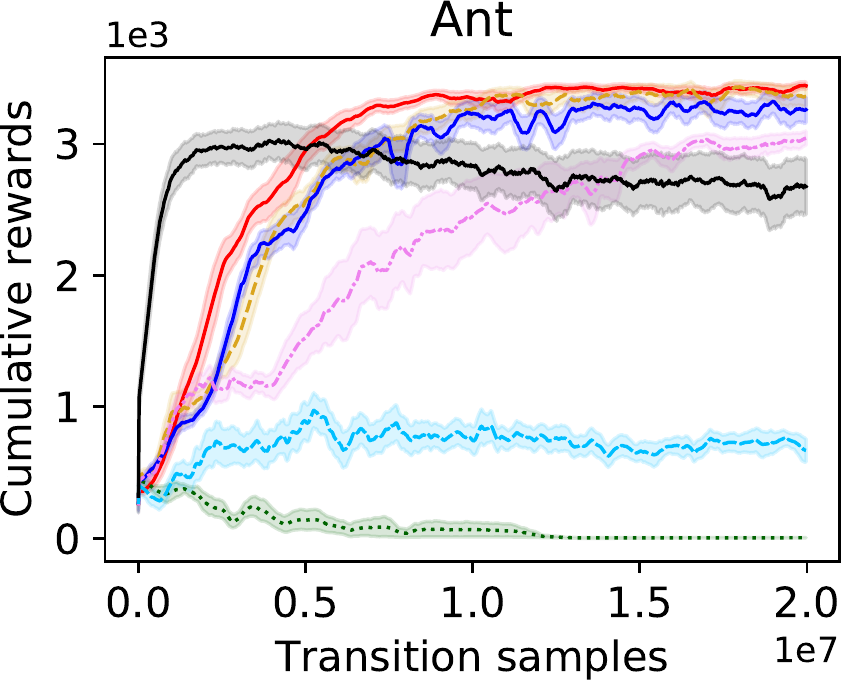} \hfill
		\subcaption{Noise rate $\delta=0.2$}
	\end{subfigure}
	
	\begin{subfigure}[b]{0.99\linewidth}
		\includegraphics[width=0.24\linewidth]{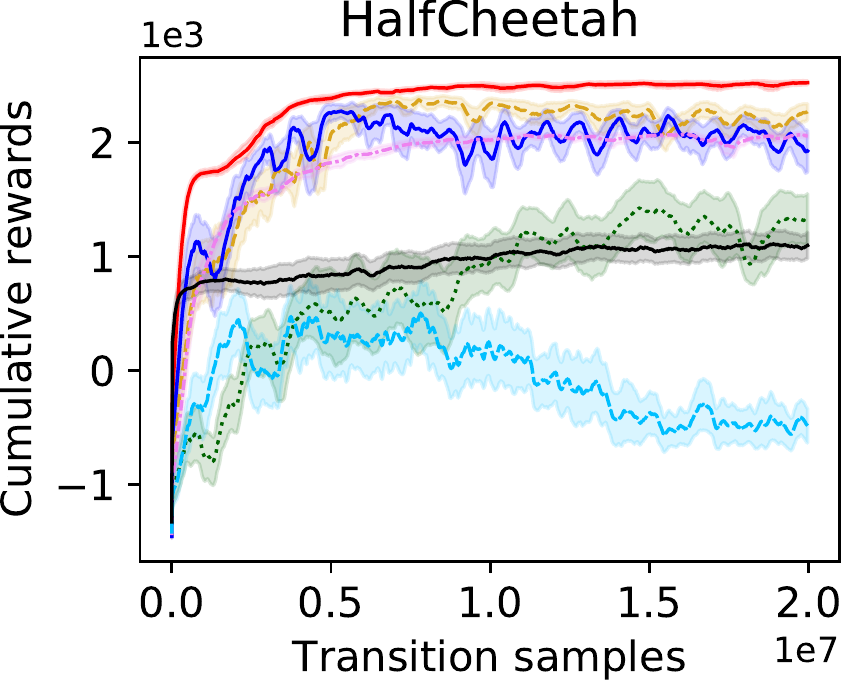} \hfill
		\includegraphics[width=0.24\linewidth]{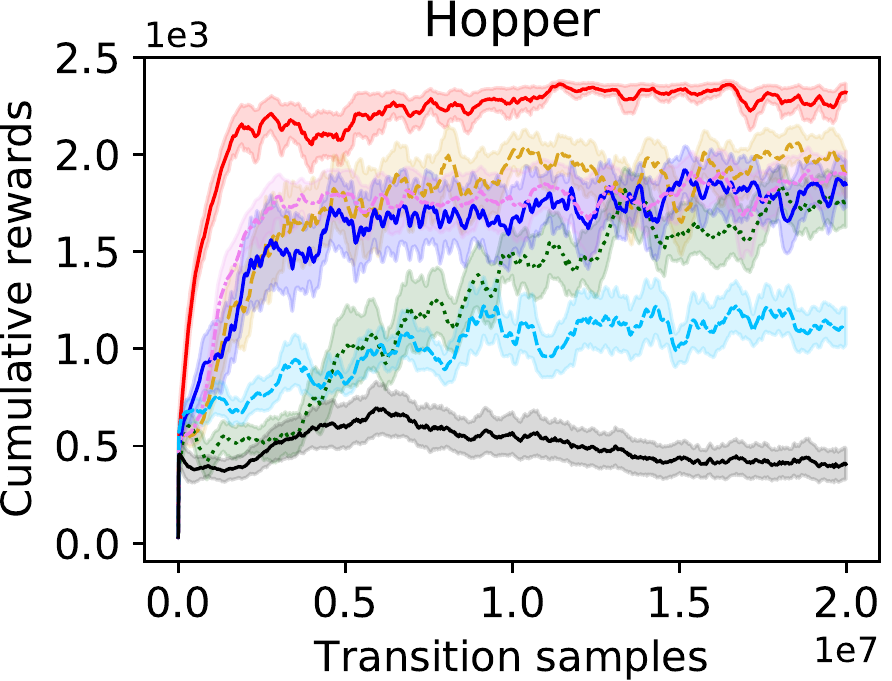} \hfill
		\includegraphics[width=0.24\linewidth]{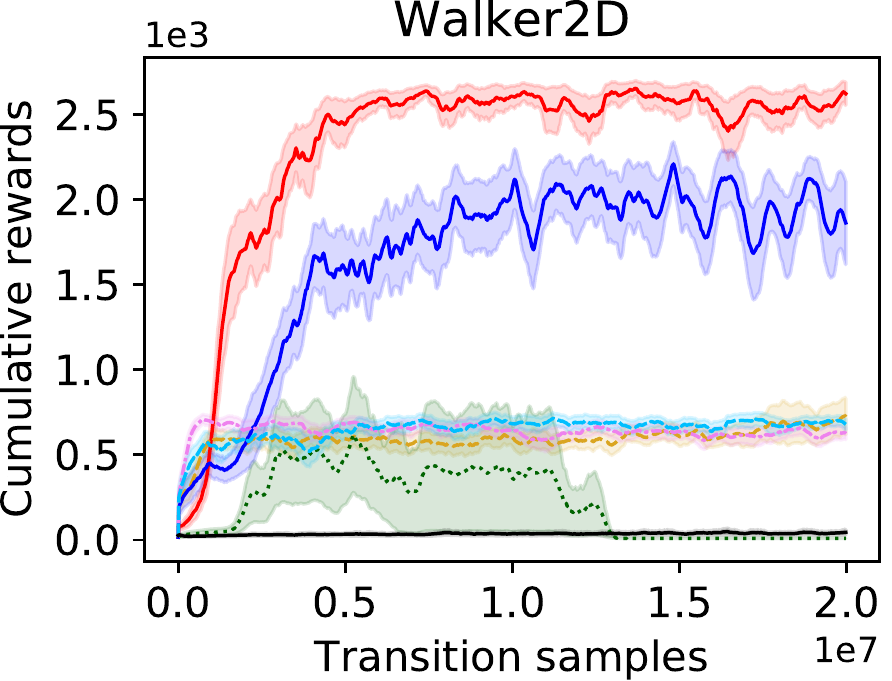} \hfill
		\includegraphics[width=0.24\linewidth]{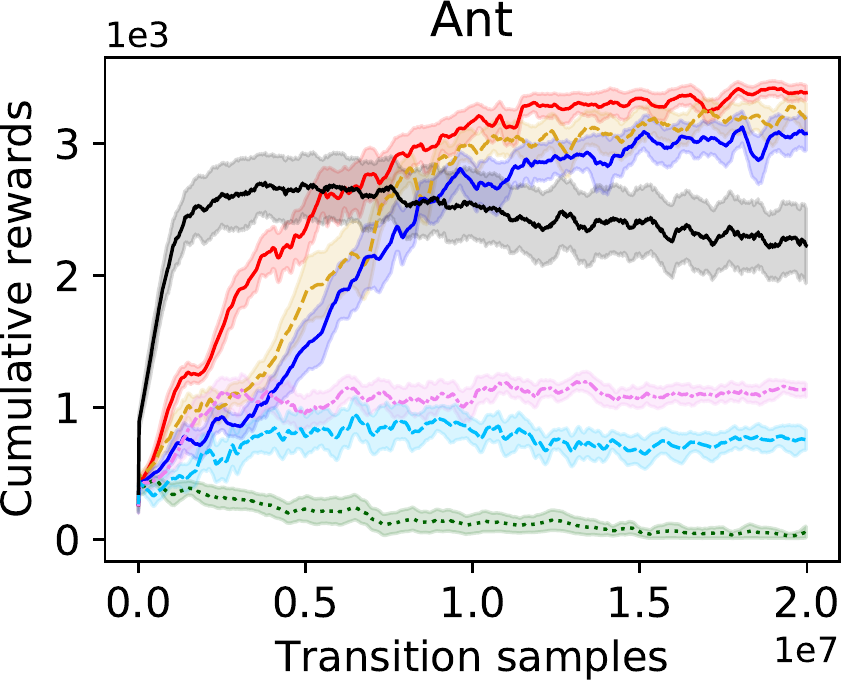} \hfill
		\subcaption{Noise rate $\delta=0.3$}
	\end{subfigure}
	
	\begin{subfigure}[b]{0.99\linewidth}
	\includegraphics[width=0.24\linewidth]{figures/ACKTR_HalfCheetahBulletEnv-v0_np0.4.pdf} \hfill
	\includegraphics[width=0.24\linewidth]{figures/ACKTR_HopperBulletEnv-v0_np0.4.pdf} \hfill
	\includegraphics[width=0.24\linewidth]{figures/ACKTR_Walker2DBulletEnv-v0_np0.4.pdf} \hfill
	\includegraphics[width=0.24\linewidth]{figures/ACKTR_AntBulletEnv-v0_np0.4.pdf} \hfill
	\subcaption{Noise rate $\delta=0.4$}
	\end{subfigure}

	\caption{Performance against the number of transition samples in continuous-control benchmarks. For BC, the horizontal axes denote the number of training iterations. Clearly, RIL-Co with the AP loss is more robust than comparison methods when the noise rate increases.}	
	\label{figure:exp_app_curve}
\end{figure}

\begin{figure}[t]	
	\centering
	\includegraphics[width=0.70\linewidth]{figures/legend_app.pdf}

	\vspace{1mm}

	\begin{subfigure}[b]{0.99\linewidth}
		\includegraphics[width=0.24\linewidth]{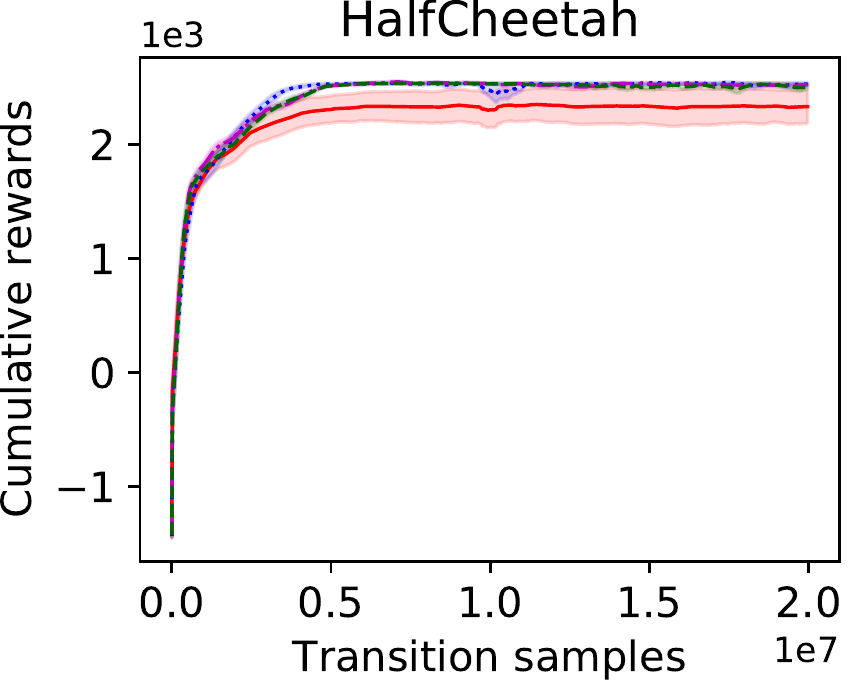} \hfill
		\includegraphics[width=0.24\linewidth]{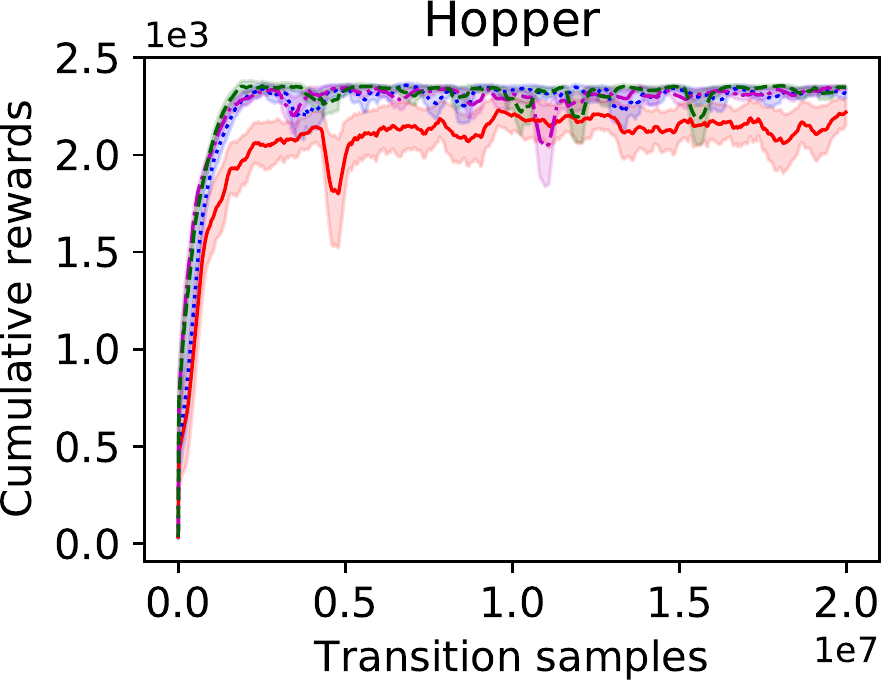} \hfill
		\includegraphics[width=0.24\linewidth]{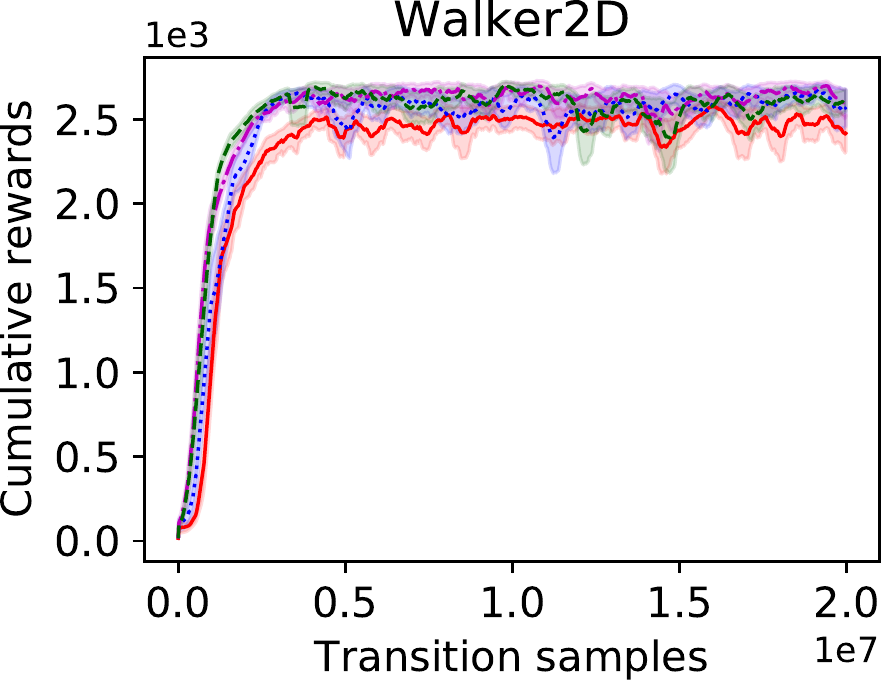} \hfill
		\includegraphics[width=0.24\linewidth]{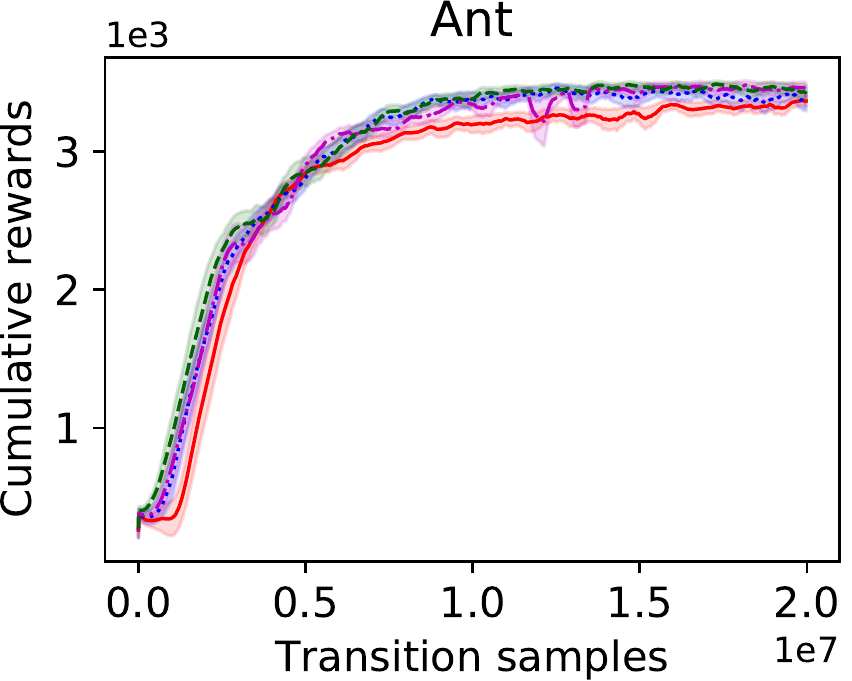} \hfill
		\subcaption{Noise rate $\delta=0$}
	\end{subfigure}
	
	\begin{subfigure}[b]{0.99\linewidth}
		\includegraphics[width=0.24\linewidth]{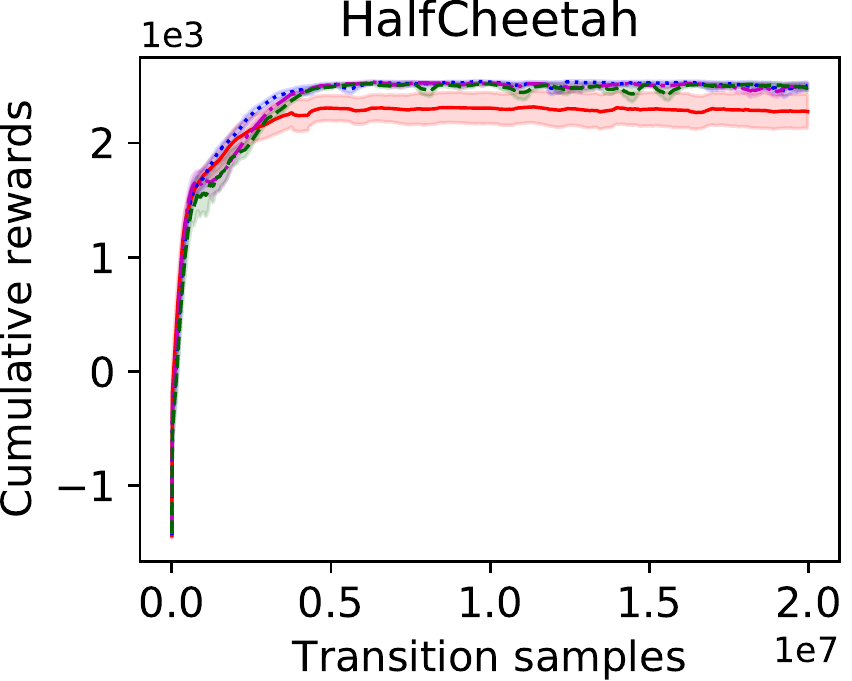} \hfill
		\includegraphics[width=0.24\linewidth]{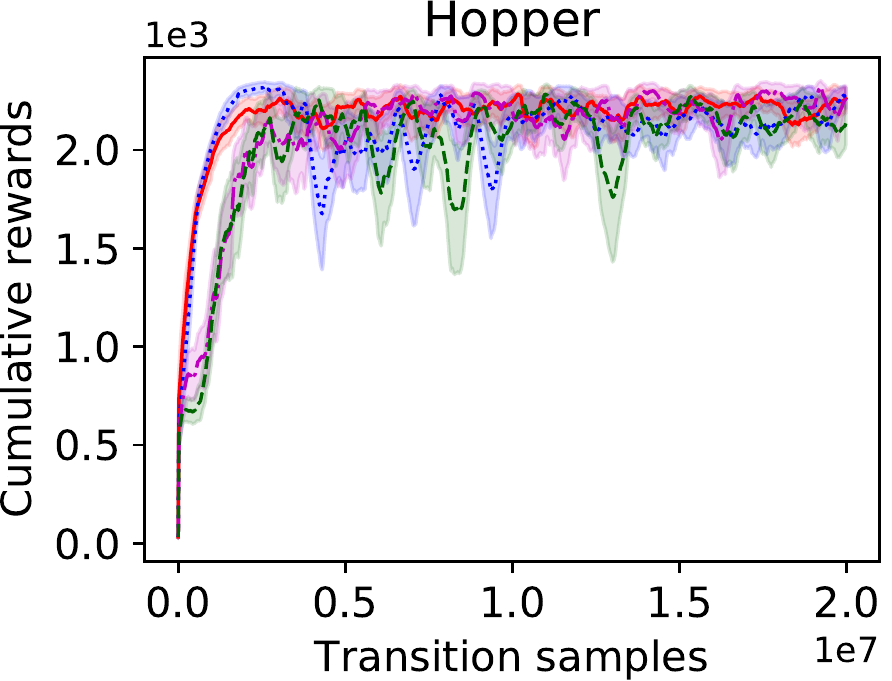} \hfill
		\includegraphics[width=0.24\linewidth]{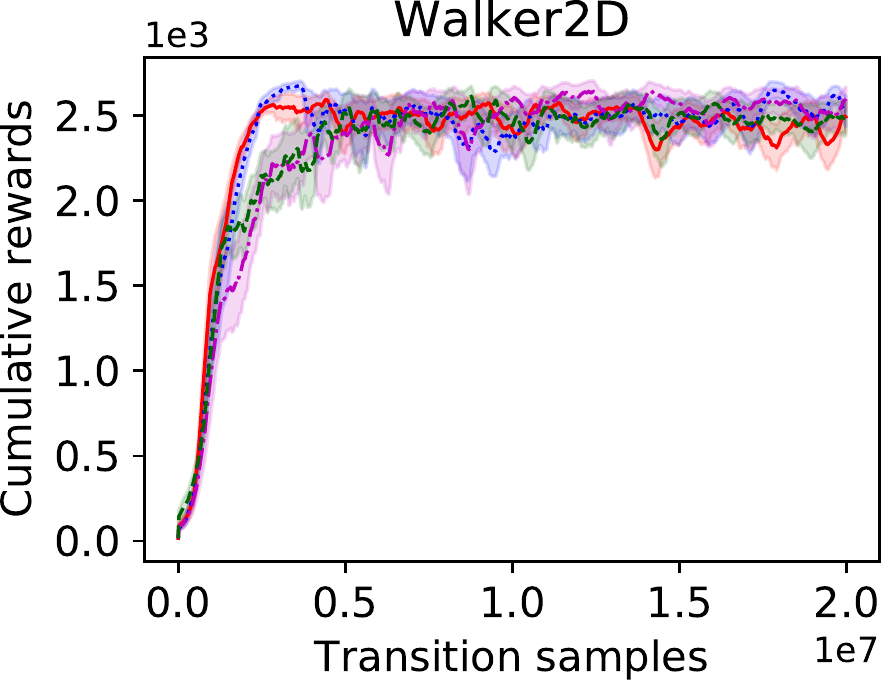} \hfill
		\includegraphics[width=0.24\linewidth]{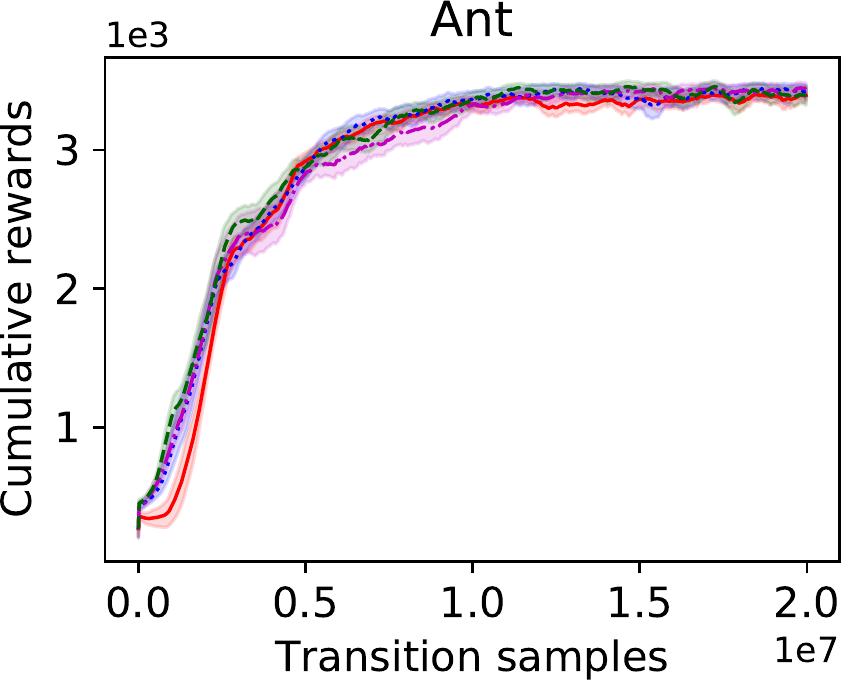} \hfill
		\subcaption{Noise rate $\delta=0.1$}
	\end{subfigure}
	
	\begin{subfigure}[b]{0.99\linewidth}
		\includegraphics[width=0.24\linewidth]{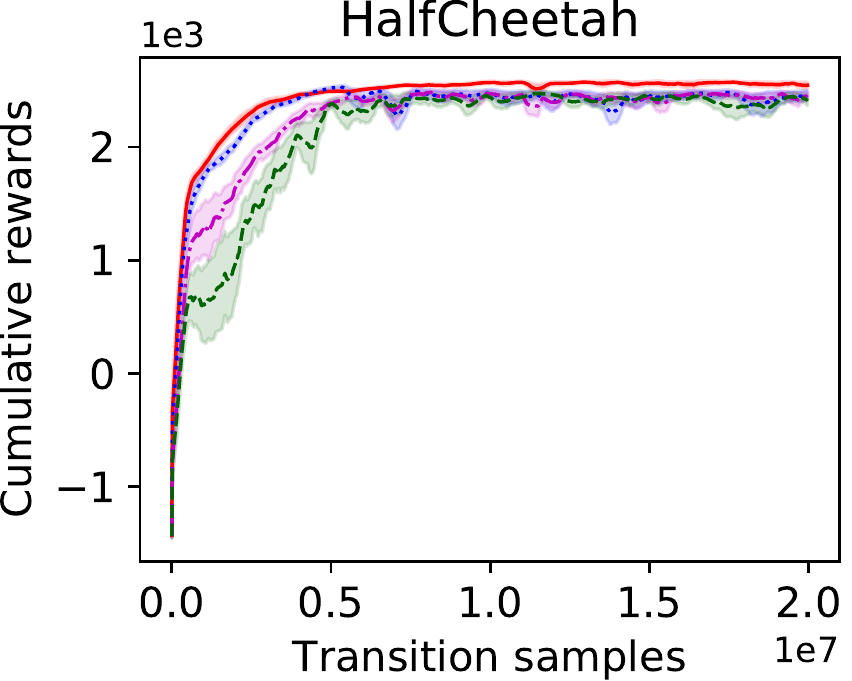} \hfill
		\includegraphics[width=0.24\linewidth]{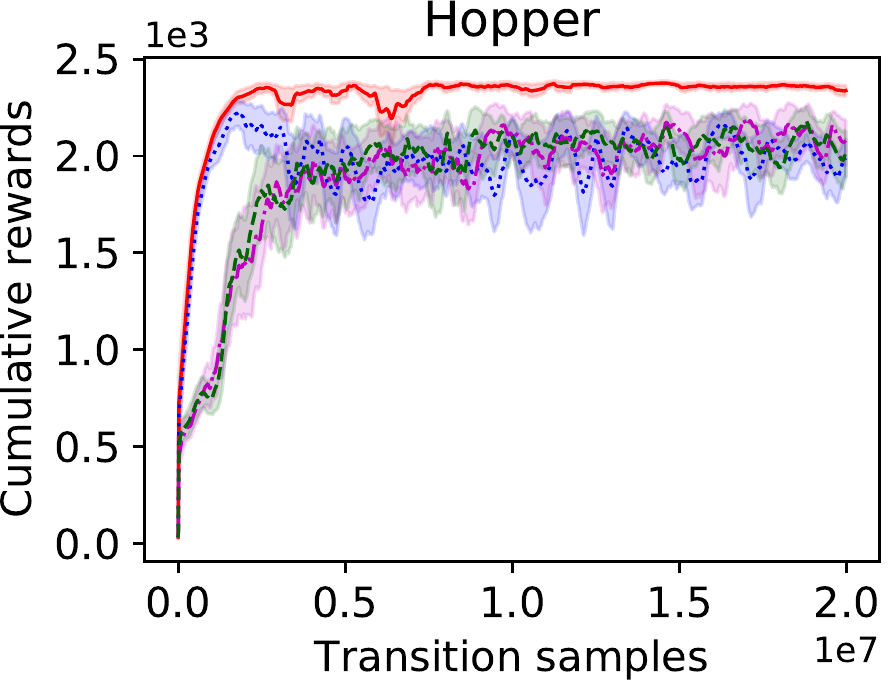} \hfill
		\includegraphics[width=0.24\linewidth]{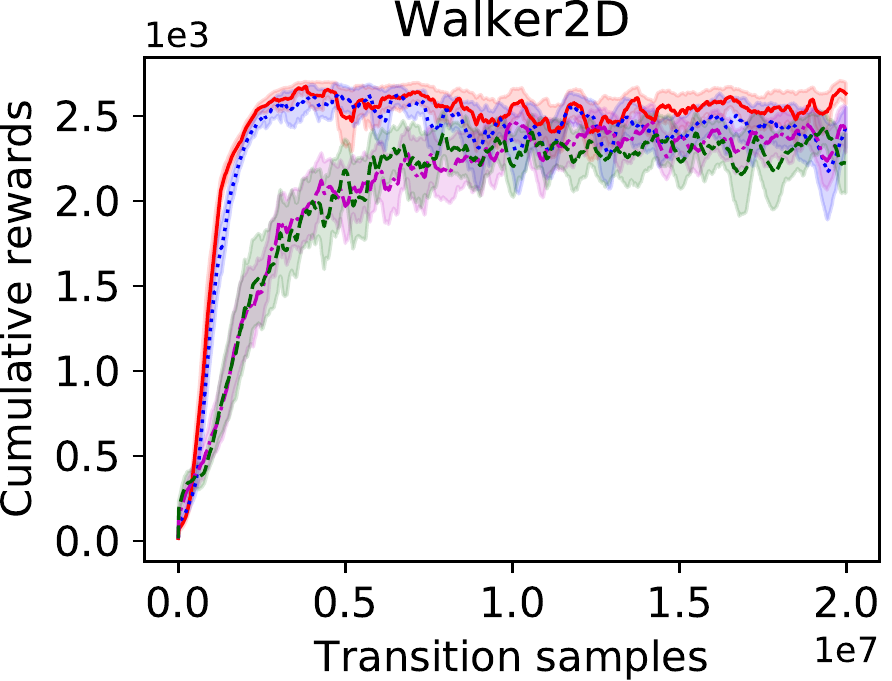} \hfill
		\includegraphics[width=0.24\linewidth]{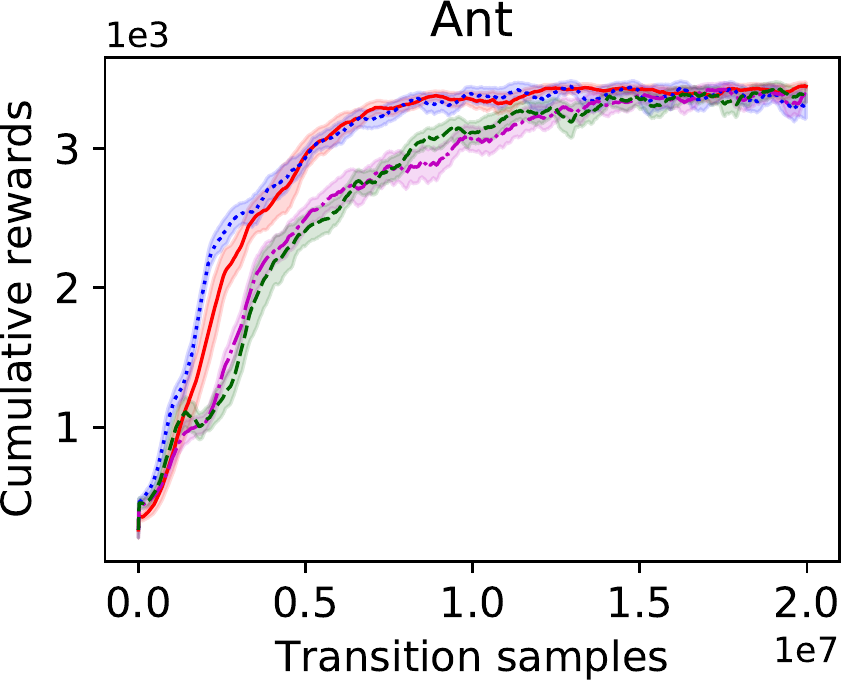} \hfill
		\subcaption{Noise rate $\delta=0.2$}
	\end{subfigure}
	
	\begin{subfigure}[b]{0.99\linewidth}
		\includegraphics[width=0.24\linewidth]{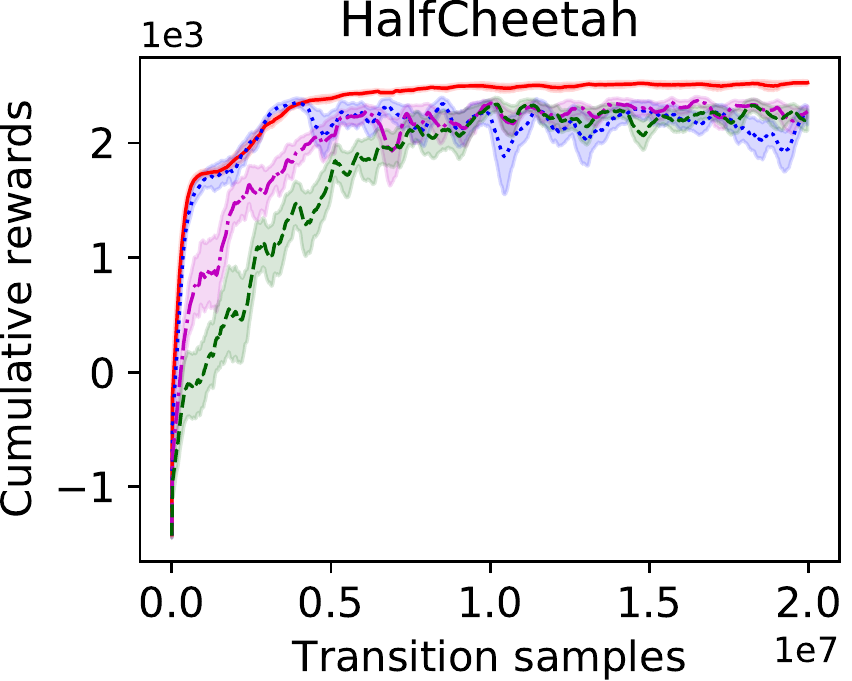} \hfill
		\includegraphics[width=0.24\linewidth]{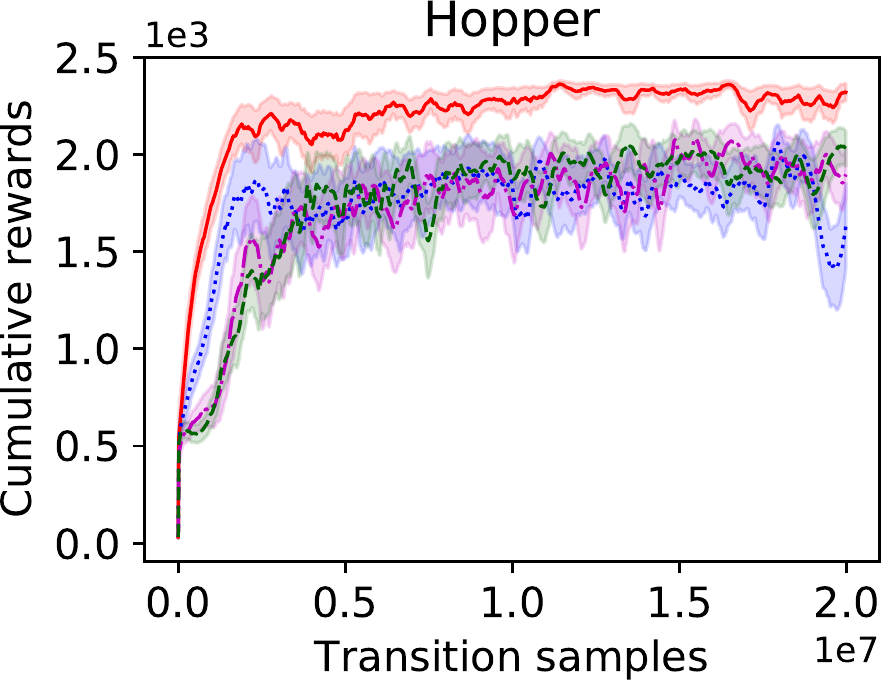} \hfill
		\includegraphics[width=0.24\linewidth]{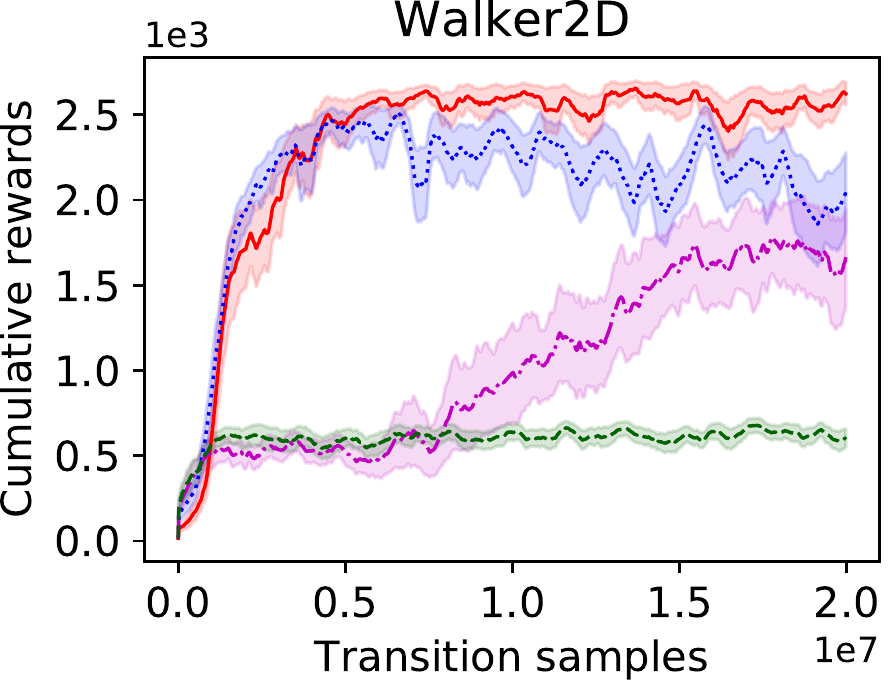} \hfill
		\includegraphics[width=0.24\linewidth]{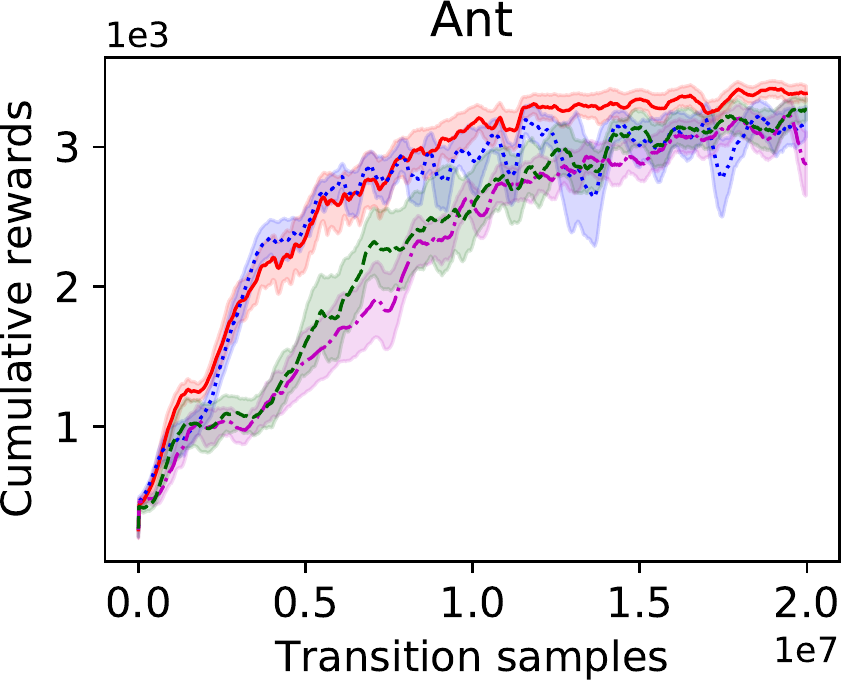} \hfill
		\subcaption{Noise rate $\delta=0.3$}
	\end{subfigure}
	
	\begin{subfigure}[b]{0.99\linewidth}
	\includegraphics[width=0.24\linewidth]{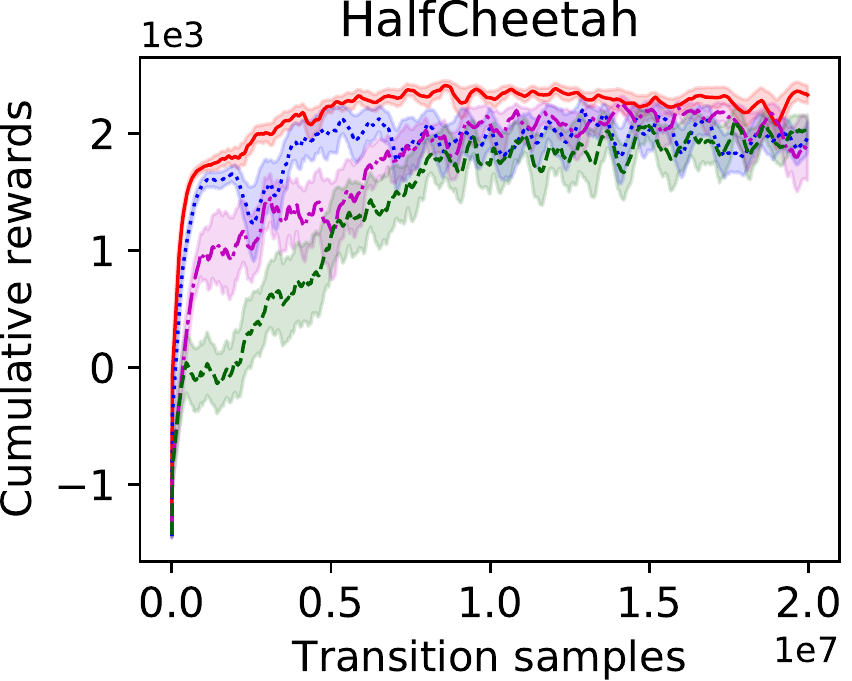} \hfill
	\includegraphics[width=0.24\linewidth]{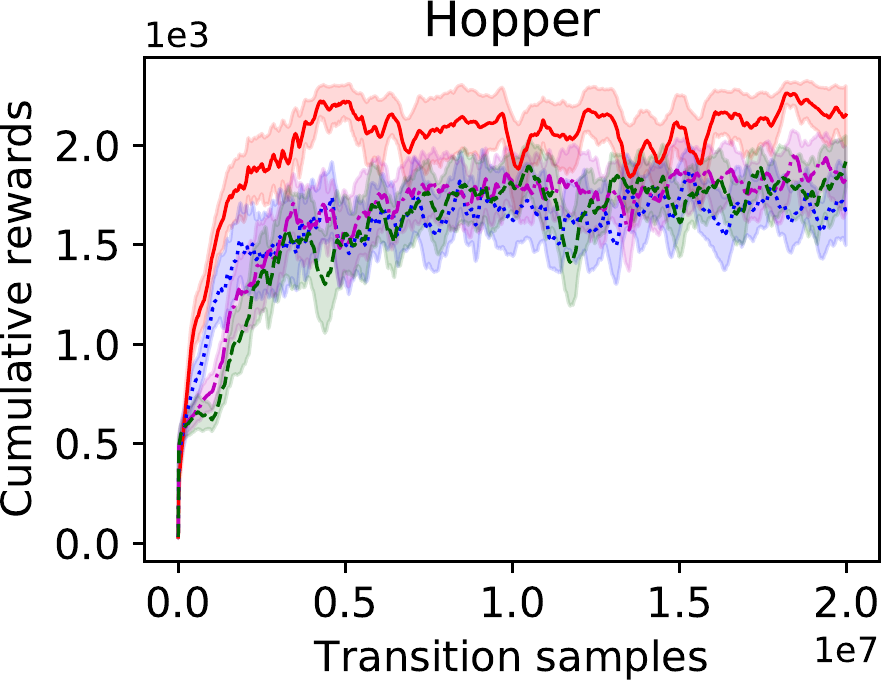} \hfill
	\includegraphics[width=0.24\linewidth]{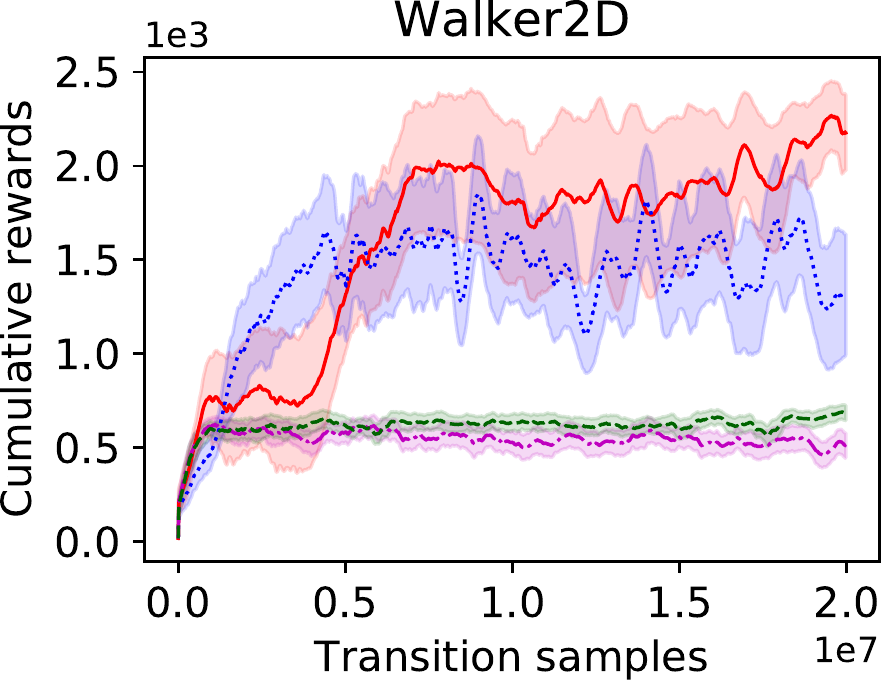} \hfill
	\includegraphics[width=0.24\linewidth]{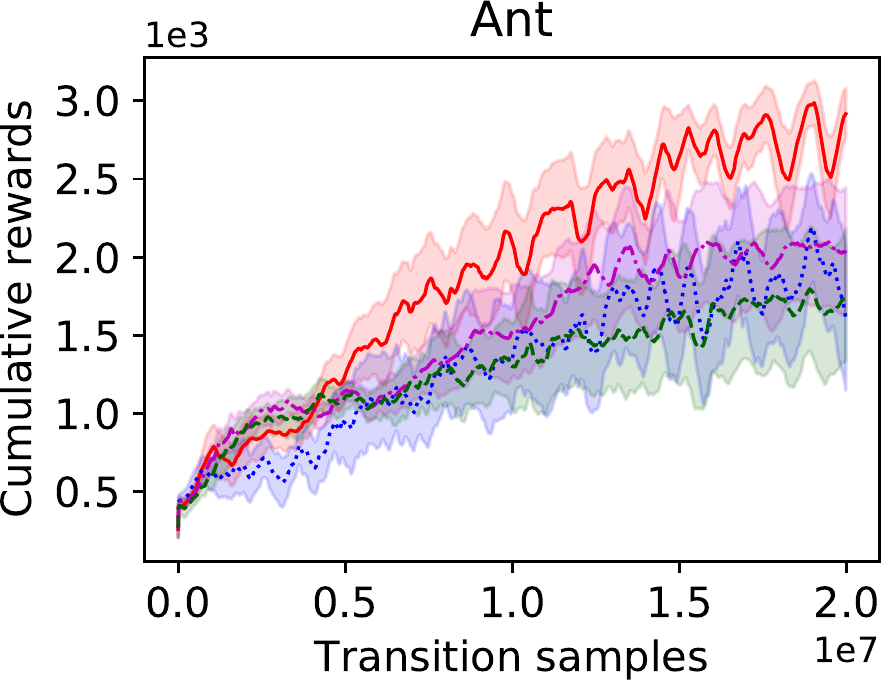} \hfill
	\subcaption{Noise rate $\delta=0.4$}
	\end{subfigure}

	\caption{Performance against the number of transition samples in the ablation study. RIL-Co with the AP loss is more robust than its variants that use non-symmetric losses and naive pseudo-labeling.}	
	\label{figure:exp_app_ablation_curve}
\end{figure}

\end{document}